\theoremstyle{plain}
\newtheorem{theorem}{Theorem}[section]
\newtheorem{proposition}[theorem]{Proposition}
\theoremstyle{definition}
\newtheorem{definition}[theorem]{Definition}
\theoremstyle{remark}
\newtheorem{remark}[theorem]{Remark}
\newcommand{\kbf}{\mathbf{k}}
\newcommand{\ebf}{\mathbf{e}}
\newcommand{\minuscule}{\fontsize{5}{6}\selectfont}
\title{The Selective $G$-Bispectrum and its Inversion: Applications to $G$-Invariant Networks}
\author{%
  Simon~Mataigne\thanks{Simon Mataigne is a Research Fellow of the Fonds de la Recherche Scientifique - FNRS.
  } \\
  ICTEAM, UCLouvain\\
  Louvain-la-Neuve, Belgium \\
  \texttt{simon.mataigne@uclouvain.be} \\
   \And
   Johan Mathe\\
  Atmo\\
  San Francisco, CA\\
   \texttt{johan@atmo.ai} \\
  \AND
   Sophia Sanborn \\
  Science \\
  San Francisco, CA\\
\texttt{sophiasanborn@gmail.com} \\
  \And
  Christopher Hillar\\
  Algebraic\\
  San Francisco, CA\\
   \texttt{hillarmath@gmail.com} \\
   \And
   Nina Miolane\thanks{Nina Miolane acknowledges funding from the NSF grant 2313150.} \\
   UC Santa Barbara\\
  Santa Barbara, CA\\
   \texttt{ninamiolane@ucsb.edu} \\
}
\begin{document}

\maketitle

\begin{abstract}


An important problem in signal processing and deep learning is to achieve \textit{invariance} to nuisance factors not relevant for the task. Since many of these factors are describable as the action of a group $G$ (e.g., rotations, translations, scalings), we want methods to be $G$-invariant. The $G$-Bispectrum extracts every characteristic of a given signal up to group action: for example, the shape of an object in an image, but not its orientation. Consequently, the $G$-Bispectrum has been incorporated into deep neural network architectures as a computational primitive for $G$-invariance\textemdash akin to a pooling mechanism, but with greater selectivity and robustness. However, the computational cost of the $G$-Bispectrum ($\mathcal{O}(|G|^2)$, with $|G|$ the size of the group) has limited its widespread adoption. Here, we show that the $G$-Bispectrum computation contains redundancies that can be reduced into a \textit{selective $G$-Bispectrum} with $\mathcal{O}(|G|)$ complexity. We prove desirable mathematical properties of the selective $G$-Bispectrum and demonstrate how its integration in neural networks enhances accuracy and robustness compared to traditional approaches, while enjoying considerable speeds-up compared to the full $G$-Bispectrum.

\end{abstract}

\section{Introduction}\label{sec:introduction}
The visual world is rich with symmetries. For example, the identity of an object is invariant to its position in the visual field; vision has \textit{translational symmetry}. Group theory is the mathematics used to describe transformations, their actions on objects, and the object's symmetry. As such, group theory has penetrated the fields of signal processing and deep learning alike. For example, the Fourier transform, pillar of signal processing, has been adapted to the $G$-Fourier transform, with its spectrum decomposing a signal defined over a group into several frequencies. More recently, researchers have become interested in the properties of higher-order spectra such as the \textit{Bispectrum}, and its generalization to signals over groups via the $G$-Bispectrum.

\paragraph{$G$-Bispectrum} The $G$-Bispectrum is the Fourier transform of the $G$-Triple Correlation ($G$-TC). Historically, higher-order spectra found initial applications in the context of classical signal processing as generalizations of the two-point autocorrelation \citep{tukey1953spectral, brillinger1991some, nikias1993signal}. The work of \citet{kakarala_thesis} illuminated the relevance of the $G$-Bispectrum for invariant theory, as it is the lowest-degree spectral invariant that is complete. Since then, it has appeared in diverse settings such as vision science \citep{zetzsche2001nonlinear}, machine learning \citep{kondor2007novel, kondor08}, and 3D modeling \citep{kakarala2012Bispectrum}. 


\paragraph{Limitations of the $G$-Bispectrum for Deep Learning} The computational complexity of the $G$-Bispectrum has severely limited the reach of its applications. The most salient example of this limitation is in machine learning and deep learning. Convolutional Neural Network (CNN) \citep{lecun95convolutional,LeNet5} reflect and exploit the translational symmetry of the visual world. Group-Equivariant CNNs ($G$-CNNs) \citep{cohenc16, pmlr-v80-kondor18a} do just this, with more general group-equivariant convolutions to exploit 
symmetries like rotational symmetries. In both cases, one typically wants to preserve transformations throughout the layers of a network (i.e., to be group-\textit{equivariant}), and remove them only at the end when ``canonicalizing'' an image for classification (i.e., to be group-\textit{invariant}). While the theory of equivariant layers has been thoroughly developed \citep{cohen2021equivariant, weiler2023EquivariantAndCoordinateIndependentCNNs}, less attention has been paid to the theory of invariant layers \citep{higgins2018definition}. This is where the $G$-Bispectrum enters the picture, and where its computational cost has strongly limited its integration into deep learning.

Commonly, invariance in $G$-CNNs is achieved by simply taking an average or maximum over the transformation group (Average or Max $G$-Pooling, respectively). However, as noted by \citet{sanborn2023general}, this is a highly lossy operation removing information about the structure of the signal. While the \texttt{max} operation is indeed invariant (the \texttt{max} of an image is the same as the \texttt{max} of an image rotated by $90$ degrees), it is \textit{excessively} invariant: one could permute all of the pixels in the image and without changing the maximum, but with none of the same structure (see Figure~\ref{fig:sophias_figure}). To address this, \citet{sanborn2023general} used the $G$-TC as a $G$-invariant layer that is \textit{complete}\textemdash that is, it removes group transformations with no loss of signal structure. This approach achieves demonstrable gains in accuracy and robustness \citep{sanborn2023general}, but it is computationally expensive. 
 
Indeed, the space complexity of the $G$-TC, i.e, its number of coefficients, scales as $\mathcal{O}(|G|^2)$, where $|G|$ is the size of the group. As each coefficient demands for $\mathcal{O}(|G|)$ operations, the computational cost or the time complexity of the $G$-TC is $\mathcal{O}(|G|^3)$. An alternative would be to use the $G$-Bispectrum as the pooling layer. However, both its space and time complexities are $\mathcal{O}(|G|^2)$. By comparison, the Max $G$-pooling layer features a $\mathcal{O}(|G|)$ computational cost and returns a scalar output. This raises the question of whether one can achieve complete invariance and adversarial robustness without sacrificing too much in terms of computational efficiency.

\textbf{Contributions.} In this work, we prove for the first time that we can significantly reduce the computational complexity of the $G$-Bispectrum. This result has important implications for signal processing and deep learning on groups, for which the $G$-Bispectrum is a foundational computational primitive. Our contributions are:
\begin{itemize}[leftmargin=*]
    \item We provide a general algorithm that reduces the computational complexity of the $G$-Bispectrum from $\mathcal{O}(|G|^2)$ to $\mathcal{O}(|G|)$ in space complexity and from $\mathcal{O}(|G|^2)$ to $\mathcal{O}(|G|\log |G|)$ in time complexity if an FFT is available on $G$. 
    We term it the \textit{selective $G$-Bispectrum}. The algorithm can be applied to any finite group.
    \item We prove that the selective $G$-Bispectrum is complete for the most important finite groups used in practice, i.e., all discrete commutative groups, the dihedral groups of any order, the octahedral and full octahedral group. This significantly extends the work of \citep{giannakis1989signal, kakarala_thesis, sadler1992shift}, who first showed this for \textit{some} finite, commutative groups, where it was demonstrated that the $G$-Bispectrum can be computed with only $|G|$ space complexity.
    \item We use the selective $G$-Bispectrum to propose a new \textit{$G$-invariant} layer that strikes a balance between robustness and efficiency. In particular, it is more expensive than the Max $G$-pooling, 
    but cheaper than the $G$-TC pooling.
     It is also cheaper than the full $G$-bispectral pooling of $\mathcal{O}(|G|^2)$ time and $\mathcal{O}(|G|^2)$ space complexity. The selective $G$-Bispectrum is more robust than the max $G$-pooling, and almost as robust as the $G$-TC.
    \item We run extensive experiments on the MNIST \citep{lecun2010mnist} and EMNIST \citep{cohen2017emnist} 
    datasets to evaluate how each invariant layer (Max $G$-pooling, $G$-TC, selective $G$-Bispectrum) impacts accuracy and speed on classification tasks. We achieve the expected results: Our layer is faster than the $G$-TC and full $G$-Bispectrum and more accurate than Max $G$-pooling.
    \item We present several findings important to the design of invariant layers to guide further advances in the field of geometric deep learning. In particular, we show that the accuracy and speed advantages of the selective $G$-Bispectrum is most striking for $G$-CNNs with low number of convolutional filters. Conversely, increasing the number of filters in the $G$-Convolutions allows the Max $G$-Pooling to catch up on the accuracy.
     This demonstrates that the $G$-bispectral pooling will be particularly interesting for neural networks operating under a smaller parameter budget.
\end{itemize}
We hope that the proposed reduction of the $G$-Bispectrum complexity will further open areas of research in signal processing on groups, that were previously prohibited due to the high complexity of the operation.

\section{Background: $G$-Triple Correlation and $G$-Bispectrum}\label{sec:triple_corr_and_bsp}

The proposed selective $G$-Bispectrum operation is closely related to two other foundational operations on signals defined on groups: the $G$-Triple Correlation and the full $G$-Bispectrum, which we introduce here. The background on group theory, including the definitions of groups, group actions, equivariance and invariance, is presented in Appendix~\ref{app:group_theory}.


\paragraph{The $G$-Triple Correlation} 
Given a real signal defined on a finite group $\Theta:G\mapsto\mathbb{R}$, the $G$-Triple Correlation ($G$-TC)~\citep{kakarala_thesis}  is the lowest order polynomial that is complete, i.e., that conserves all of the information of the signal $\Theta$, up to group action by $G$. 
\begin{definition}\label{def:gtc}
    The \emph{$G$-Triple Correlation} of a real signal $\Theta:G\mapsto\mathbb{R}$ is given by 
    \begin{equation}
        T(\Theta)_{g_1,g_2}:=\sum_{g\in G} \Theta(g)\Theta(g\cdot g_1)\Theta(g\cdot g_2) \text{ for all } g_1, g_2\in G.
    \end{equation}
\end{definition}

The original triple-correlation was introduced for the classical framework of translations of a one-dimensional signal, i.e., where $X =\mathbb{Z}$ and $(G,\ \cdot) = (\mathbb{Z},+)$. The $G$-triple correlation from Definition~\ref{def:gtc} extends the original definition to any finite group $(G,\ \cdot)$. In our setting, the signal $\Theta:G\mapsto\mathbb{R}$ will be obtained after the $G$-convolution of a function $f:X\mapsto\mathbb{R}^c$, 
representing a continuous image with $c$ channels, with a filter $\phi:X\mapsto\mathbb{R}^c$, and the $G$-TC will be applied channel-by-channel. Importantly, the $G$-TC layer has computational complexity $\mathcal{O}(|G|^3)$ and outputs $\mathcal{O}(|G|^2)$ coefficients. 

 
\paragraph{The $G$-Bispectrum} The $G$-TC operation has a Fourier equivalent: the $G$-Bispectrum. Indeed, the definition of the Discrete Fourier Transform (DFT) can be extended to any finite group~(see, e.g., \citep{Diaconis1990EfficientCO}), as recalled below.
\begin{definition}
    Given a set of unitary representatives (Def.~\ref{def:representation}) of the equivalence classes of irreps $\rho_i:G\mapsto\mathrm{GL}(V_i)$ (Def.~\ref{def:equivalence}),
    the \emph{$G$-Fourier Transform} on a finite group $G$ of a signal $\Theta:G\mapsto\mathbb{R}$  is defined as 
    \begin{equation}
        \mathcal{F}(\Theta)_{\rho_i} :=\sum_{g\in G}\Theta(g)\rho_i(g)^\dagger,
    \end{equation}
     where $\rho_i(g)^\dagger$ refers to the conjugate transpose of the matrix $\rho_i(g)$  (or simply transpose if $\rho_i$ is real-valued).
\end{definition}

The $G$-Bispectrum $\beta(\Theta)$ is defined as $\mathcal{F}(T(\Theta))$, with $\mathcal{F}$ evaluated over the group $G\times G$.
\citet{kakarala_thesis} proposed a closed-form expression for the $G$-Bispectrum $\beta(\Theta)$ directly in terms of $\mathcal{F}(\Theta)$. We recall it in Theorem~\ref{def:Bispectrum}.

\begin{theorem}{\citep{kakarala_finite_groups}}\label{def:Bispectrum}
    The $G$-Bispectrum  of a signal $\Theta:G\mapsto\mathbb{R}$, $\beta(\Theta)$, is given by:
    \begin{equation*}
        \beta(\Theta)_{\rho_1,\rho_2} =\left[\mathcal{F}(\Theta)_{\rho_1}\otimes\mathcal{F}(\Theta)_{\rho_2}\right]C_{\rho_1,\rho_2}\left[\bigoplus_{\rho\in\rho_1\otimes\rho_2}\mathcal{F}(\Theta)_\rho^\dagger\right]C_{\rho_1,\rho_2}^{\dagger},
    \end{equation*}
    where $C_{\rho_1,\rho_2}$ is a unitary matrix called the Clebsch-Gordan matrix, whose definition is recalled in Appendix~\ref{app:group_theory}.
    For each pair $\rho_1, \rho_2$, the matrix $\beta(\Theta)_{\rho_1, \rho_2}$ is called a $G$-bispectral coefficient. 
\end{theorem}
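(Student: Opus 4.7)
The plan is to unpack both sides and match them. Starting from the definition $\beta(\Theta) = \mathcal{F}(T(\Theta))$ evaluated on $G\times G$, whose irreducible representations are exactly the external tensor products $\rho_1\otimes\rho_2$ of irreps of $G$, I would write
\begin{equation*}
\beta(\Theta)_{\rho_1,\rho_2} = \sum_{g_1,g_2\in G} T(\Theta)_{g_1,g_2}\,\bigl(\rho_1(g_1)\otimes\rho_2(g_2)\bigr)^\dagger
\end{equation*}
and then substitute the triple-correlation formula from Definition~\ref{def:gtc}.

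The first key step is a change of variables. After substitution, the sum becomes
\begin{equation*}
\sum_{g}\Theta(g)\sum_{g_1,g_2}\Theta(g\cdot g_1)\Theta(g\cdot g_2)\bigl(\rho_1(g_1)^\dagger\otimes\rho_2(g_2)^\dagger\bigr).
\end{equation*}
Setting $h_i = g\cdot g_i$ and using unitarity of the $\rho_i$ in the form $\rho_i(g^{-1}h_i)^\dagger = \rho_i(h_i)^\dagger\rho_i(g)$, each inner sum factors as $\mathcal{F}(\Theta)_{\rho_i}\rho_i(g)$. Applying the mixed-product identity $(AB)\otimes(CD) = (A\otimes C)(B\otimes D)$ pulls the Fourier factors out on the left, leaving
\begin{equation*}
\beta(\Theta)_{\rho_1,\rho_2} = \bigl[\mathcal{F}(\Theta)_{\rho_1}\otimes\mathcal{F}(\Theta)_{\rho_2}\bigr]\sum_{g\in G}\Theta(g)\bigl(\rho_1(g)\otimes\rho_2(g)\bigr).
\end{equation*}

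The second key step is to invoke the Clebsch–Gordan decomposition: the unitary matrix $C_{\rho_1,\rho_2}$ conjugates the inner tensor product into a direct sum of irreps,
\begin{equation*}
\rho_1(g)\otimes\rho_2(g) = C_{\rho_1,\rho_2}\Bigl[\bigoplus_{\rho\in\rho_1\otimes\rho_2}\rho(g)\Bigr]C_{\rho_1,\rho_2}^{\dagger}.
\end{equation*}
Interchanging the sum over $g$ with the direct sum leaves $\sum_{g}\Theta(g)\rho(g)$ in each block. Finally, since $\Theta$ is real-valued, $\sum_g\Theta(g)\rho(g) = \bigl(\sum_g\Theta(g)\rho(g)^\dagger\bigr)^\dagger = \mathcal{F}(\Theta)_\rho^\dagger$, which assembles into the claimed expression.

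The routine part is the index manipulation; the only step that genuinely uses representation theory is the Clebsch–Gordan reduction, which I would invoke as a standard fact rather than prove. The most delicate point to keep straight is bookkeeping of the $\dagger$ conventions: the Fourier transform is defined with $\rho(g)^\dagger$, so one must track when the adjoint appears (in the definition of $\mathcal{F}$), disappears (after using unitarity to move $\rho(g^{-1})$ to $\rho(g)$), and reappears (via the real-valuedness of $\Theta$ when identifying $\sum_g\Theta(g)\rho(g)$ with $\mathcal{F}(\Theta)_\rho^\dagger$). Getting those three conjugations consistent is the main source of potential error, but no real obstruction.
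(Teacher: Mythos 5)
Your derivation is correct, and it is the standard proof of this identity; note that the paper itself does not prove this statement but imports it from \citep{kakarala_finite_groups}, so there is no in-paper argument to compare against. Each step checks out: the change of variables $h_i=g\cdot g_i$ combined with unitarity correctly yields $\sum_{g_i}\Theta(g\cdot g_i)\rho_i(g_i)^\dagger=\mathcal{F}(\Theta)_{\rho_i}\rho_i(g)$, the mixed-product identity legitimately extracts $\mathcal{F}(\Theta)_{\rho_1}\otimes\mathcal{F}(\Theta)_{\rho_2}$ on the left, the Clebsch--Gordan conjugation commutes with the remaining sum over $g$, and the real-valuedness of $\Theta$ correctly identifies $\sum_g\Theta(g)\rho(g)$ with $\mathcal{F}(\Theta)_\rho^\dagger$. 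The one fact you use silently\textemdash that the irreps of $G\times G$ are exactly the external tensor products of irreps of $G$\textemdash is standard for finite groups over $\mathbb{C}$ and fine to invoke.
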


For commutative groups, Theorem~\ref{def:Bispectrum} simplifies to a more compact expression, recalled in Theorem~\ref{thm:commutative_Bispectrum}. However, both the space and time complexity of the $G$-Bispectrum remain $O(|G|^2)$.

To the authors' best knowledge, there is no generic analytical formula for computing $C_{\rho_1,\rho_2}$ for an arbitrary group $G$. However, there exist formulas for specific classes of groups (see Appendix~\ref{app:clebsch_gordan}). Additionally, there exist packages for computing these for many groups using packages such as \texttt{escnn} \citep{cesa2022a}. 

\paragraph{Complete $G$-Invariants} The $G$-TC and the $G$-Bispectrum are desirable computational primitives for signal processing and deep learning because they are \textit{complete $G$-invariants} (for generic data $\Theta, \widetilde{\Theta}$). Indeed, this completeness property make them very interesting for building invariant layers in $G$-CNNs, as they are selectively invariant. We define \textit{complete $G$-invariance} next.
\begin{theorem}{\citep[Thm.3.2] {Kakarala2009CompletenessOB}}
    \label{thm:bsp_invariance}
    The $G$-TC and the $G$-Bispectrum are complete $G$-invariants, i.e., for $\Theta, \widetilde{\Theta}:G\mapsto\mathbb{R}$ with $\mathcal{F}(\Theta)_\rho$ nonsingular for all irreps $\rho$, $T(\Theta)=T(\widetilde{\Theta})$, respectively  $\beta(\Theta) = \beta(\widetilde{\Theta})$, if and only if there exists $h\in G$ such that $\Theta(g) = \alpha(h, \widetilde{\Theta}(g))$ for all $g\in G$.
\end{theorem}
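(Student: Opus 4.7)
I would prove the two directions separately.

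For the easy direction, suppose $\Theta(g)=\widetilde\Theta(h^{-1}g)$ for some $h\in G$ (interpreting $\alpha$ as translation of the argument). The change of variable $g' = h^{-1}g$ in
\[
T(\Theta)_{g_1,g_2} = \sum_{g\in G}\Theta(g)\Theta(gg_1)\Theta(gg_2)
\]
immediately yields $T(\Theta)=T(\widetilde\Theta)$, and since $\beta = \mathcal{F}\circ T$ it follows that $\beta(\Theta)=\beta(\widetilde\Theta)$.

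For the hard direction, since $\mathcal{F}$ is invertible the hypotheses $T(\Theta)=T(\widetilde\Theta)$ and $\beta(\Theta)=\beta(\widetilde\Theta)$ are equivalent; I work with the bispectral one. The nonsingularity hypothesis lets me define $U_\rho := \mathcal{F}(\widetilde\Theta)_\rho\,\mathcal{F}(\Theta)_\rho^{-1}$ for every irrep $\rho$, and the goal is to show that $U_\rho = \rho(h)$ for a single $h\in G$. First, comparing the ``power-spectrum'' bispectral coefficients (those obtained by pairing $\rho_1$ with its dual) forces $U_\rho U_\rho^\dagger = I$, so each $U_\rho$ is unitary. Next, substituting $\mathcal{F}(\widetilde\Theta)_\rho = U_\rho\,\mathcal{F}(\Theta)_\rho$ into the closed form of Theorem~\ref{def:Bispectrum} and cancelling the nonsingular outer factor $\mathcal{F}(\Theta)_{\rho_1}\otimes \mathcal{F}(\Theta)_{\rho_2}$, the identity $\beta(\Theta)_{\rho_1,\rho_2}=\beta(\widetilde\Theta)_{\rho_1,\rho_2}$ collapses to
\[
(U_{\rho_1}\otimes U_{\rho_2})\,C_{\rho_1,\rho_2} \;=\; C_{\rho_1,\rho_2}\bigoplus_{\rho\in\rho_1\otimes\rho_2} U_\rho,
\]
which is precisely the Clebsch--Gordan compatibility satisfied by every bona fide representation evaluated pointwise at a single group element.

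The remaining step is a finite-group version of Tannaka--Krein reconstruction: a family of unitaries $\{U_\rho\}$ indexed by the irreps of $G$ and compatible with tensor products through Clebsch--Gordan must be of the form $U_\rho = \rho(h)$ for some $h\in G$. Concretely, one evaluates $\{U_\rho\}$ on the left-regular representation, decomposes via Peter--Weyl, and reads off $h$ as the unique support of the transformed delta function. Once $h$ is in hand, $\mathcal{F}(\widetilde\Theta)_\rho = \rho(h)\,\mathcal{F}(\Theta)_\rho$ for all $\rho$, and inverting the Fourier transform yields $\widetilde\Theta(g)=\Theta(h^{-1}g)$, the claimed group-translation relation. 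Everything preceding this is linear algebra enabled by the nonsingularity hypothesis; the extraction of a single $h$ compatible with \emph{all} irreps simultaneously is the genuinely representation-theoretic content and the main obstacle of the proof.
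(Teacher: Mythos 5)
The paper does not actually prove this theorem; it imports it verbatim from Kakarala's work, so there is no in-paper argument to compare against. Your sketch is a reconstruction of Kakarala's proof and has the correct architecture: easy direction by change of variables; hard direction by (i) defining a transfer matrix between the two Fourier transforms, (ii) establishing its unitarity, (iii) deriving the Clebsch--Gordan compatibility, and (iv) a finite Tannaka--Krein step. You also correctly identify (iv) as the real content: for a finite group the compatibility says $\rho\mapsto U_\rho$ extends to a character of the pointwise-multiplication algebra $\mathbb{C}^G$, whose characters are exactly the $|G|$ point evaluations, so a single $h\in G$ is produced for all irreps simultaneously.

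There is, however, a concrete slip in steps (i)--(iii): you attach the transfer matrix on the wrong side. With the paper's convention $\mathcal{F}(\Theta)_\rho=\sum_g\Theta(g)\rho(g)^\dagger$, a translate satisfies $\mathcal{F}(\alpha(h,\Theta))_\rho=\mathcal{F}(\Theta)_\rho\,\rho(h)^\dagger$, i.e.\ the group element enters by \emph{right} multiplication, so the transfer matrix should be $V_\rho:=\mathcal{F}(\Theta)_\rho^{-1}\mathcal{F}(\widetilde{\Theta})_\rho$. With your left-sided $U_\rho=\mathcal{F}(\widetilde{\Theta})_\rho\mathcal{F}(\Theta)_\rho^{-1}$ two steps fail as stated. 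First, the power-spectrum equality gives only $U_\rho P U_\rho^\dagger=P$ with $P=\mathcal{F}_\rho\mathcal{F}_\rho^\dagger\succ0$, whose general solution is $U_\rho=\mathcal{F}_\rho W\mathcal{F}_\rho^{-1}$ with $W$ unitary; such a $U_\rho$ is similar to a unitary but need not be unitary. Second, in the bispectrum identity the factor $U_{\rho_1}\otimes U_{\rho_2}$ sits to the \emph{left} of $\mathcal{F}_{\rho_1}\otimes\mathcal{F}_{\rho_2}$ on one side of the equation only, so ``cancelling the nonsingular outer factor'' is not a legitimate move---you would have to commute a non-normal matrix past it, and what you actually obtain is a similarity relation, not the Clebsch--Gordan identity. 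With the right-sided $V_\rho$ everything factors cleanly:
\begin{equation*}
\beta(\widetilde{\Theta})_{\rho_1,\rho_2}=\left[\mathcal{F}_{\rho_1}\otimes\mathcal{F}_{\rho_2}\right]\left(V_{\rho_1}\otimes V_{\rho_2}\right)C_{\rho_1,\rho_2}\Big[\bigoplus_\rho V_\rho^\dagger\Big]\Big[\bigoplus_\rho\mathcal{F}_\rho^\dagger\Big]C_{\rho_1,\rho_2}^\dagger,
\end{equation*}
so equating with $\beta(\Theta)_{\rho_1,\rho_2}$ and cancelling the invertible outer factors yields $(V_{\rho_1}\otimes V_{\rho_2})\,C_{\rho_1,\rho_2}\,[\bigoplus_\rho V_\rho^\dagger]=C_{\rho_1,\rho_2}$; the special case $\rho_1=\rho_0$ trivial then gives $V_\rho V_\rho^\dagger=I$ (your unitarity claim, now valid), and substituting back gives exactly the compatibility $(V_{\rho_1}\otimes V_{\rho_2})C_{\rho_1,\rho_2}=C_{\rho_1,\rho_2}\bigoplus_\rho V_\rho$. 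So the proof is repairable by a one-line change of convention, but as written your steps (ii) and (iii) do not go through.
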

\paragraph{Application: $G$-invariant layers}\label{sec:bispectralgcnn}
\begin{figure*}
    \centering
    \includegraphics[width = 14cm]{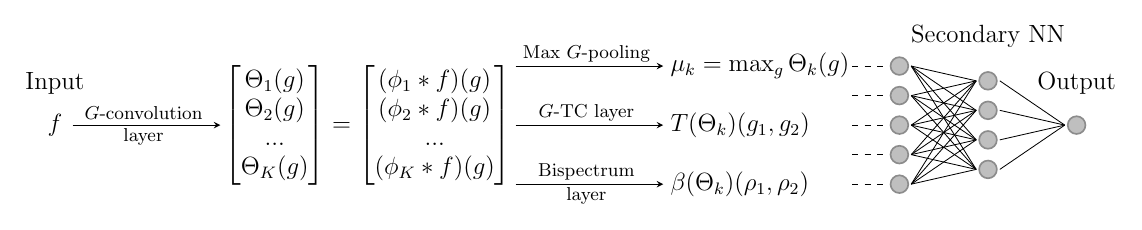}
    \caption{Illustration of the different proposed $G$-CNN modules \citep{cohenc16, sanborn2023general}. The input $f$ is first processed through the $G$-convolutional layer composed of $K$ filters $\{\phi_k\}_{k=1}^K$. Then, an invariant layer is chosen (Max $G$-pooling, $G$-TC, or the selective/full $G$-Bispectrum layer). Finally, the ``pooled'' output is fed to a neural network designed for the machine learning task at hand. \vspace{-0.5cm}}
    \label{fig:GCNN}
\end{figure*}
The $G$-CNN architecture, first proposed in \citep{cohenc16}, is illustrated in Figure~\ref{fig:GCNN}. The input signal $f : X\mapsto\mathbb{R}$, typically an image, is processed through a $G$-Convolution layer using filters $\{\phi_k\}_{k=1}^K$. The output is feature maps $\{\Theta_k\}_{k=1}^K$ that form a set of $K$ real-valued signals with domain $G$. This $G$-Convolution layer is traditionally followed by a $G$-invariant layer. The most common is the Max $G$-Pooling layer. More recent works have proposed two alternatives based on the $G$-TC and the full $G$-Bispectrum: the $G$-TC Pooling~\citep{sanborn2023general} and the (full) $G$-Bispectrum~\citep{sanborn2023bispectral} respectively, where the latter requires the computations of the Fourier transforms of the feature maps, preferably computed using a Fast Fourier Transform (FFT) algorithm on $G$ \citep{Diaconis1990EfficientCO}. When testing the impact of the choice of $G$-invariant layer, the output of the invariant layer is typically fed to a Secondary Neural Network (NN) to perform the desired task, e.g., image classification. The Secondary NN often takes the form of a Multi-Layer Perceptron (MLP). 

Experimental results have demonstrated the superior accuracy and adversarial robustness of the $G$-CNN equipped with a $G$-TC and $G$-Bispectrum invariant layer~\citep{sanborn2023bispectral,sanborn2023general}. However, both methods inherit the high space and time complexity of their respective operations. 
This raises the question of whether we can reduce this computational complexity.

\section{Method: The Selective $G$-Bispectrum and its Inversion}\label{sec:bspinversion}

\paragraph{The Selective $G$-Bispectrum} We introduce a novel tool for signal processing on groups: the selective $G$-Bispectrum. 
A selective $G$-Bispectrum $\beta_{sel}$ is any $\mathcal{O}(|G|)$  subset of all coefficients of the $G$-Bispectrum $\beta$ (Definition~\ref{def:Bispectrum}), only conserving well-chosen pairs of irreps $(\rho_1, \rho_2)$. Which pairs of irreps to select depends on the group of interest.  This is possible due to redundancies and symmetries in the full object. Below, we provide an algorithmic procedure to compute the selective $G$-Bispectrum for any finite group $G$ that features at most $|\text{Irreps}|(\leq |G|)$ coefficients. The procedure is summarized in Algorithm \ref{alg:selective_bsp}. We have the following proposition.

\begin{algorithm}
    \caption{Selective $G$-Bispectrum on any finite group $G$}\label{alg:selective_bsp}
    \small
    \begin{algorithmic}[1]
        \STATE \textbf{Input}: Signal $\Theta: G \mapsto \mathbb{R}$ with $n = |G|$. Empty list of coefficients $L_\beta$. Empty list of irreps $L_\rho$. Kronecker Table of $G$.
        \STATE Add $\beta(\Theta)_{\rho_0,\rho_0}$ to $L_\beta$ where $\rho_0$ is the trivial irreps ($\rho_0(g)=1$ for all $g\in G$).
        \STATE Add $\rho_0$ to $L_\rho$.
        \STATE Choose $\tilde \rho_1$ such that $\tilde \rho_1 \otimes\tilde  \rho_1 = C_{\tilde \rho_1,\tilde \rho_1}\left(\bigoplus_{\rho \in \mathcal{R}} \rho\right) C_{\tilde \rho_1,\tilde \rho_1}^\dagger$ generates at least one irreps $\rho$ not yet in $L_\rho$.
        \STATE Add $\beta(\Theta)_{\rho_0,\tilde \rho_1}$ and $\beta(\Theta)_{\tilde \rho_1,\tilde \rho_1}$ to $L_\beta$, add $\tilde \rho_1$ to $L_\rho$.
        \STATE Add every $\rho$ that appears in $\tilde \rho_1 \otimes\tilde  \rho_1$ to $L_\rho$.
        \WHILE{$L_\rho$ keeps changing:}
            \STATE Find $\rho', \rho''$ in $L_\rho$ such that $\rho' \otimes \rho'' $ generates at least one irreps not already in $L_\rho$.
            \STATE Add $\beta(\Theta)_{\rho',\rho''}$ to $L_\beta$.
            \STATE Add $\rho$ that appears in $\rho' \otimes \rho''$ to $L_\rho$.
        \ENDWHILE
        \STATE \textbf{Return} $\beta_{sel}(\Theta):=L_\beta$: the selected $G$-bispectral coefficients.
    \end{algorithmic}
\end{algorithm}

\begin{proposition}
    The selective $G$-Bispectrum $\beta_{sel}$ from Algorithm~\ref{alg:selective_bsp} has at most $|G|$ coefficients.
\end{proposition}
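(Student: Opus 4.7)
The plan is a simple bookkeeping argument pairing each coefficient added to $L_\beta$ against a new irrep appearing in $L_\rho$. First I would identify the loop invariant: every pass through the body of the while loop (lines 7--11) adds exactly one coefficient to $L_\beta$ and at least one new irrep to $L_\rho$, as enforced by the guard on line 8. The initialization (lines 2--6) deposits three coefficients in $L_\beta$ together with the irreps $\rho_0$ and $\tilde\rho_1$, and line 6 may insert further irreps from the decomposition of $\tilde\rho_1 \otimes \tilde\rho_1$ at no extra coefficient cost.

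Next I would set $r := |\mathrm{Irreps}(G)|$, let $k$ denote the number of distinct irreps in $\tilde\rho_1 \otimes \tilde\rho_1$ other than $\rho_0$ and $\tilde\rho_1$, and let $t$ denote the number of while-loop iterations executed. The counts are $|L_\beta| = 3 + t$ and $|L_\rho| \geq 2 + k + t$. Combining with $|L_\rho| \leq r$ yields $|L_\beta| \leq r + 1 - k$. The classical identity $\sum_\rho d_\rho^2 = |G|$ then gives $r \leq |G|$, which immediately handles the case $k \geq 1$ via $|L_\beta| \leq r \leq |G|$.

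The delicate case is $k = 0$: here $\tilde\rho_1 \otimes \tilde\rho_1$ decomposes entirely into copies of $\rho_0$ and $\tilde\rho_1$. The guard on line 4 forces $\tilde\rho_1$ itself to appear in this decomposition (otherwise only $\rho_0$ would occur, which is already in $L_\rho$, contradicting the guard). A short dimension check rules out $d_{\tilde\rho_1} = 1$, since a one-dimensional $\tilde\rho_1$ would give $\tilde\rho_1 \otimes \tilde\rho_1 = \tilde\rho_1$ and thus force $\tilde\rho_1 = \rho_0$. Hence $d_{\tilde\rho_1} \geq 2$, and $|G| = \sum_\rho d_\rho^2 \geq d_{\tilde\rho_1}^2 + (r - 1) \geq r + 3$, yielding $|L_\beta| \leq r + 1 \leq |G| - 2$.

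The main obstacle is this edge case $k = 0$, which requires the dimension argument to absorb the ``free'' second coefficient introduced at line 5; everything else is a direct counting argument.
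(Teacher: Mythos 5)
Your proof is correct, and it is in the same spirit as the paper's but substantially more careful. The paper's own argument is a one-liner: by construction $|L_\rho|\leq|\mathrm{Irreps}|$, and $|\mathrm{Irreps}|$ is at most the number of conjugacy classes, hence at most $|G|$. This bounds the number of irreps collected, but leaves implicit the link between $|L_\rho|$ and the quantity the proposition actually concerns, namely $|L_\beta|$ --- a link that is not entirely trivial because the initialization deposits three coefficients against only two guaranteed irreps when the decomposition of $\tilde\rho_1\otimes\tilde\rho_1$ contributes nothing new beyond $\rho_0$ and $\tilde\rho_1$. Your accounting $|L_\beta|=3+t$ versus $|L_\rho|\geq 2+k+t$ makes this explicit, and your treatment of the $k=0$ case (forcing $d_{\tilde\rho_1}\geq 2$ via the line-4 guard and then absorbing the surplus coefficient through $|G|=\sum_\rho d_\rho^2\geq r+3$) closes exactly the gap the paper glosses over. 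The paper's proof buys brevity; yours buys a complete argument that actually bounds $|L_\beta|$ rather than $|L_\rho|$, at the cost of the dimension-counting digression. One minor point of rigor: in the $k=0$ case you should note explicitly that $\tilde\rho_1\neq\rho_0$ (which follows since $\rho_0\otimes\rho_0=\rho_0$ would violate the guard) before invoking the cancellation $\tilde\rho_1\otimes\tilde\rho_1=\tilde\rho_1\Rightarrow\tilde\rho_1=\rho_0$ to reach a contradiction; as written the chain of implications is there but slightly compressed.
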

\begin{proof}
    By construction of Algorithm \ref{alg:selective_bsp}, $|L_\rho|\leq |\text{Irreps}|$. Since $|\text{Irreps}|$ is at most the number of conjugacy classes of $G$ (see, e.g.,  \citet[Corollary~4.3.10]{steinberg2011representation}), we have $|\text{Irreps}|\leq|G|$.
\end{proof}
In Algorithm~\ref{alg:selective_bsp}, we note that the choice of $\rho_1$ is important,  since some $\rho_1$ will not allow the user to recover all of the irreps and therefore not ensure the completeness of the selective. We illustrate the computation of the selective $G$-Bispectrum in Figure~\ref{fig:selective_Bispectrum}, where we choose $\tilde \rho_1 = \rho_6$.
\begin{figure}
    \centering
    \includegraphics[width=1\linewidth]{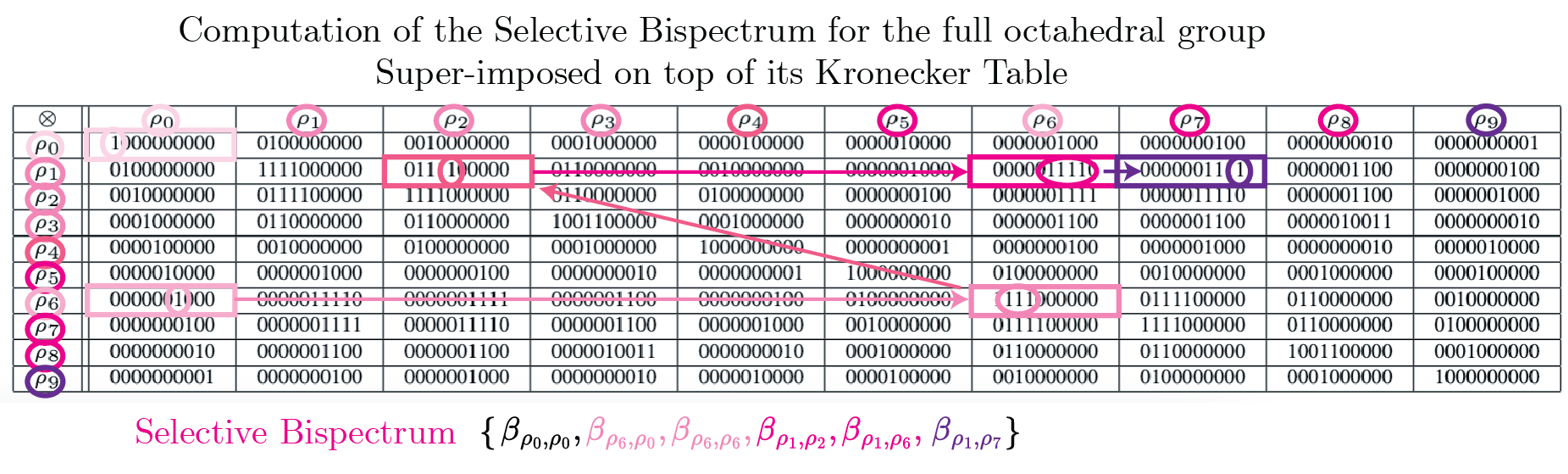}
    \vspace{-0.5cm}
    \caption{Computation of the selective $G$-Bispectrum for the Full Octahedral Group. The gradient of color represents the order in which the $G$-bispectral coefficients are computed. The Kronecker Table represents which irreps emerge from the decomposition into irreps of the tensor product $\rho_i \otimes \rho_j$. We observe that the selective $G$-Bispectrum has only $6$ coefficients, compared to $100$ coefficients for the full $G$-Bispectrum.}
    \label{fig:selective_Bispectrum}
\end{figure}
\paragraph{Inverting the Selective $G$-Bispectrum for completeness} The \emph{inversion} of the selective $G$-Bispectrum $\beta_{sel}(\Theta)$ is reconstructing a signal $\mathcal{F}(\widetilde{\Theta})$ from the $G$-Bispectrum coefficients in the list $L_\beta$ such that $\widetilde{\Theta}=\alpha(g, \Theta)$ for some $g\in G$ (The $G$-Bispectrum is $G$-invariant, hence, $\Theta$ can only be recovered at best up to group action). Once $\mathcal{F}(\widetilde{\Theta})$ is known, $\widetilde{\Theta}$ can be obtained using the Inverse Fourier Transform.
If the selective $G$-Bispectrum can be inverted, then, by definition, it is complete in the sense of Theorem~\ref{thm:bsp_invariance}.

\section{Theory: Completeness of the Selective $G$-Bispectrum}\label{sec:completeness}

Our main theoretical claim is that the selective $G$-Bispectrum can be inverted and is a complete $G$-invariant that drastically reduces the complexity of the $G$-Bispectrum. We prove this claim for many finite groups $G$ of interest in signal processing and deep learning in a sequence of theorems presented in this section.


\paragraph{Known Theorems} Previous authors had looked into the $G$-Bispectrum inversion problem. It is well known that $|G|=n$ coefficients are enough for the cyclic group $(C_n, \cdot) = (\mathbb{Z} / \mathbb{Z}_n, +\mod n)$.

\begin{theorem}{\citep{kakarala_finite_groups}}\label{thm:cyclic_inversion}
For cyclic groups $C_n$, $n\in\mathbb{N}_0$, the $C_n$-Bispectrum can be inverted using $|G|=n$ coefficients if $\mathcal{F}(\Theta)_\rho\neq 0$ for all irreps $\rho$ of $C_n$.
\end{theorem}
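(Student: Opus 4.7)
The plan is to exploit the commutative structure of $C_n$ to obtain a closed-form recursive inversion. Since $C_n$ is abelian, all irreps $\rho_k$ are one-dimensional and indexed by $k \in \{0, 1, \ldots, n-1\}$ with $\rho_k(g) = e^{2\pi i k g/n}$, so $\mathcal{F}(\Theta)_{\rho_k} = \hat\Theta(k) := \sum_{g} \Theta(g) e^{-2\pi i k g/n}$. The commutative bispectrum formula (Theorem~\ref{thm:commutative_Bispectrum}) specializes to $\beta(\Theta)_{k_1, k_2} = \hat\Theta(k_1)\hat\Theta(k_2)\overline{\hat\Theta(k_1 + k_2 \bmod n)}$, and the task reduces to recovering $\hat\Theta$ from a cleverly chosen size-$n$ slice of these.

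I would select the $n$ coefficients $L_\beta = \{\beta(\Theta)_{0,0}\} \cup \{\beta(\Theta)_{1,k} : k = 0, 1, \ldots, n-2\}$ and process them sequentially. First, $\beta(\Theta)_{0,0} = \hat\Theta(0)^3$ is a real cube (since $\Theta$ is real-valued makes $\hat\Theta(0) \in \mathbb{R}$) and yields $\hat\Theta(0)$ unambiguously. Second, using $\hat\Theta(0) \neq 0$ from the hypothesis, the identity $\beta(\Theta)_{1,0} = \hat\Theta(0)|\hat\Theta(1)|^2$ yields $|\hat\Theta(1)|$. Third, fix the phase of $\hat\Theta(1)$ to any chosen value; this is the only free parameter of the reconstruction. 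Fourth, run the recursion
\[
\overline{\hat\Theta(k+1)} = \frac{\beta(\Theta)_{1,k}}{\hat\Theta(1)\hat\Theta(k)}, \qquad k = 1, 2, \ldots, n-2,
\]
which is well-defined since every factor in the denominator is nonzero by hypothesis. Finally, apply the Inverse $G$-Fourier Transform to $\hat\Theta$ to recover $\Theta$.

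To conclude completeness (in the sense of Theorem~\ref{thm:bsp_invariance}), I would show that the free phase choice in $\hat\Theta(1)$ is exactly the translation ambiguity. By induction on the recursion, replacing $\hat\Theta(1)$ by $e^{i\phi}\hat\Theta(1)$ produces $\hat\Theta'(k) = e^{ik\phi}\hat\Theta(k)$ for every $k$. The requirement that the reconstructed signal be real-valued imposes $\hat\Theta'(n-k) = \overline{\hat\Theta'(k)}$, which forces $e^{in\phi} = 1$, so $\phi \in \{2\pi t/n : t = 0, 1, \ldots, n-1\}$; each such $t$ corresponds to the translate $g \mapsto g + t$, so any two real signals sharing the selected coefficients differ by a $C_n$-action. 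The main obstacle is really just this final bookkeeping: tracking that the (unique) real cube root in the first step and the a priori continuous phase in the third step combine to yield \emph{exactly} $n$ reconstructions, matching the $|C_n|$ translates and no spurious signals.
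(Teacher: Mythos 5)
Your proposal is correct and follows essentially the same route as the paper's proof: the same selection of $n$ coefficients $\beta_{\rho_0,\rho_0}$, $\beta_{\rho_0,\rho_1}$ (equivalently $\beta_{\rho_1,\rho_0}$), and $\beta_{\rho_1,\rho_k}$ for $k=1,\dots,n-2$, the same sequential recursion exploiting $\rho_1\otimes\rho_k=\rho_{k+1}$, and the same resolution of the residual phase ambiguity by imposing real-valuedness of the reconstructed signal, which restricts the a priori continuous phase to the $n$ discrete translates. The paper phrases the last step as finding the unique valid $\varphi\in[0,\tfrac{2\pi}{n})$, while you characterize all valid phases as $n$-th roots of unity; these are the same bookkeeping.
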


Similarly for a product of two such groups, we have the following theorem.

\begin{theorem}{\citep{giannakis1989signal}}\label{thm:product_cyclic_inversion}
For a product of cyclic groups $C_n \times C_m$, $n,m\in\mathbb{N}_0$, the $G$-Bispectrum can be inverted using $|G|=n m$ coefficients if $\mathcal{F}(\Theta)_\rho\neq 0$ for all $\rho \in C_n \times C_m$.
\end{theorem}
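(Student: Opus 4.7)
My plan is to mimic and extend the single-generator argument underlying Theorem~\ref{thm:cyclic_inversion}. Since $G := C_n \times C_m$ is commutative, Pontryagin duality identifies its irreps with pairs $(a,b) \in \{0,\dots,n-1\} \times \{0,\dots,m-1\}$ via $\rho_{(a,b)}(g,h) = \omega_n^{ag}\omega_m^{bh}$ with $\omega_k = e^{2\pi i/k}$, and the tensor product satisfies $\rho_{(a_1,b_1)} \otimes \rho_{(a_2,b_2)} = \rho_{(a_1+a_2, b_1+b_2)}$ taken modulo $(n,m)$. Writing $\widehat{\Theta}(a,b) := \mathcal{F}(\Theta)_{\rho_{(a,b)}}$ and invoking the scalar, commutative simplification of Theorem~\ref{def:Bispectrum} (i.e.\ Theorem~\ref{thm:commutative_Bispectrum}), the bispectrum reduces to
\begin{equation*}
\beta(\Theta)_{(a_1,b_1),(a_2,b_2)} = \widehat{\Theta}(a_1,b_1)\, \widehat{\Theta}(a_2,b_2)\, \overline{\widehat{\Theta}(a_1+a_2,\, b_1+b_2)}.
\end{equation*}
So inversion reduces to recovering the $nm$ Fourier coefficients $\widehat{\Theta}(a,b)$ from a carefully chosen family of such triple products, up to an overall group shift.

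Following Algorithm~\ref{alg:selective_bsp} with the two natural generators $\rho_{(1,0)}$ and $\rho_{(0,1)}$, I would proceed in four stages. First, I would extract $\widehat{\Theta}(0,0)$ as the real cube root of $\beta_{(0,0),(0,0)} = \widehat{\Theta}(0,0)^3$. Next, the magnitudes $|\widehat{\Theta}(1,0)|$ and $|\widehat{\Theta}(0,1)|$ come from $\beta_{(0,0),(1,0)}$ and $\beta_{(0,0),(0,1)}$; I would fix their phases arbitrarily. Third, I would propagate along each axis: the coefficients $\beta_{(k,0),(1,0)}$ for $k = 1,\dots,n-2$ recursively determine $\widehat{\Theta}(k+1,0)$, and analogously $\beta_{(0,\ell),(0,1)}$ determines $\widehat{\Theta}(0,\ell+1)$, with the divisions being legitimate under the non-vanishing hypothesis. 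Finally, I would fill the interior by the inductive rule
\begin{equation*}
\widehat{\Theta}(a,b) = \overline{\frac{\beta_{(a,b-1),(0,1)}}{\widehat{\Theta}(a,b-1)\, \widehat{\Theta}(0,1)}}, \qquad a \geq 1,\ b \geq 1,
\end{equation*}
iterating on $b$. A direct tally gives $1 + 2 + (n-2) + (m-2) + (n-1)(m-1) = nm$ bispectral coefficients used, matching $|G|$.

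For the ambiguity, let $e^{i\varphi_1}, e^{i\varphi_2}$ denote the two free phases chosen in stage two. Because the recursions act linearly in the phase, an easy induction shows that every alternative solution differs from $\widehat{\Theta}$ exactly by $\widehat{\Theta}'(a,b) = e^{i(a\varphi_1 + b\varphi_2)}\widehat{\Theta}(a,b)$. The periodicity constraints $\widehat{\Theta}(n,0) = \widehat{\Theta}(0,0)$ and $\widehat{\Theta}(0,m) = \widehat{\Theta}(0,0)$ then force $e^{in\varphi_1} = e^{im\varphi_2} = 1$, so $\varphi_1 = 2\pi k_1/n$ and $\varphi_2 = 2\pi k_2/m$ for some $(k_1,k_2) \in C_n \times C_m$. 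By the Fourier shift theorem, this frequency-domain modulation corresponds exactly to translating $\Theta$ by the group element $(k_1,k_2)$, which is the $G$-action, so the reconstruction is unique up to the $G$-orbit of $\Theta$, as required by Theorem~\ref{thm:bsp_invariance}.

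The main obstacle is this final ambiguity step rather than the counting or the explicit propagation. One has to verify both that the propagated values really do form a valid Fourier transform of a real signal (which follows from the conjugate-symmetry constraints implicit in the bispectral identities) and that the a priori continuous pair $(\varphi_1, \varphi_2) \in [0,2\pi)^2$ is pinned down to the discrete subgroup $C_n \times C_m$ via the two closure conditions above: closing the loop along each generator is precisely what converts ``determined up to two continuous phases'' into ``determined up to an element of $G$''.
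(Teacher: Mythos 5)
Your construction is essentially the paper's own argument: the paper does not prove this cited result separately, but its proof for general finite commutative groups (Appendix~\ref{app:commutative_inversion}, Algorithm~\ref{alg:commutative_alg}) specialized to $L=2$ is exactly your four-stage scheme---trivial coefficient, two generators with arbitrarily fixed phases, propagation along each axis, then filling the interior---and your tally of $nm$ coefficients matches the paper's.

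The one step that does not hold as written is your resolution of the phase ambiguity. You claim that the ``periodicity constraints'' $\widehat{\Theta}(n,0)=\widehat{\Theta}(0,0)$ and $\widehat{\Theta}(0,m)=\widehat{\Theta}(0,0)$ force $e^{in\varphi_1}=e^{im\varphi_2}=1$. But none of the $nm$ coefficients you selected involves a wrap-around of the index (the axis recursion stops at index $n-1$, and your interior rule has $b\le m-1$), so the reconstruction never produces a value ``at index $n$'' that could be compared with the value at $0$: there is no loop to close. Indeed the paper stresses (Appendix~\ref{sec:indeterminacy}, following \citep{kakarala_thesis}) that the bispectrum is invariant under the \emph{continuous} modulation $e^{i\varphi k}$, $\varphi\in[0,2\pi)$, so the selected coefficients alone leave $(\varphi_1,\varphi_2)$ ranging over all of $[0,2\pi)^2$. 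What actually discretizes the phases is the requirement that the reconstructed signal be real-valued: conjugate symmetry $\widehat{\Theta}(n-a,\,m-b)=\overline{\widehat{\Theta}(a,b)}$ applied to the modulated solution is what forces $e^{in\varphi_1}=e^{im\varphi_2}=1$. This is precisely how the paper proceeds, searching for $\varphi_l\in[0,\tfrac{2\pi}{n_l})$ such that the inverse Fourier transform lands in $\mathbb{R}^{n_l}$. You do flag the realness check in your closing paragraph, so the fix is already in your hands---but it is the realness condition, not loop closure, that converts the continuous indeterminacy into an element of $G$.
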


\paragraph{New Theorems} From now on, we assume that the Fourier transform $\mathcal{F}(\Theta)$ only features \textit{non-zero elements}, or \textit{invertible matrices} in the case of non-scalar Fourier coefficients. This assumption is supported by the zero probability of encountering this corner case (an arbitrarily small perturbation of any signal makes this assumption true).

We first extend the above results to all commutative groups. The proof relies on the fact that every finite commutative group is the direct sum of finitely many cyclic groups.

\begin{theorem}
For finite commutative groups $G$, the $G$-Bispectrum can be inverted using $|G|$ coefficients if $\mathcal{F}(\Theta)_\rho \neq 0$ for all $\rho \in G$.
\end{theorem}
See Appendix~\ref{app:commutative_inversion} for the proof and derivation of the inversion for the specific case of commutative groups. We note that our approach to inversion is symbolic, in that a solution can be expressed explicitly as a formula in terms of the input.  Other approaches are also possible to determine an inverse, such as using least squares \citep{haniff1991least} or more recent spectral methods \citep{chen2018spectral}.

We now extend the result to dihedral groups. Dihedral groups are ubiquitous in signal processing and deep learning because they represent the group of rotations and reflections.

\begin{theorem}
    For any dihedral group $D_n$ (symmetries of the $n$-gon), $n\in\mathbb{N}_0$, we need at most $\left \lfloor{ \frac{n-1}{2}}\right \rfloor + 2$ bispectral matrix coefficients for inversion if $\det(\mathcal{F}(\Theta)_\rho) \neq 0$ for all irreps $\rho$ of $D_n$. This corresponds to $1+4+16\cdot\left\lfloor{ \frac{n-1}{2}}\right \rfloor\approx 4 |D_n|$ scalar values.
\end{theorem}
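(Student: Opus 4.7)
The plan is to apply Algorithm~\ref{alg:selective_bsp} explicitly to $D_n$, exploiting the classical representation theory of dihedral groups. Recall that the irreps of $D_n$ consist of a trivial irrep $\rho_0$, a sign irrep $\rho_{\mathrm{sgn}}$ (together with two additional $1$-dimensional irreps when $n$ is even), and the $\lfloor (n-1)/2 \rfloor$ two-dimensional irreps $\rho_1, \ldots, \rho_{\lfloor (n-1)/2 \rfloor}$. The key algebraic fact driving the construction is the Clebsch--Gordan decomposition $\rho_j \otimes \rho_k \cong \rho_{j+k} \oplus \rho_{|j-k|}$, with indices reduced modulo $n$ and the understanding that a ``zero-index'' two-dimensional block splits as $\rho_0 \oplus \rho_{\mathrm{sgn}}$ (and analogously the index $n/2$ block splits into two $1$-dimensional irreps when $n$ is even). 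This identity allows a short chain of bispectral coefficients to reach every irrep.

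The inversion would unfold in three stages. First I would invert the scalar identity $\beta(\Theta)_{\rho_0,\rho_0} = \mathcal{F}(\Theta)_{\rho_0}^3$ (via a real cube root) to recover $\mathcal{F}(\Theta)_{\rho_0}$. Second, using the positive semi-definite $2\times 2$ matrix $\beta(\Theta)_{\rho_0,\rho_1} = \mathcal{F}(\Theta)_{\rho_0}\,\mathcal{F}(\Theta)_{\rho_1}\,\mathcal{F}(\Theta)_{\rho_1}^\dagger$, I would solve for $\mathcal{F}(\Theta)_{\rho_1}$ up to a right unitary factor---precisely the ambiguity corresponding to the $D_n$-action allowed by Theorem~\ref{thm:bsp_invariance}. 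Third, I would iterate for $k = 1, \ldots, \lfloor (n-1)/2\rfloor$: assuming the Fourier coefficients $\mathcal{F}(\Theta)_{\rho_1}, \ldots, \mathcal{F}(\Theta)_{\rho_k}$ are already determined, invert the bispectral equation
\begin{equation*}
   \beta(\Theta)_{\rho_1,\rho_k} = \bigl[\mathcal{F}(\Theta)_{\rho_1}\otimes\mathcal{F}(\Theta)_{\rho_k}\bigr]\, C_{\rho_1,\rho_k}\, \bigl[\mathcal{F}(\Theta)_{\rho_{k+1}}^\dagger \oplus \mathcal{F}(\Theta)_{\rho_{k-1}}^\dagger\bigr]\, C_{\rho_1,\rho_k}^\dagger
\end{equation*}
by multiplying on the left by the inverse of the (invertible by hypothesis) $4\times 4$ matrix $\mathcal{F}(\Theta)_{\rho_1}\otimes\mathcal{F}(\Theta)_{\rho_k}$ and extracting the block conjugate to $\mathcal{F}(\Theta)_{\rho_{k+1}}^\dagger$. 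The case $k=1$ simultaneously produces $\mathcal{F}(\Theta)_{\rho_{\mathrm{sgn}}}$ from the $\rho_0 \oplus \rho_{\mathrm{sgn}}$ component of $\rho_1 \otimes \rho_1$, and when $n$ is even the terminal iteration analogously extracts the two remaining $1$-dimensional coefficients from the splitting of $\rho_{n/2}$.

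Counting, this procedure uses $1 + 1 + \lfloor (n-1)/2 \rfloor = \lfloor (n-1)/2 \rfloor + 2$ matrix coefficients, of sizes $1\times 1$, $2\times 2$, and $4\times 4$ respectively, giving the claimed total of $1 + 4 + 16\lfloor (n-1)/2\rfloor$ scalar entries, which is asymptotic to $4|D_n| = 8n$.

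The main obstacle will be verifying that each recursive step is genuinely invertible for $D_n$. This reduces to checking that the Clebsch--Gordan matrix $C_{\rho_1,\rho_k}$ has a block structure compatible with projecting the above equation onto the $\rho_{k+1}$-isotypic component, so that a unique $\mathcal{F}(\Theta)_{\rho_{k+1}}$ can be recovered. The non-degeneracy hypothesis $\det\mathcal{F}(\Theta)_\rho \neq 0$ is essential here, since it guarantees that $\mathcal{F}(\Theta)_{\rho_1}\otimes\mathcal{F}(\Theta)_{\rho_k}$ is invertible and that pre-multiplication does not collapse any relevant block. The Clebsch--Gordan coefficients for $D_n$ admit closed-form expressions derived from the characters $\zeta^j$ of the cyclic subgroup, making this verification explicit but mechanical; I would carry it out case-by-case ($n$ odd versus $n$ even, and separately for the terminal step $k = \lfloor (n-1)/2\rfloor$ where the target irrep may collapse into two $1$-dimensional pieces).
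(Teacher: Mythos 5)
Your proposal is correct and follows essentially the same route as the paper's proof: recover $\mathcal{F}(\Theta)_{\rho_0}$ from the cube of the trivial coefficient, obtain $\mathcal{F}(\Theta)_{\rho_1}$ up to a unitary factor from $\beta(\Theta)_{\rho_0,\rho_1}$, and then chain through the two-dimensional irreps via $\beta(\Theta)_{\rho_1,\rho_k}$ using the decomposition $\rho_1\otimes\rho_k\cong\rho_{k+1}\oplus\rho_{k-1}$, collecting the one-dimensional irreps from $\rho_1\otimes\rho_1$ and (for $n$ even) from the terminal step. The Clebsch--Gordan fusion rules you invoke are exactly what the paper verifies by character inner products in its appendix, and your coefficient count matches theirs.
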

The proof is provided in Appendix~\ref{app:dihedral_inversion}.
We now extend the result to octahedral and full octahedral groups, that are related to the symmetries of the octahedron. These groups are very important in signal processing and deep learning of 3D images.

\begin{theorem}
    For the octahedral group $O$ which has $|G| = 24$ group elements and $5$ irreps, we need only $4$ $G$-Bispectral coefficients in the selective $G$-Bispectrum. this corresponds to $172$ scalars. For the full octahedral group $FO$ which has $|G| = 48$ elements, we only need $6$ $G$-Bispectral coefficients in the selective $G$-Bispectrum to perform inversion. This corresponds to $334$ scalars.
\end{theorem}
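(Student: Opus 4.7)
The plan is to instantiate Algorithm~\ref{alg:selective_bsp} on each group separately, using its character/Kronecker table to pick a generating irrep and then track how each successive bispectral coefficient lets us recover new Fourier blocks. I would proceed in parallel for the two groups, since $FO \cong O \times C_2$ and the inversion argument for $FO$ reuses the one for $O$ twice (once per parity of the $C_2$ factor).

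For $O$, I would start from the standard enumeration of its five irreps, which I denote $\rho_0$ (trivial, dim $1$), $\rho_1$ (sign, dim $1$), $\rho_2$ (dim $2$), and $\rho_3,\rho_4$ (dim $3$). The central observation is that in the Kronecker table of $O$, $\rho_3 \otimes \rho_3 \cong \rho_0 \oplus \rho_2 \oplus \rho_3 \oplus \rho_4$ and $\rho_3 \otimes \rho_4 \cong \rho_1 \oplus \rho_2 \oplus \rho_3 \oplus \rho_4$, so choosing $\tilde\rho_1 = \rho_3$ on line 4 of the algorithm and then the pair $(\rho_3,\rho_4)$ inside the \textbf{while} loop covers every irrep. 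This produces the four coefficients $\beta_{\rho_0,\rho_0}$, $\beta_{\rho_0,\rho_3}$, $\beta_{\rho_3,\rho_3}$, $\beta_{\rho_3,\rho_4}$, whose scalar counts are $1 + 9 + 81 + 81 = 172$, matching the statement. For inversion I would chain the following recoveries: (i) from $\beta_{\rho_0,\rho_0} = \mathcal{F}(\Theta)_{\rho_0}^3$ we read off the real scalar $\mathcal{F}(\Theta)_{\rho_0}$; (ii) from $\beta_{\rho_0,\rho_3}$, dividing out the known $\mathcal{F}(\Theta)_{\rho_0}$, we obtain $\mathcal{F}(\Theta)_{\rho_3}\mathcal{F}(\Theta)_{\rho_3}^\dagger$, fixing $\mathcal{F}(\Theta)_{\rho_3}$ up to a right unitary factor; (iii) from $\beta_{\rho_3,\rho_3}$, combining the already-known Kronecker product on the left with the Clebsch--Gordan decomposition on the right, we can solve block by block for $\mathcal{F}(\Theta)_{\rho_0}$, $\mathcal{F}(\Theta)_{\rho_2}$, $\mathcal{F}(\Theta)_{\rho_3}$, $\mathcal{F}(\Theta)_{\rho_4}$ (the invertibility hypothesis on $\mathcal{F}(\Theta)_{\rho_3}$ is what lets us absorb the residual unitary ambiguity into the global group action, just as in the commutative case); (iv) from $\beta_{\rho_3,\rho_4}$, the same argument then recovers $\mathcal{F}(\Theta)_{\rho_1}$, the only remaining irrep.

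For $FO = O \times C_2$, the ten irreps split as $\rho_i^{+}$ and $\rho_i^{-}$ for $i \in \{0,\dots,4\}$, where the sign records parity under the central inversion. Tensor products respect parity ($+\otimes+ = +$, $+\otimes- = -$, $-\otimes- = +$), so a single $\tilde\rho_1 = \rho_3^+$ is not enough; I would therefore add the cross-parity pairs $(\rho_3^+,\rho_3^-)$ and either $(\rho_3^-,\rho_3^-)$ or $(\rho_3^+,\rho_4^-)$ to reach every irrep. A natural selection is $\beta_{\rho_0^+,\rho_0^+}$, $\beta_{\rho_0^+,\rho_3^+}$, $\beta_{\rho_3^+,\rho_3^+}$, $\beta_{\rho_3^+,\rho_4^+}$, $\beta_{\rho_3^+,\rho_3^-}$, $\beta_{\rho_3^+,\rho_4^-}$, giving scalar count $1 + 9 + 81 + 81 + 81 + 81 = 334$, as claimed. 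The first four coefficients invert the even-parity subsystem exactly by the argument used for $O$; the last two then play the analogous role across parities, recovering all odd-parity Fourier blocks once the even blocks are known, again via the Clebsch--Gordan decomposition and the nondegeneracy hypothesis.

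The main obstacle, and the step I would write most carefully, is (iii)--(iv): extracting individual Fourier blocks $\mathcal{F}(\Theta)_\rho$ from a single matrix equation of the form $(A \otimes B)\,C\,(\bigoplus_\rho X_\rho^\dagger)\,C^\dagger = \beta_{\rho_a,\rho_b}$ when $A,B$ are already known and invertible. The clean formulation is that $C^\dagger (A\otimes B)^{-1} \beta_{\rho_a,\rho_b} C$ is block-diagonal in the irrep decomposition of $\rho_a \otimes \rho_b$, and each diagonal block is exactly $X_\rho^\dagger$; the nontrivial content is that for the octahedral groups the Clebsch--Gordan matrices are multiplicity-free (every irrep appears with multiplicity at most one in the tensor products we use), so this block extraction is unambiguous and no additional inner-product matching between repeated copies is required. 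Once this multiplicity-freeness is verified from the Kronecker tables for $O$ and $FO$, the rest of the argument is bookkeeping: tallying the four (resp.\ six) coefficients, summing $\sum (\dim \rho_a)^2 (\dim \rho_b)^2$, and checking $172$ and $334$.
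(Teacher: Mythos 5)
Your treatment of the octahedral group $O$ is correct and is, up to a relabeling of the irreps (the paper uses the \texttt{escnn} ordering, in which $\rho_1,\rho_2$ are the two $3$-dimensional irreps, so the paper's selection $\beta_{\rho_0,\rho_0},\beta_{\rho_0,\rho_1},\beta_{\rho_1,\rho_1},\beta_{\rho_1,\rho_2}$ is exactly your $\beta_{\rho_0,\rho_0},\beta_{\rho_0,\rho_3},\beta_{\rho_3,\rho_3},\beta_{\rho_3,\rho_4}$), the same chain of recoveries as in Appendix~\ref{app:octahedral}. Your explicit remark that the tensor decompositions used are multiplicity-free, so that the block extraction $C^\dagger(A\otimes B)^{-1}\beta\,C$ is unambiguous, is a point the paper leaves implicit and is worth stating.

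The full octahedral case, however, has a genuine gap. Your last two coefficients are $\beta_{\rho_3^+,\rho_3^-}$ and $\beta_{\rho_3^+,\rho_4^-}$, and you claim they recover the odd-parity blocks ``once the even blocks are known.'' But by Theorem~\ref{def:Bispectrum} each of these coefficients has the form $\left[\mathcal{F}(\Theta)_{\rho_3^+}\otimes\mathcal{F}(\Theta)_{\rho^-}\right]C\left[\bigoplus_\rho \mathcal{F}(\Theta)_\rho^\dagger\right]C^\dagger$ with $\rho^-\in\{\rho_3^-,\rho_4^-\}$: the left Kronecker factor contains an \emph{odd}-parity block that is still unknown at that stage, so the very extraction step you isolate as the crux (``when $A,B$ are already known and invertible'') does not apply. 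At best you are left with a quadratic matrix equation in $\mathcal{F}(\Theta)_{\rho_3^-}$, which you would have to solve and whose solvability you do not address. The paper's selection avoids this circularity by bootstrapping an odd irrep first: it takes $\beta_{\rho_0,\rho_6}$ (i.e., $\beta_{\rho_0^+,\rho_3^-}$ in your notation) to obtain the odd block $\mathcal{F}(\Theta)_{\rho_6}$ up to a unitary, then uses $\beta_{\rho_6,\rho_6}$ (odd$\otimes$odd $=$ even, with a fully known left factor) to recover the even $3$- and $2$-dimensional blocks, and only then uses even$\otimes$odd products $\beta_{\rho_1,\rho_6},\beta_{\rho_1,\rho_7}$ in which both left factors are already determined. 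Your coefficient count $1+9+81+81+81+81=334$ happens to match because you chose the same dimensions of pairs, but the set of coefficients you selected does not support the sequential inversion argument as written; you would need to swap $\beta_{\rho_0^+,\rho_3^+}$ and $\beta_{\rho_3^+,\rho_3^+}$ for their odd-parity analogues (or otherwise reorder the bootstrap through the odd sector) to close the argument.
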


A sketch of proof is provided in Appendix~\ref{app:octahedral} given the redundancy of the procedure. We see that the selective $G$-Bispectrum uses only $4$ coefficients, compared to $25$ coefficients needed for the full $G$-Bispectrum of the octahedral group. For the full octahedral group, it requires only $6$ coefficients compared to the $100$ coefficients of the full $G$-Bispectrum. In Figure~\ref{fig:overview}, we compare the full and selective $G$-Bispectra of the dihedral group $D_4$ (symmetries of the square) and the octahedral group.

\begin{figure}
    \centering
    \includegraphics[width=1\linewidth]{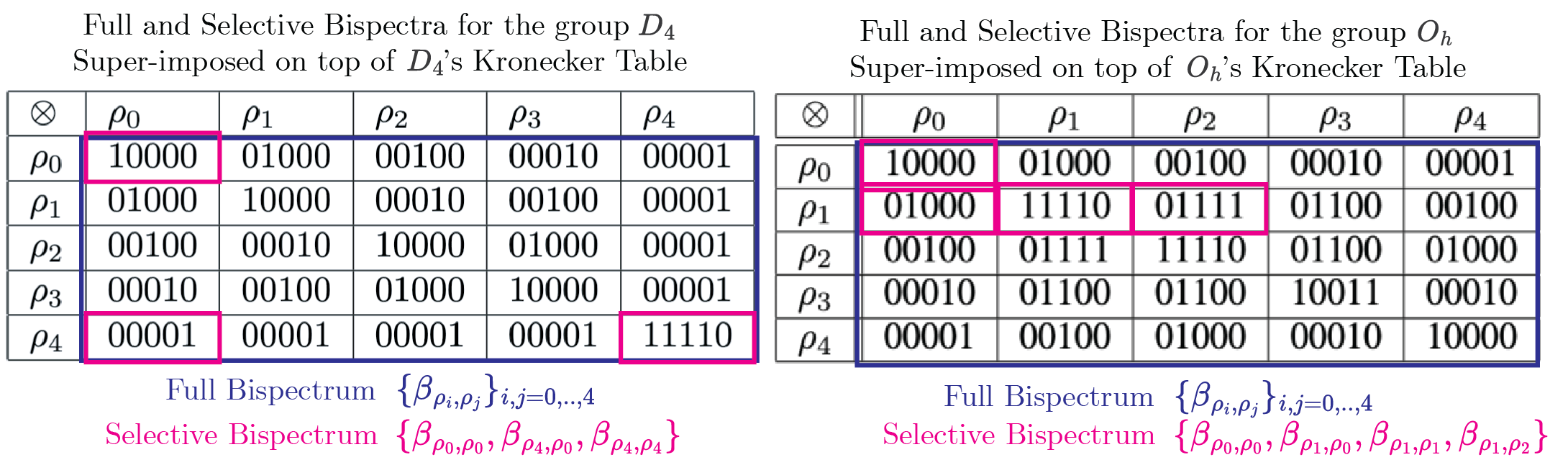}
    \vspace{-0.5cm}
    \caption{Comparison of full and selective $G$-Bispectra for the dihedral group $D_4$ (left) and the octahedral group $O_h$ (right). The Kronecker tables of both groups show which irreps emerge from the decomposition into irreps of the tensor product of irreps $\rho_i \otimes \rho_j$. The colored boxes highlight the bispectral coefficients chosen for the full and selective Bispectra. Our proposed selective Bispectrum captures the same information as the full Bispectrum but with significantly fewer coefficients.}
    \label{fig:overview}
\end{figure}
\section{Experimental results}\label{sec:experiments}
\paragraph{Implementation and architecture} Our implementation of the selective $G$-bispectrum layer is based on the \href{https://github.com/gtc-invariance/gtc-invariance}{\texttt{gtc-invariance}} repository, implementing the $G$-CNN with $G$-convolution and $G$-TC layer \citep{sanborn2023general} and relying itself on the \href{https://github.com/QUVA-Lab/escnn}{\texttt{escnn} library} \citep{cesa2022a, weiler2021general}. The implementations related to this section can be found at the \href{https://github.com/geometric-intelligence/g-invariance}{\texttt{g-invariance}} repository.

We propose an experimental assessment of the newly proposed selective $G$-Bispectrum layer by comparing it with the Avg $G$-pooling, the Max $G$-pooling, the $G$-TC as invariance operations after the $G$-convolution of a $G$-CNN on the classification problems of the \href{http://yann.lecun.com/exdb/mnist/}{MNIST} dataset of handwritten digits \citep{lecun2010mnist}, the \href{https://www.nist.gov/itl/products-and-services/emnist-dataset}{EMNIST} dataset of handwritten letters \citep{cohen2017emnist} with standard train-test division. These datasets count 10 and 26 classes, respectively. We obtain transformed versions of the datasets -- $G$-MNIST/EMNIST -- by applying a random action $g\in G$ on each image in the original dataset. 

The objective of our experiments is to isolate the speed-up of the $G$-Bispectrum layer. Hence, we consider architectures that only differ by the invariant layer in the classification task, following the experimental set up by \citep{sanborn2023general}.  The neural network architecture is composed of a $G$-convolution, a $G$-invariant layer, and finally a Multi-Layer-Perceptron (MLP), itself composed of three fully connected layers with ReLU nonlinearity. Finally, a fully connected linear layer is added to perform classification. The MLP's widths are tuned to match the number of parameters across each neural network model. The details are given in Appendix~\ref{app:training}. We highlight here that the pursued objective is to compare the differences in performances of the $G$-invariant layers, not to provide the state-of-the-art accuracy on the datasets involved. Henceforth, we do not optimize the architectures to reach the highest possible accuracy. We set simple architectures providing interpretable results for analysis. The experiments a performed using $8$ cores of a NVIDIA A30 GPU.
\paragraph{Training speed performance} Table~\ref{tab:complexities} recalls the theoretical complexities of the different layers. The computational cost of computing the selective $G$-Bispectrum is $\mathcal{O}(|G|\log |G|)$ if an FFT algorithm is available on $G$~\citep{Diaconis1990EfficientCO}, and $\mathcal{O}(|G|^2)$ with classical DFT. in Figure~\ref{fig:trainings}, we report the average training times on $\mathrm{SO}(2)/\mathrm{O}(2)$-MNIST for 10 runs as the discretization $C_n/D_n$ of $\mathrm{SO}(2)/\mathrm{O}(2)$ varies. In the first case, we use the FFT and observe that the Max $G$-pooling and $G$-Bispectrum training time scale linearly whereas it scales quadratically for the $G$-TC. For $\mathrm{O}(2)$, we perform a classic DFT on $D_n$ so that the $G$-Bispectrum scales worth. However, an FFT could be implemented to speed-up the process.
\begin{table}
    \centering
    \begin{tabular}{||c||c|c|c||}
        \hline
         Invariance layer&  Computational Complexity & Ouput size& Complete G-invariant\\
         \hline
         $G$-TC& $K\mathcal{O}(|G|^3)$&$K\mathcal{O}(|G|^2)$&\textcolor{Green}{\ding{51}}\\
         \hline
         Full $G$-Bispectrum& $K\mathcal{O}(|G|^2)$&$K\mathcal{O}(|G|^2)$&\textcolor{Green}{\ding{51}}\\
         \hline
         Select. $G$-Bispectrum & $K\mathcal{O}(|G|\log|G|\text{ or }|G|^2)$&$K\mathcal{O}(|G|)$&\textcolor{Green}{\ding{51}}\\
         \hline
         Max $G$-pooling&$K\mathcal{O}(|G|)$&$K\mathcal{O}(1)$&\textcolor{red}{\ding{55}}\\
         \hline
         Avg $G$-pooling&$K\mathcal{O}(|G|)$&$K\mathcal{O}(1)$&\textcolor{red}{\ding{55}}\\
         \hline
    \end{tabular}
    \vspace{0.2cm}
    \caption{$G$-CNN invariant layers and their computational cost and output size. $K$ is the number of filters. The selective $G$-Bispectrum that we propose is the complete $G$-invariant layer with the lowest time and space complexity. It reduces significantly the cost compared to the $G$-TC layer while preserving its completeness. \vspace{-0.4cm}}
    \label{tab:complexities}
\end{table}
\begin{figure}[h]
    \centering
    \includegraphics[width = 6.8cm]{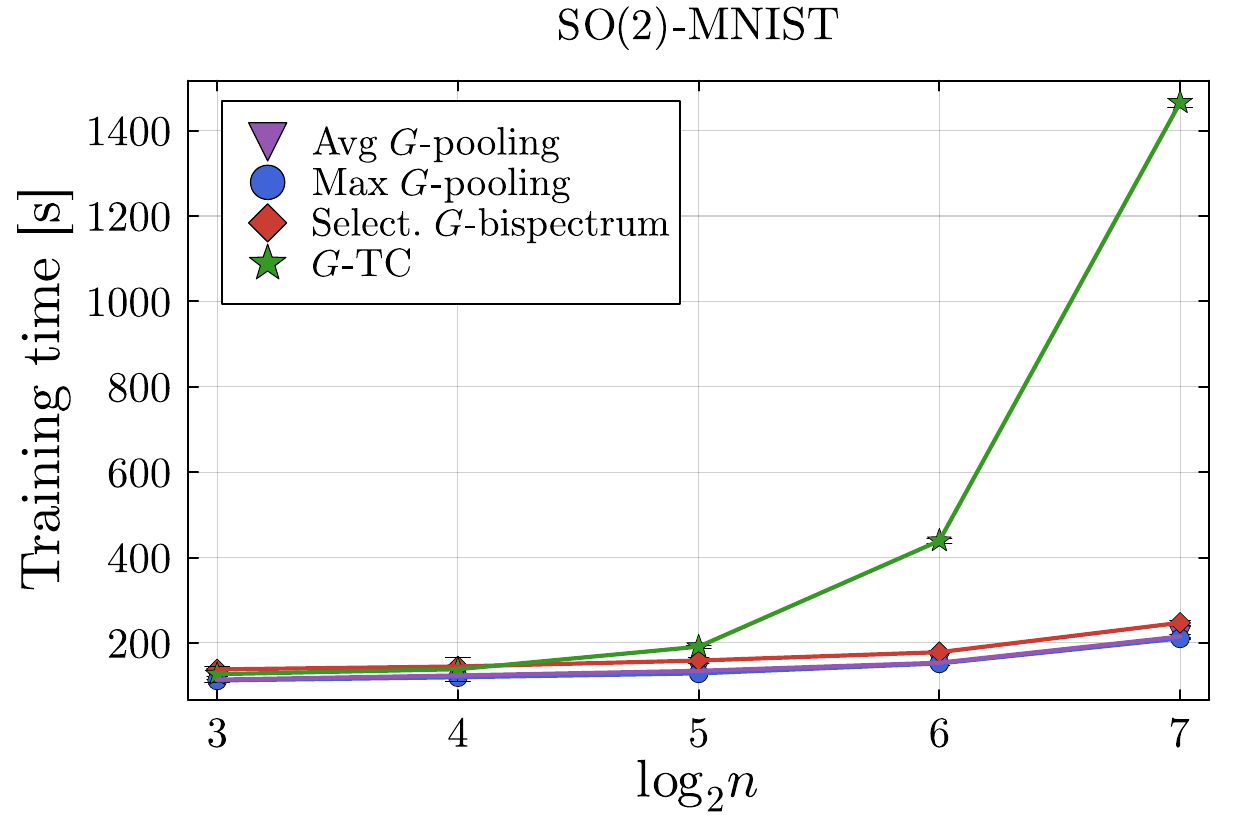}
    \includegraphics[width =6.8cm]{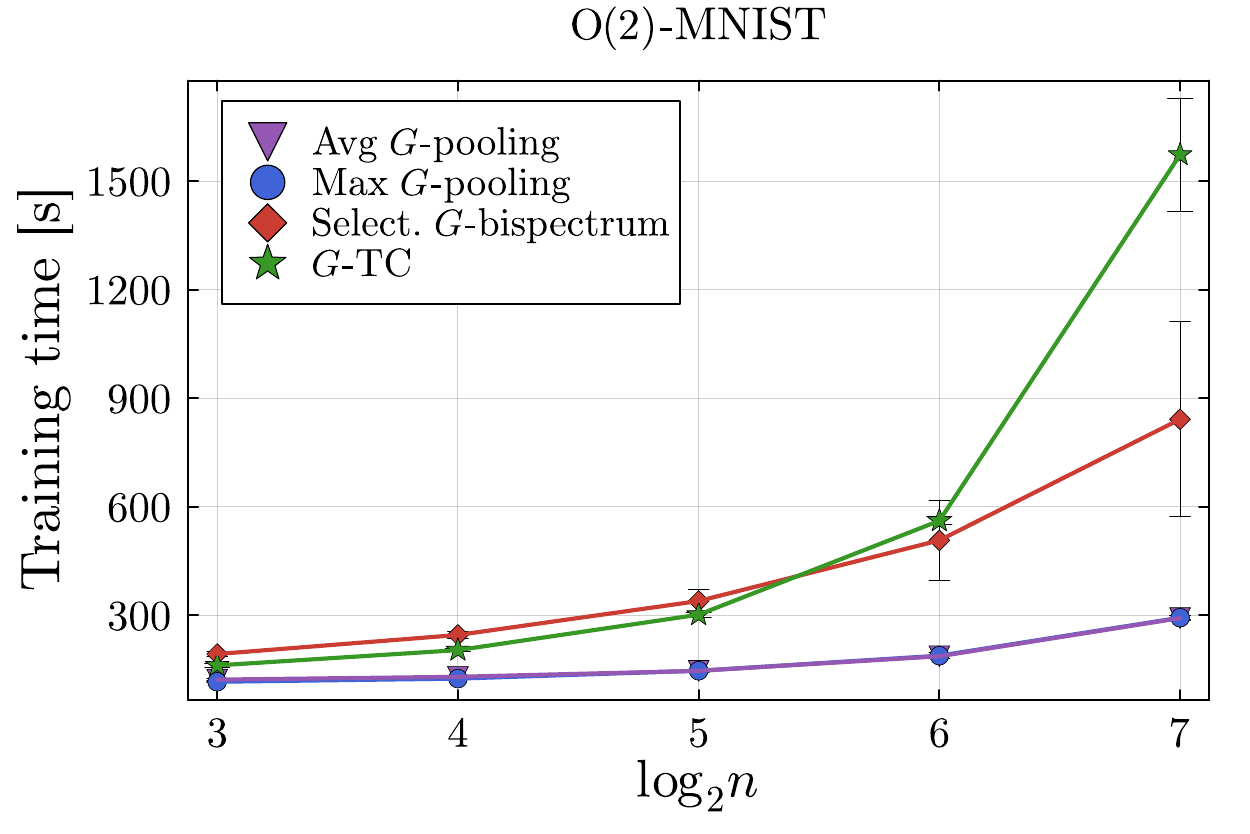}
    \vspace{-0.3cm}
    \caption{Evolution of the average training times for the different invariant layers. The parameter $n$ is the size of the groups $C_n$ and $D_n$. The average and standard deviations are obtained over $10$ runs.  For all runs, the number of parameters of the complete neural network (filters and MLP) is set to $50000$ and $150000$ for $\mathrm{SO}(2)$ and $\mathrm{O}(2)$ respectively. Standard deviations are reported by vertical intervals. When a FFT is available, our selective G-Bispectrum significantly outperforms other complete G-invariant pooling layers in terms of speed. Specifically, when working with $C_{2^7}$, training on a dataset of $60000$ images takes only $247$ seconds, whereas the $G$-TC requires 1465 seconds. 
    }
    \label{fig:trainings}
\end{figure}

\paragraph{Classification Performance} We compare the performances of the $G$-Bispectrum layer with respect to the $G$-TC, the Max $G$-pooling and the Avg $G$-pooling models, trained on the $\mathrm{SO}(2)$/$\mathrm{O}(2)$-MNIST/EMNIST datasets and we assess the accuracy by averaging the validation accuracy over 10 runs. The classification accuracy is provided in Table~\ref{tab:classification}. For the experiments in Table \ref{tab:classification}, the following pattern holds: at equivalent number of parameters, the more computationally expensive the pooling layer, the better the accuracy. However, the use of the $G$-TC becomes prohibitive when $|G|$ increases. In the next section, we discuss the settings where each invariant layer should be preferred, and highlight each invariant layer's strengths and weaknesses.
\begin{table*}
\centering
\small
\begin{tabular}{||c|c|c|c|c|c|c|c||}
	\hline
	Dataset&Group&$G$&Pooling&$K$ filters&Avg acc.&Std. dev.&Param. count\\
 \hline
	\hline
	\multirow{8}{*}{MNIST}&\multirow{4}{*}{$\mathrm{SO}(2)$}&\multirow{4}{*}{$C_8$}&Avg $G$-pooling&24&0.74 &$<0.01$ &50247\\
	     &       &       &Max $G$-pooling&24& 0.96& $<0.01$ &50247\\
	     &       &       &Select. $G$-Bispectrum&24&0.95 &$<0.01$  &49116\\
	     &       &       &$G$-TC&24&0.96 &  $<0.01$&48385\\
        \cline{2-8}
	     &\multirow{4}{*}{$\mathrm{O}(2)$}&\multirow{4}{*}{$D_8$}&Avg $G$-pooling&4& 0.60&$<0.01$ &147675\\
	     &       &       &Max $G$-pooling&4&0.78 &  $<0.01$&147675\\
	     &        &      &Select. $G$-Bispectrum&4&0.93 & $<0.01$ &143029\\
	     &        &      &$G$-TC&4&0.96 &  $<0.01$&142220\\
	\hline
	\multirow{8}{*}{EMNIST}&\multirow{4}{*}{$\mathrm{SO}(2)$}&\multirow{4}{*}{$C_8$}&Avg $G$-pooling&24&0.40 &$<0.01$ &50195\\
	     &       &       &Max $G$-pooling&24& 0.76& $<0.01$ &50195\\
	     &       &       &Select. $G$-Bispectrum&24&0.77 & $<0.01$ &49254\\
	     &       &      &$G$-TC&24&0.80 &  $<0.01$&48494\\
	\cline{2-8}
	     &\multirow{4}{*}{$\mathrm{O}(2)$}&\multirow{4}{*}{$D_8$}&Avg $G$-pooling&20&0.38 &$<0.01$ &48832\\
	     &      &        &Max $G$-pooling&20&0.71 & $<0.01$ &48832\\
	     &      &        &Select. $G$-Bispectrum&20&0.74 & $<0.01$ &47320\\
	     &     &        &$G$-TC&20&0.79 & $<0.01$ &46954\\
	\hline
\end{tabular}
\caption{Results of numerical experiments averaged over 10 runs with Avg $G$-pooling, Max $G$-pooling, our selective $G$-Bispectrum and $G$-TC. The experiments are performed on $\mathrm{SO}(2)/\mathrm{O}(2)$-MNIST and $\mathrm{SO}(2)/\mathrm{O}(2)$-EMNIST. 
The table shows the number of filters, the average classification accuracy, standard deviation and parameter count. This table shows that the selective $G$-Bispectrum conserves the accuracy of the $G$-TC at an equivalent number of parameters. \vspace{-0.1cm}}
\label{tab:classification}
\end{table*}
\paragraph{Discussion on the choice of invariant layer} The first observation from Table \ref{tab:classification} is though the \emph{selective} $G$-Bispectrum is complete, the model obtains slightly lower accuracy than $G$-TC. This observation might be surprising at first, since we prove mathematically in Section~\ref{sec:completeness} that the selective $G$-Bispectrum is complete, just as the full version. An explanation to this lies in the paradoxes of the Universal Approximation Theorem \citep{HORNIK1989359}. Just because an arbitrarily large MLP can theoretically fit any function, this does not imply that it will happen for a practical, limited MLP. In practice, we hypothesize that the redundancy of the $G$-TC allows the MLP to distinguish inputs more easily. If the size of the model allows it, the $G$-TC or the full $G$-Bispectrum will provide better accuracy. However, when the size of the group is big, their use is often out of reach while the selective $G$-Bispectrum is scalable.
In Table \ref{tab:classification}, we also notice that the Max $G$-pooling performs well compared to the others even though it is not complete. This is because we have many filters that allow for refined classification. Indeed, assume $f,\phi_k$ are black-and-white images with $N$ pixels. In consequence, $\max_g\Theta_k(g)\in\{0,1,...,N\}$ for $k=1,2,...,K$. The Max $G$-pooling allows a maximum separation of $(N+1)^K$ classes. In practice, this value is not reached, but it explains why Max $G$-pooling performs well. Figure \ref{fig:Kfilters} highlights this dependency of the Max $G$-pooling on the number of filters since the accuracy drops to less than $60\%$ with 2 filters. In comparison, the $G$-TC and the selective $G$-Bispectrum, which are \emph{complete}, keep an accuracy above $85\%$ with 2 filters.

\begin{figure}
    \centering
    \includegraphics[width = 6.8cm]{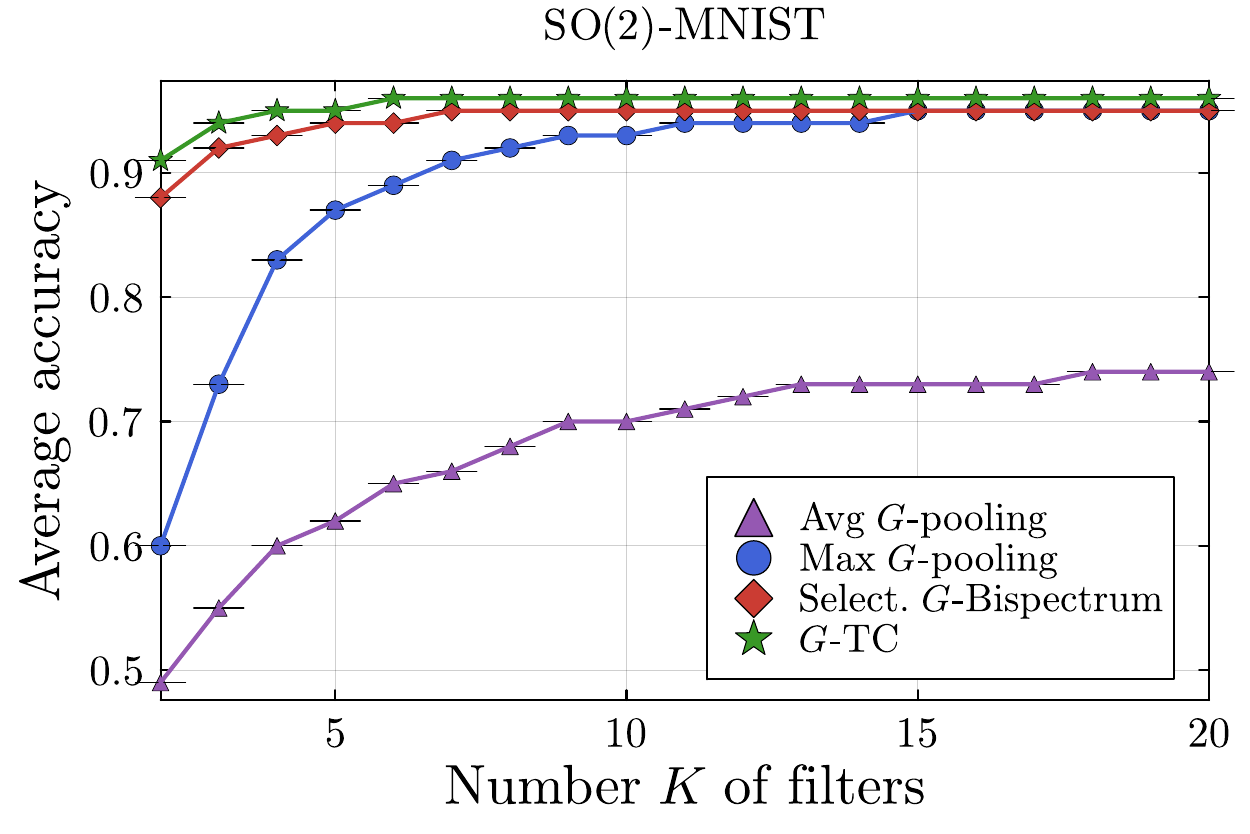}
    \includegraphics[width = 6.8cm]{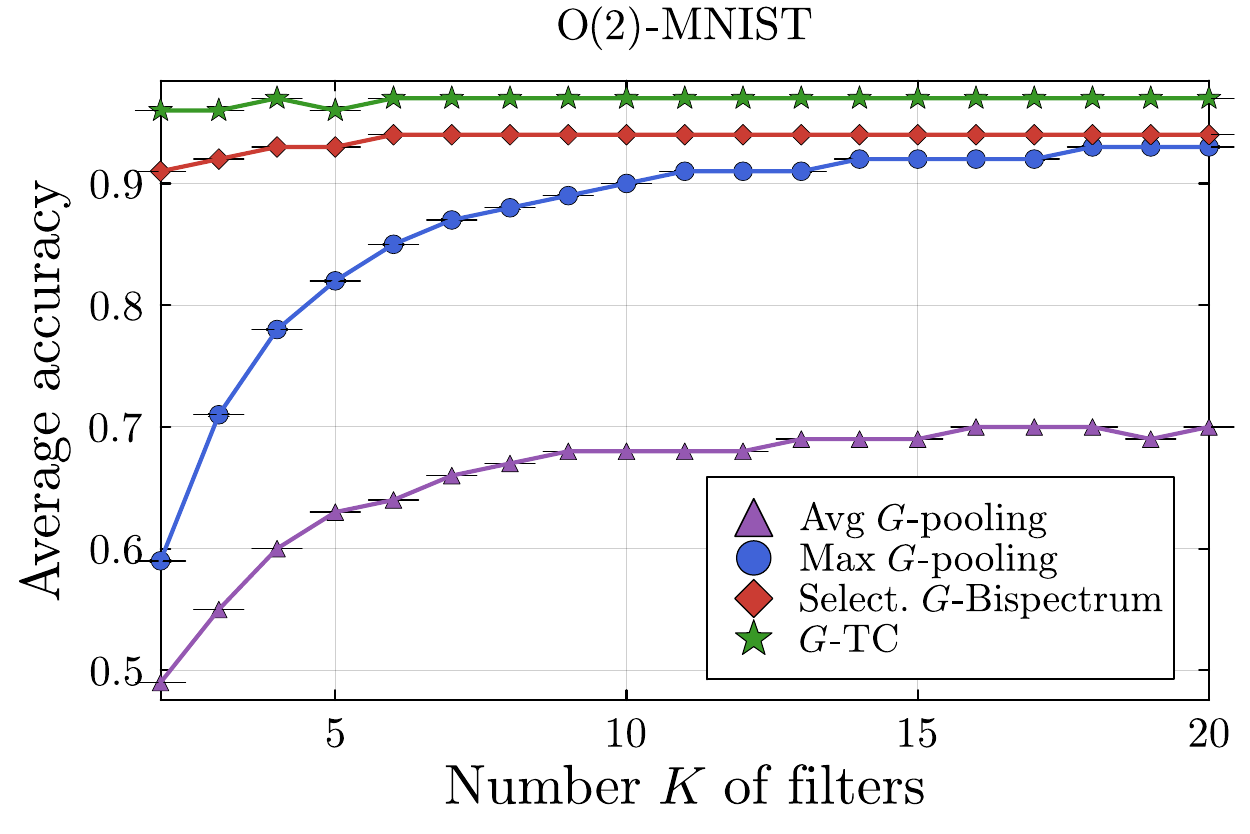}
    \includegraphics[width = 6.8 cm]{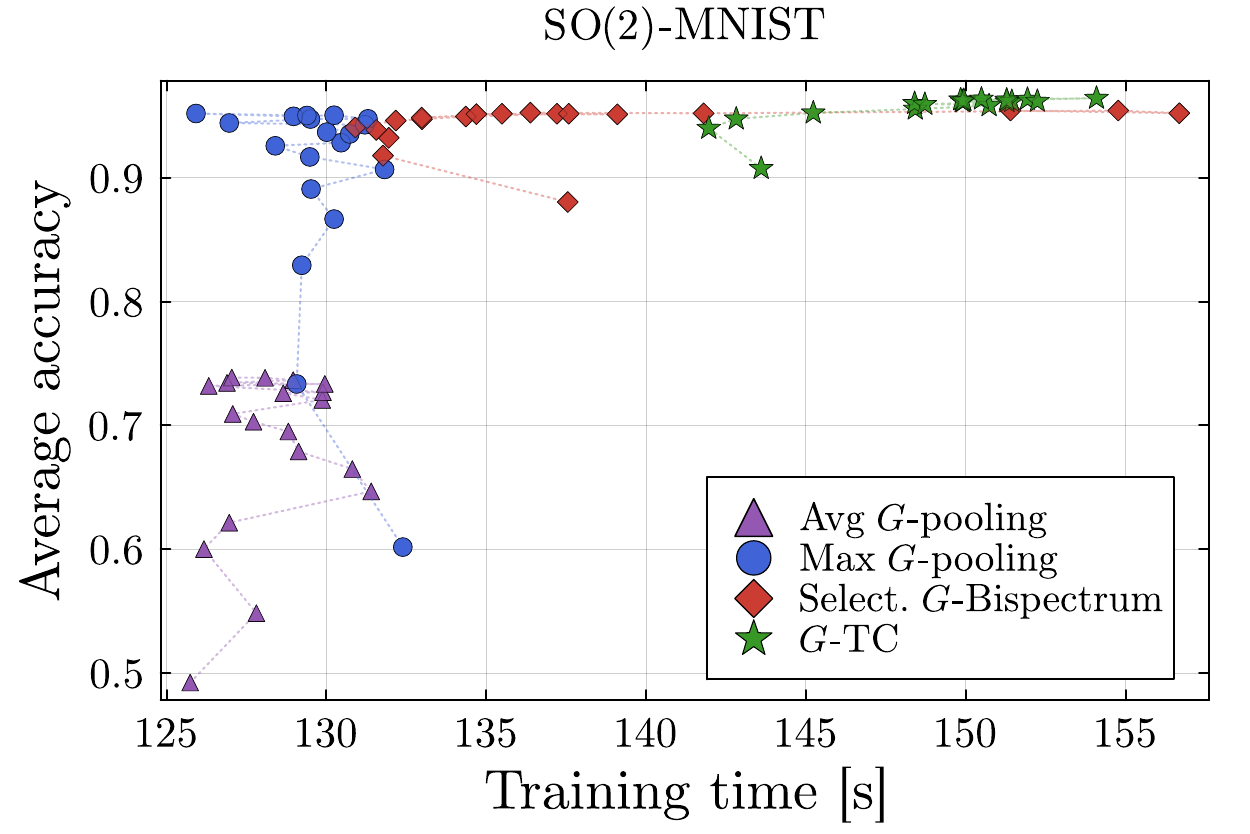}
    \includegraphics[width = 6.8 cm]{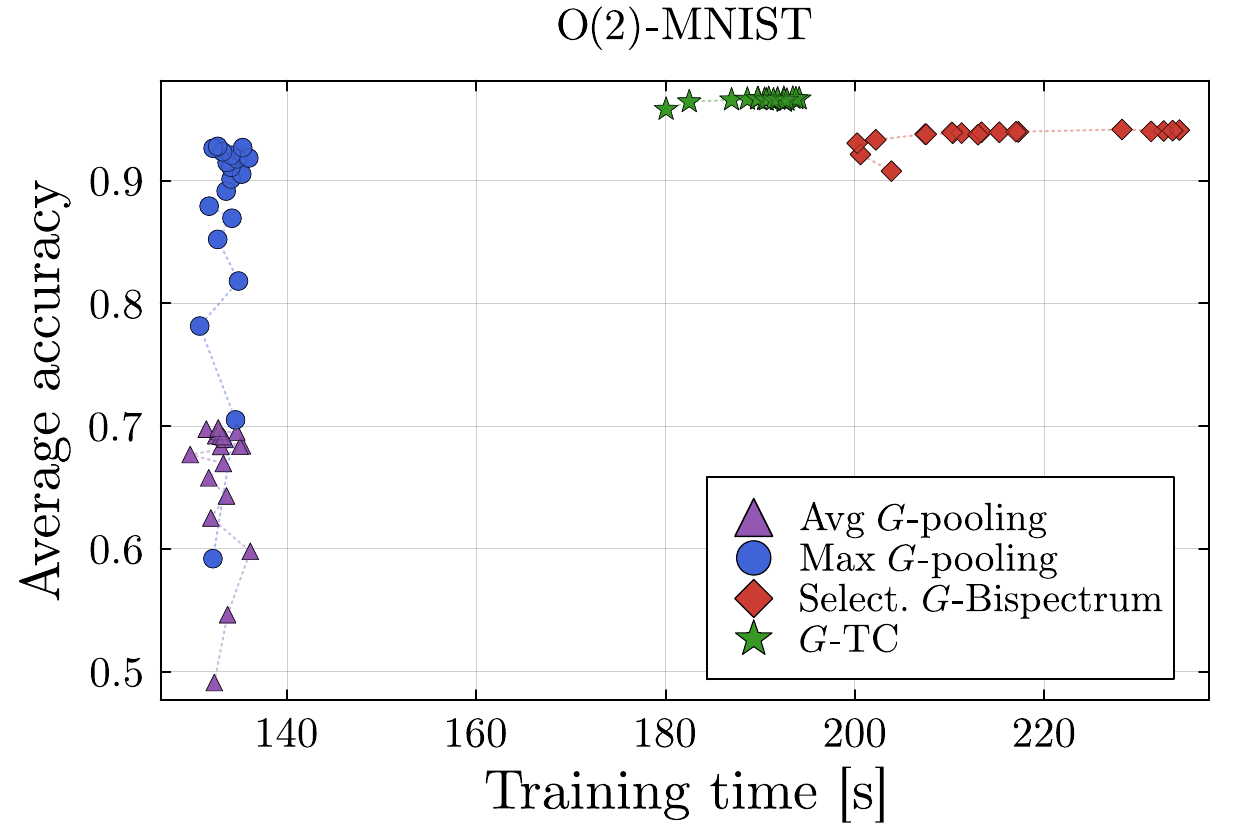}
    \vspace{-0.2cm}
    \caption{At the top: Evolution of the average classification accuracy with rotated MNIST ($\mathrm{SO}(2)$-MNIST) and rotated-reflected MNIST ($\mathrm{O}(2)$-MNIST) over $10$ runs when the number of filters varies from $2$ to $20$ for the Avg $G$-pooling, the Max $G$-pooling, the selective $G$-Bispectrum and the $G$-TC. The number of parameters of each model is maintained equal for fair comparison. The standard deviations are represented using vertical intervals. With the selective $G$-Bispectrum layer, we can reduce the number of convolutional filters needed for a given accuracy. For example, with only \( K = 2 \) filters, we achieve 96\% accuracy, compared to 63\% with the Max $G$-pooling layer. Our approach allows $G$-CNNs to maintain competitive accuracy while using smaller neural networks.  At the bottom, the same results are displayed with time instead of the number of filters on the $x$-axis. The dotted lines reproduce the evolution of $K$ from the figures at the top. We can observe that the selective $G$-Bispectrum is faster than the $G$-TC when a FFT is available, thus here in the case of $\mathrm{SO}(2)$-MNIST. Recall that an FFT can be implemented for many groups \cite{Diaconis1990EfficientCO}}\vspace{-0.5cm}
    \label{fig:Kfilters}
\end{figure}
\paragraph{Completeness} To conclude our numerical experiments, we study the robustness of the selective $G$-Bispectrum to adversarial attacks, following the analysis in \citet[Figure~2]{sanborn2023general}. Given an image $\widehat{f}:X\mapsto\mathbb{R}^c$ and a filter $\phi:X\mapsto\mathbb{R}^c$, they numerically verified  the robustness (=completeness) of the $G$-TC by showing that 
\begin{equation}\label{eq:opti_sanborn}
    f^*\in\mathrm{arg}\min_{f:X\mapsto\mathbb{R}^c} \|T(\phi * f)-T(\phi * \widehat{f})\|_2^2 \iff f^*=\alpha(g, \widehat{f}) \text{ for some } g\in G.
\end{equation}
Indeed, \citet[Figure~2]{sanborn2023general} shows that only images that are identical up to rotation/reflection can yield the same $C_n/D_n$-TC. That is, the $G$-CNN with $G$-TC can not be ``fooled'' since only input in the same orbit yield the same output. We perform a similar experiment in Figure~\ref{fig:recovering_MNIST}. Moreover, it is well-known that the $G$-convolution $\phi*f$ is $G$-equivariant. Hence, an equivalent experiment is to show that 
\begin{equation*}
    \Theta^*\in\mathrm{arg}\min_{\Theta:G\mapsto\mathbb{R}}\|T(\Theta)-T(\widehat{\Theta})\|^2_2\iff \Theta^* = \alpha(g, \widehat{\Theta})\text{ for some } g\in G.
\end{equation*}
In Figure~\ref{fig:completeness}, we show that the selective $G$-Bispectrum $\beta_{sel}$ is robust to adversarial attacks by solving
\begin{equation}   \label{eq:optimisation_prob}
    \Theta^*\in\mathrm{arg}\min_{\Theta:G\mapsto\mathbb{R}}\|\beta_{sel}(\Theta)-\beta_{sel}(\widehat{\Theta})\|^2_2.
\end{equation}
The signals are indeed recovered up to a translation, i.e., a group action of $C_{30}$. Moreover, despite~\eqref{eq:optimisation_prob} only optimizes using the selective $G$-Bispectrum, the full $G$-Bispectrum is correctly recovered. This is an additional evidence of the completeness of the selective $G$-Bispectrum.

\begin{figure}
\centering
\includegraphics[width=0.99\textwidth]{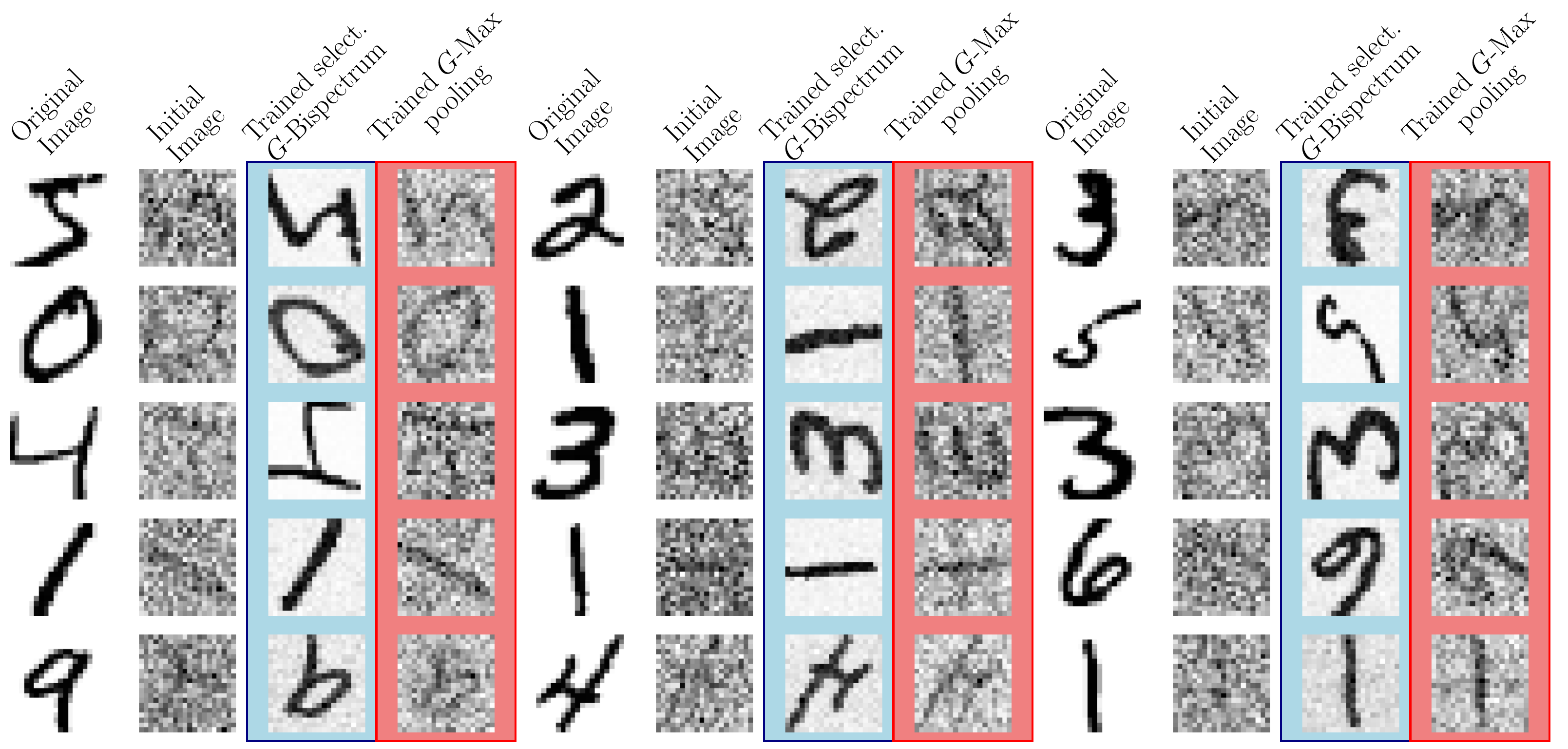}
\vspace{-0.3cm}
\caption{Adversarial attacks experiments with $G=C_4$. Images are optimized to output, respectively, a target selective $G$-bispectrum and a target $G$-Max pooling, obtained from an original image. The initial image is the initialization of the optimization process \eqref{eq:opti_sanborn}. After training, only for the selective $G$-Bispectrum (in blue), the recovered image is a copy of the original image up to group action (rotation). This is a numerical illustration of the robustness of the selective $G$-Bispectrum to adversarial attacks: one can not obtain the same output with an input that is not in the same class. On the other hand, $G$-Max pooling (in red) outputs a noisy image because it is not complete.}
\label{fig:recovering_MNIST}
\end{figure}

\begin{figure}[ht]
    \centering
    \includegraphics[width = 0.99\textwidth]{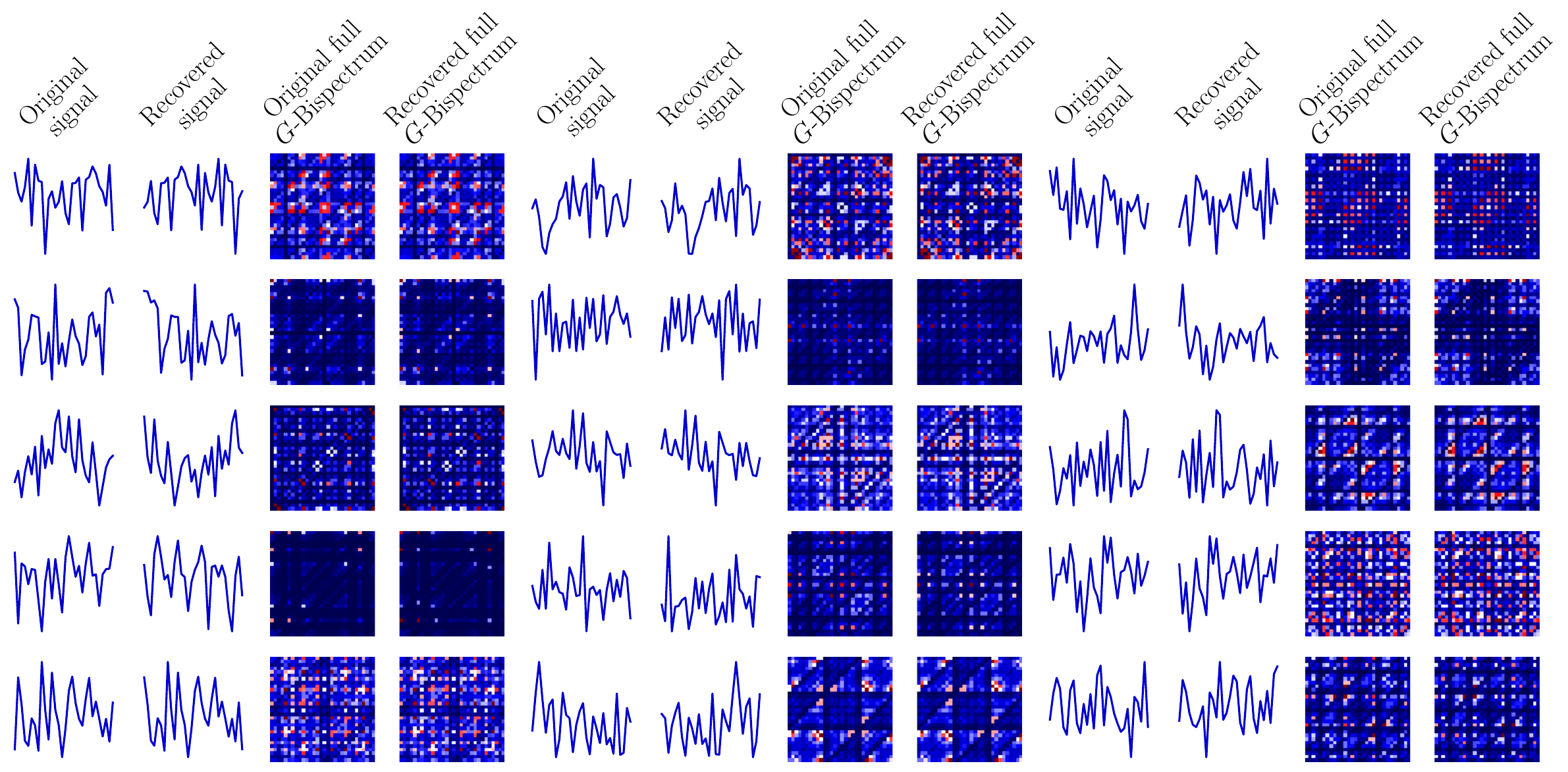}
    \vspace{-0.2cm}
    \caption{Numerical experiment of signal recovering from original signals $\{\widehat{\Theta}_i\}_{i=1}^{15}$ where $\widehat{\Theta}_i:G\mapsto\mathbb{R}$ by solving \eqref{eq:optimisation_prob} with  $G = C_{30}$. We use the gradient method with Armijo line search to solve \eqref{eq:optimisation_prob}. The recovered solutions $\{\Theta^*_i\}_{i=1}^{15}$ are represented and correspond to translations of the original signals. The moduli of the full $G$-Bispectra are also represented and are identical. This experiment corroborates the completeness of the selective $G$-Bispectrum since we are able to recover an unknown signal only from the knowledge of its selective $G$-Bispectrum. 
    }
    \label{fig:completeness}
\end{figure}

\section{Conclusion and Future works}
In this paper, we introduced a new type of complete invariant layer for $G$-invariant CNNs -- called \emph{selective $G$-Bispectrum layer} -- with the objective of increasing the accuracy and robustness of $G$-CNNs compared to those implemented with the initially proposed Max $G$-pooling. The $G$-TC layer also achieves this goal, but at an output cost of $\mathcal{O}(|G|^2)$ coefficients and $\mathcal{O}(|G|^3)$ flops that prevents its application to large groups, while the selective $G$-Bispectrum layer only outputs $\mathcal{O}(|G|)$ coefficients. Building on the result of \citet{kakarala_thesis} for cyclic groups, we have shown that the completeness of the \textit{selective} $G$-Bispectrum layer holds for all commutative groups, all dihedral groups, the octahedral and full octahedral groups. In a suite of experiments, we provided a global picture of the strength and weaknesses of each invariant layer. We studied the performance in terms of training speed, classification accuracy and robustness to adversarial attacks. As a result, the selective $G$-Bispectrum paves the way for the development of complete invariant pooling layers that can accommodate larger group sizes and, hence, a larger set of symmetries.
\newpage

\newpage
\appendix

\section{Background on groups}\label{app:group_theory}
We introduce the fundamentals of group theory, which provide the foundation for the theory of $G$-CNNs. These notions can be found in \citep{steinberg2011representation}. 
\begin{definition}\label{def:group}
    A \emph{group} is a pair $(G,\ \cdot)$ where $G$ is a set and $\cdot:G\times G\mapsto G$ is an associative multiplication such that there is an identity element $e\in G$ (i.e., for all $g\in G$, $e\cdot g=g\cdot e=e$) and, for all $g\in G$, there is an inverse $g^{-1}\in G$ such that $g^{-1}\cdot g = g\cdot g^{-1}=e$.
\end{definition}
A group is thus a set $G$ combined with a product $\cdot$ preserving the characteristics of $G$. Here, the established term ``product'' can be misleading. It denotes any operation which makes Definition~\ref{def:group} true given the set $G$. For instance, $(\mathbb{R},+)$ is a group. Another example is $\mathrm{GL}(\mathbb{R}^n)$, the $n\times n$ real invertible matrices, associated to the usual matrix product. This group is said to be \emph{non-commutative} since $A\cdot B\neq B\cdot A$ in general for $A,B\in\mathrm{GL}(\mathbb{R}^n)$. An important group for us is the set $\{0,1,...,n-1\}$ associated with addition modulo~$n$. It is usually written $\mathbb{Z}/n\mathbb{Z}$ and called the \emph{cyclic group} $C_n$. A single group can arise in different contexts under seemingly distinct forms. For instance, $\mathbb{Z}/4\mathbb{Z}$ and the rotations leaving the square unchanged in $\mathbb{R}^2$ are fundamentally the same object. This observation gives rise to \emph{representation theory}, a branch of group theory studying how the same abstract idea of a group can emerge under different forms.
\begin{definition}\label{def:representation}
    A \emph{representation} of a group $(G,\ \cdot)$ is a pair $(\rho, V)$ where $V$ is a vector space and $\rho:G\mapsto \mathrm{GL}(V)$ is a group homomorphism, i.e., for all $g,h\in G$, $\rho(g\cdot h) = \rho(g)\rho(h)$. If $V$ is equipped with an inner product and if for all $g\in G$ and all $u,v\in V$,  $\langle\rho(g)v,\rho(g)w\rangle =\langle u, v\rangle$, $\rho$ is \emph{unitary}.
\end{definition}
\begin{remark}
    Throughout this paper, we use the shorthand $G$ to refer to the group $(G,\cdot )$ and $\rho$ to refer to a representation $(\rho,\ V)$.
\end{remark}
To illustrate Definition~\ref{def:representation}, a representation of $C_n$ is given by the complex roots of unity, $\rho(k) = \exp\left(\frac{2\pi i }{n}k\right)$, on the complex one-dimensional vector space $V= \mathbb{C}$. Every group also admit the \emph{trivial} representation: $\rho_0(g)=1$ for all $g \in G$. There is a specific subset of these representations called the irreducible representations, \emph{irreps} for short, being those that can not be expressed in a more compact form. The irreps are fundamental objects of group theory since they allow us to define an invertible Fourier transform on finite groups. The irreps are therefore needed to define the $G$-Bispectrum -- i.e., the Fourier transform of the $G$-TC. The notion of irreps is derived from that of a $G$-invariant subspace, which we recall in Definition~\ref{def:invariantspace}.
\begin{definition}\label{def:invariantspace}
    Given a representation $(\rho, V)$, a subspace $W\subseteq V$ is $G$-invariant if $\rho(g)w\in W$ for all $g\in G,\ w\in W$.
\end{definition}
The formal definition of the irreps is then stated as the representations with no non-trivial invariant subspace.
\begin{definition}
    A non-zero representation $(\rho,V)$ of a group $G$ is irreducible if the only $G$-invariant subspaces of $V$ are $\{0\}$ and $V$ itself.
\end{definition}
A single group acting on different spaces will have different representations. However, one can reveal the similarity between these representations by the mean of an equivalence relation.
\begin{definition}\label{def:equivalence}
    Two representations $(\rho,V)$ and $(\varphi, W)$ are \emph{equivalent} if there exists an isomorphism $T:V\mapsto W$ such that for all $g\in G$, $\rho(g) T = T \varphi(g)$.
\end{definition}
For the interested reader, the invertibility property of the Fourier transform is a consequence of the concepts of Pontryagin duality (commutative groups) and Tannaka-Krein duality (non-commutative groups); see, e.g., \citep{joyal_ross_tannaka}. 
Our proofs will also rely on the notion of generating set of $G$, which we introduce here. 
\begin{definition}
    A \emph{generating set} $S$ of a group $(G, \cdot)$ is a subset $S\subset G$ such that every $g\in G$ can be expressed as a finite combination of the elements in $S$ and their inverses under the group action $\cdot$.
\end{definition}
\begin{remark}
    It can be shown that every group $G$ of size $|G|$ has a generating set of size at most $\log_2 |G|$.
\end{remark}

\paragraph{Group Actions} A group $(G,\ \cdot)$ represents a set of transformations such as rotations that can act on data such as images.
We define formally how groups can indeed transform datasets through the concept of group action. 
\begin{definition}\label{def:groupaction}
    Given a group $(G,\ \cdot)$, a \emph{group action} $\alpha:G\times X\mapsto X$ is a function satisfying i) Identity: $\alpha(e, x)=x$, ii) Compatibility: $\alpha(h,\alpha(g,x))=\alpha(h\cdot g, x)$ for any $x \in X$ and $h, g \in G$ and where $e$ is the identity of $G$.
\end{definition}


Processing operations and neural networks can be designed so that they respect group actions: specifically, a group acting on the input (e.g., rotating an input image) should yield a group action on the output (e.g., a rotation of the output feature map). This is the notion of $G$-\emph{equivariance}. 
\begin{definition}
    A function $\psi:X\mapsto Y$ is $G$-\emph{equivariant} if $\psi(\alpha_1(g,x)) = \alpha_2(g,\psi(x))$ for all $x\in X$ and all $g\in G$, where $\alpha_1$ and $\alpha_2$ are group actions on $X$ and $Y$, respectively.
\end{definition}
For example, the $G$-convolution layer is $G$-equivariant by design  \citep{cohenc16}. An important problem in signal processing and deep learning is to achieve \textit{invariance} to nuisance factors not relevant for the task. Many of these factors are describable as group actions (e.g. rotations, translations, scaling). Thus, we want processing methods and machine learning models to be $G$-invariant:
\begin{definition}
    A function $\psi:X\mapsto Y$ is $G$-\emph{invariant} if $\psi(\alpha(g,x)) = \psi(x)$ for all $x\in X$ and all $g\in G$.
\end{definition}
For example, the Max $G$-pooling ($\max_{g\in G}\Theta(g)$) traditionally follows a $G$-convolutional layer to remove the equivariance of the convolution and achieve $G$-invariance. A $G$-CNN is a neural network that consists of $G$-convolutional layers and a pooling/invariance operation. The main applications of our proposed selective $G$-Bispectrum operation is to act as a $G$-invariant pooling layer, that can conveniently replace the classical Max $G$-Pooling layer of $G$-CNNs, as shown in the rest of the paper.
\begin{figure}
    \centering
    \includegraphics[width = 7cm]{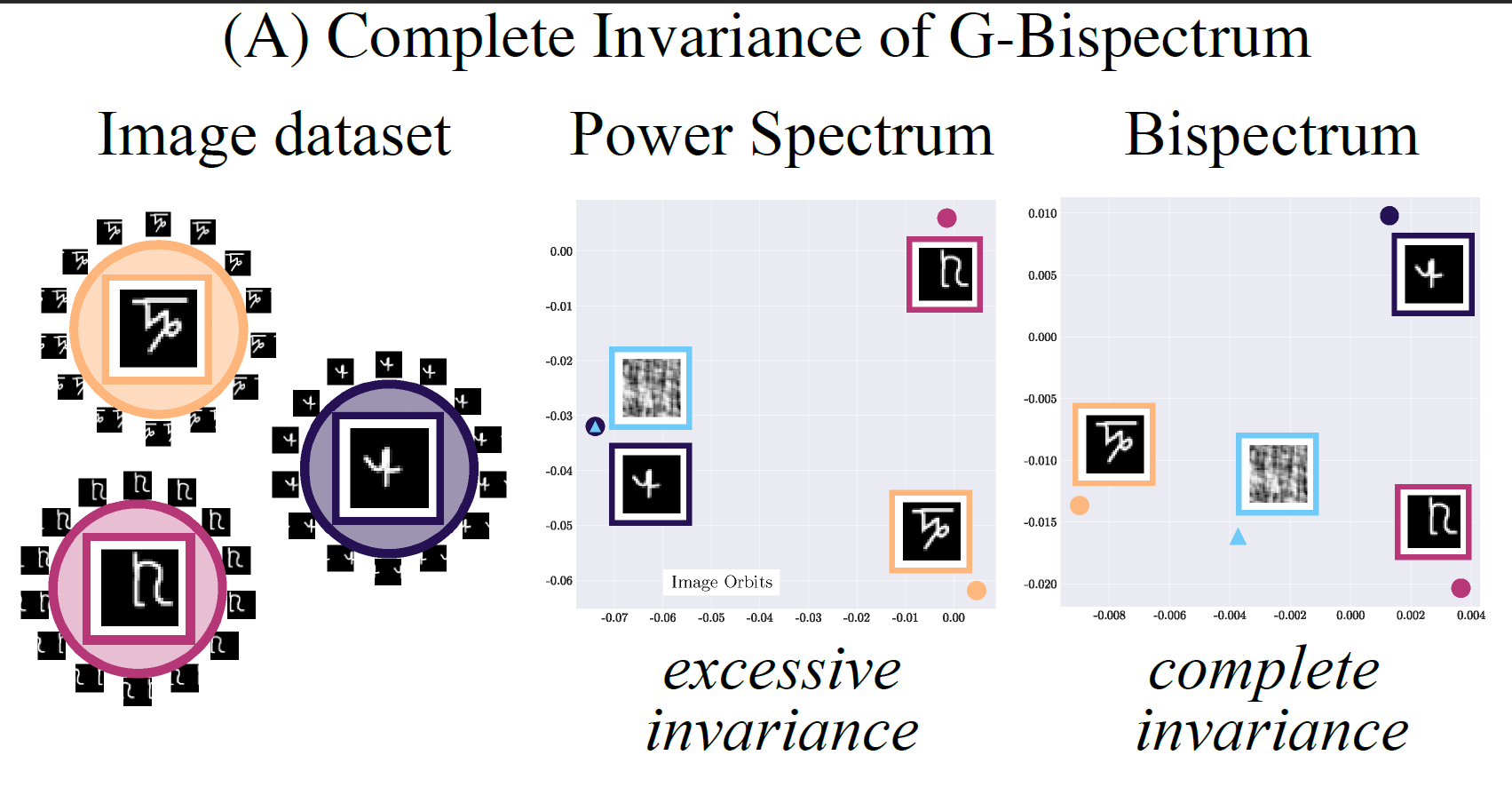}
    \caption{Illustration of the concepts of excessive and complete invariance to a group action. With the excessive invariance, samples from different classes can be mapped to the same output.}
    \label{fig:sophias_figure}
\end{figure}

\paragraph{Clebsh-Jordan matrices}Given a group $(G,\cdot)$ and a family of unitary irreps $\{\rho_i\}_{i=0}^{|\text{Irreps}|-1}$, the Clebsh-Jordan matrix is analytically defined for each pair $\rho_1, \rho_2$ as:
\begin{equation}
    (\rho_1 \otimes \rho_2)(g) = C_{\rho_1,\rho_2} \Big[ \bigoplus_{\rho \in\mathcal{R}} \rho(g) \Big] C_{\rho_1,\rho_2}^\dagger.
    \label{eqn:non-commutative-bs}
\end{equation}

\paragraph{$G$-Bispectrum for Commutative Finite Groups}

The computation of the $G$-Bispectrum simplifies for commutative groups compared to Theorem~\ref{def:Bispectrum}, as recalled below.

\begin{theorem}{\citep{kakarala_finite_groups}}\label{thm:commutative_Bispectrum}
    If $(G,\ \cdot)$ is a commutative group and $\Theta:G\rightarrow \mathbb{R}$, the $G$-Bispectrum can be computed as 
    \begin{equation}\label{eq:com_bsp}
        \beta(\Theta)_{\rho_1,\rho_2} = \mathcal{F}(\Theta)_{\rho_1}\mathcal{F}(\Theta)_{\rho_2}\mathcal{F}(\Theta)_{\rho_1\otimes\rho_2}^\dagger.
    \end{equation}
\end{theorem}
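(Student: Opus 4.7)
The plan is to derive the commutative formula from the general bispectrum expression in Theorem~\ref{def:Bispectrum} by exploiting the fact that all irreducible representations of a finite commutative group are one-dimensional. This is a classical consequence of Schur's lemma: for abelian $G$, any irrep $\rho: G \to \mathrm{GL}(V)$ has $\rho(g)$ commuting with every $\rho(h)$ and hence with $\rho(G)$, so every $\rho(h)$ is a scalar on each irreducible component, forcing $\dim V = 1$. Consequently $\mathcal{F}(\Theta)_{\rho_i} \in \mathbb{C}$ is a scalar for every irrep.

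Given this, I would plug into Theorem~\ref{def:Bispectrum} and simplify factor by factor. First, the Kronecker product of two scalars is ordinary multiplication, so $\mathcal{F}(\Theta)_{\rho_1} \otimes \mathcal{F}(\Theta)_{\rho_2} = \mathcal{F}(\Theta)_{\rho_1}\mathcal{F}(\Theta)_{\rho_2}$. Second, for 1-dimensional $\rho_1,\rho_2$ the tensor product representation $\rho_1\otimes\rho_2$ is the pointwise product $g\mapsto \rho_1(g)\rho_2(g)$, which is again 1-dimensional and therefore already irreducible. Hence the direct-sum decomposition collapses to the single term $\mathcal{F}(\Theta)_{\rho_1\otimes\rho_2}^\dagger$. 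Third, the Clebsch-Gordan matrix $C_{\rho_1,\rho_2}$ is the $1\times 1$ unitary intertwining $\rho_1\otimes\rho_2$ with itself, i.e.\ a unit-modulus scalar; the sandwich $C_{\rho_1,\rho_2}(\cdot)C_{\rho_1,\rho_2}^\dagger$ therefore reduces to the identity. Putting the three simplifications together yields the claimed formula.

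As a cross-check and alternative route, I would also sketch a direct derivation starting from $\beta(\Theta) = \mathcal{F}(T(\Theta))$ on $G\times G$. Expanding
\[
\beta(\Theta)_{\rho_1,\rho_2} = \sum_{g,g_1,g_2}\Theta(g)\Theta(g\cdot g_1)\Theta(g\cdot g_2)\,\overline{\rho_1(g_1)}\,\overline{\rho_2(g_2)},
\]
substituting $h_i = g\cdot g_i$ and using $\rho_i(g^{-1}h_i)=\rho_i(g)^{-1}\rho_i(h_i)=\overline{\rho_i(g)}\,\rho_i(h_i)$ in the commutative unitary case, the triple sum factors as
\[
\Bigl(\sum_g \Theta(g)\,\rho_1(g)\rho_2(g)\Bigr)\Bigl(\sum_{h_1}\Theta(h_1)\overline{\rho_1(h_1)}\Bigr)\Bigl(\sum_{h_2}\Theta(h_2)\overline{\rho_2(h_2)}\Bigr),
\]
and since $\Theta$ is real the first factor equals $\mathcal{F}(\Theta)_{\rho_1\otimes\rho_2}^\dagger$ while the other two are $\mathcal{F}(\Theta)_{\rho_1}$ and $\mathcal{F}(\Theta)_{\rho_2}$.

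The calculation is essentially routine once the 1-dimensionality of the irreps is invoked; the only mild subtlety is handling the Clebsch-Gordan factors, since Theorem~\ref{def:Bispectrum} does not a priori fix them uniquely in the 1-dimensional case. I would address this by noting explicitly that any admissible $C_{\rho_1,\rho_2}$ is a unimodular scalar and so drops out of the two-sided product, making the final formula independent of that choice.
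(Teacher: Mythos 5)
Your proposal is correct. Note that the paper itself offers no proof of this statement\textemdash it is quoted as a known result of Kakarala\textemdash so there is no in-paper argument to compare against; both of your routes are valid. The specialization of Theorem~\ref{def:Bispectrum} is clean: one-dimensionality of the irreps of a finite abelian group makes the Kronecker product an ordinary product, makes $\rho_1\otimes\rho_2$ already irreducible so the direct sum collapses to a single term, and reduces $C_{\rho_1,\rho_2}$ to a unimodular scalar that cancels in the two-sided conjugation. Your direct computation from $\beta(\Theta)=\mathcal{F}(T(\Theta))$ also checks out, and you correctly flag the one place where the reality of $\Theta$ is needed, namely to identify $\sum_g\Theta(g)\rho_1(g)\rho_2(g)$ with $\mathcal{F}(\Theta)_{\rho_1\otimes\rho_2}^\dagger$.
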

For commutative groups, the $G$-bispectral coefficients are complex scalars \citep{steinberg2011representation}. 

\section{Indeterminacy of $G$-Bispectrum inversion problem}\label{sec:indeterminacy}
It is important to state precisely which information we can possibly retrieve from the $G$-Bispectrum.
A consequence of the $G$-invariance of $\beta(\Theta)$ is that the $G$-Bispectrum inversion problem is ill-posed. Recall that $G$-invariance means that for all $h\in G$, we have $\beta(\alpha(h,\Theta))_{\rho_1,\rho_2} =\beta(\Theta)_{\rho_1,\rho_2}$. Given a function $\Theta:G\mapsto\mathbb{R}$, a possible definition for the group action on $\Theta$ is given by $\alpha(h, \Theta(g)) =\Theta(h^{-1}\cdot g)$ for all $h\in G$ (see, e.g., \citep{cohenc16}). Therefore, for all $h\in G$, we have
\begin{align*}
        \mathcal{F}(\alpha(h,\Theta))_\rho &= \sum_{g\in G} \alpha(h, \Theta(g)) \rho(g)^\dagger\\
        &=\rho(hh^{-1})\sum_{g\in G} \Theta(h^{-1}g) \rho(g)^\dagger\\
        &= \rho(h) \mathcal{F}(\Theta)_\rho,
    \end{align*}
which shows that the $G$-Fourier transform $\mathcal{F}(\Theta)$ is $G$-equivariant. In consequence, recovering $\mathcal{F}(\Theta)$ from $\beta(\Theta)$ can at best be done up to an unknown factor $\rho(h)$. Moreover, as explained in \citep{kakarala_thesis, kondor08}, the indeterminacy is not limited to $\rho(h)$. Take for instance $C_n$. An indeterminacy factor $\rho_k(h) = \exp\left(\frac{2\pi i h k}{n}\right)$ corresponds to a translation of $h\in C_n$ of the signal, $\widetilde{\Theta}(g)=\Theta(g+h)$. \citep{kakarala_thesis} showed that $h$ is not restricted to $C_n=\mathbb{Z}/n\mathbb{Z}$: it may take any value in $[0,n]$. The Bispectrum is not only invariant to a discrete set of rotations, but to the continuous group of rotations $\mathrm{SO}(2)$. The factor can thus be written $\exp(i \varphi k)$ where $\varphi\in[0, 2\pi)$.

\section{Selective $G$-Bispectrum inversion: known results}\label{sec:commutative_alg}
From now on, we assume that the Fourier transform $\mathcal{F}(\Theta)$ only features \textbf{non-zero elements}, or \textbf{invertible matrices} in the case of non-scalar Fourier transform. This assumption is supported by the zero probability of encountering this corner case.

\paragraph{Cyclic groups $C_n$} We start with $(G,\ \cdot)=(\mathbb{Z}/n\mathbb{Z},\ +\ \mathrm{mod}\ n)=:C_n$. Recall that the irreps are given by $\rho_k(g)=\exp\left(\omega_k g\right)$ where $\omega_k:=\frac{2\pi i k }{n}$ for $k\in\mathbb{Z}/n\mathbb{Z}$ (see, e.g., \citep{steinberg2011representation}).
\begin{theorem}{\citep{kakarala_finite_groups}}\label{thm:cyclic_inversion}
For cyclic groups $C_n$, the $C_n$-Bispectrum can be inverted using $|G|=n$ coefficients if $\mathcal{F}(\Theta)_\rho\neq 0$ for all $\rho \in C_n$.
\end{theorem}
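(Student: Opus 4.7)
My plan is constructive: exhibit an explicit selection of $n$ bispectral coefficients together with a closed-form inversion procedure. Since $C_n$ is commutative with one-dimensional irreps $\rho_k(g) = \exp(2\pi i k g / n)$ whose tensor products satisfy $\rho_j \otimes \rho_k = \rho_{j+k \bmod n}$, Theorem~\ref{thm:commutative_Bispectrum} reduces each bispectral entry to the scalar identity $\beta(\Theta)_{j,k} = \widehat\Theta_j \, \widehat\Theta_k \, \overline{\widehat\Theta_{j+k \bmod n}}$, where I abbreviate $\widehat\Theta_k := \mathcal{F}(\Theta)_{\rho_k}$. The nonvanishing hypothesis $\widehat\Theta_k \neq 0$ will guarantee that all the divisions below are legal.

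I would select the $n$ coefficients $\beta(\Theta)_{0,0}$ together with $\beta(\Theta)_{1,k}$ for $k = 0, 1, \ldots, n-2$. From $\beta(\Theta)_{0,0} = \widehat\Theta_0^3$, and the fact that $\widehat\Theta_0 = \sum_g \Theta(g)$ is real, a single real cube root produces $\widehat\Theta_0$. Then the identity $\beta(\Theta)_{1,0} = |\widehat\Theta_1|^2 \widehat\Theta_0$ yields $|\widehat\Theta_1|$; at this point I would assign $\widehat\Theta_1$ an arbitrary phase $\phi \in [0, 2\pi)$. This is the single free parameter of the reconstruction, and its indeterminacy is precisely the group-action ambiguity documented in Appendix~\ref{sec:indeterminacy}.

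With $\widehat\Theta_0$ and $\widehat\Theta_1$ fixed, the remaining $n-2$ selected coefficients drive the recursion
$$\widehat\Theta_{k+1} = \overline{\beta(\Theta)_{1,k} \,/\, \bigl(\widehat\Theta_1 \, \widehat\Theta_k\bigr)}, \qquad k = 1, 2, \ldots, n-2,$$
which uniquely reconstructs $\widehat\Theta_2, \widehat\Theta_3, \ldots, \widehat\Theta_{n-1}$. An inverse $G$-Fourier transform then produces a signal $\widetilde\Theta : G \mapsto \mathbb{R}$.

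The only non-routine step is verifying that $\widetilde\Theta$ agrees with $\Theta$ up to a valid group action, which is what Theorem~\ref{thm:bsp_invariance} requires. A straightforward induction on $k$ shows that scaling $\widehat\Theta_1$ by $e^{i\phi}$ propagates through the recursion as $\widehat\Theta_k \mapsto \widehat\Theta_k \cdot e^{i k \phi}$ for every $k$, and this is exactly the Fourier-side effect of a continuous shift of $\Theta$ by the element of $\mathrm{SO}(2) \supset C_n$ whose character on $\rho_1$ equals $e^{i\phi}$, as described in Appendix~\ref{sec:indeterminacy}. Thus the $n$ selected coefficients fully determine $\Theta$ up to group action, which is the claimed inversion property. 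The main obstacle, should the argument need refinement, is ensuring that the cube root in Step~1 and the square root in Step~2 produce the correct signs; both can be fixed canonically because $\widehat\Theta_0$ is real and $|\widehat\Theta_1|$ is nonnegative by definition.
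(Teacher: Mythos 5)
Your proposal follows the same route as the paper's proof: recover $\widehat\Theta_0$ from $\beta_{0,0}$, recover $|\widehat\Theta_1|$ from $\beta_{0,1}$ with an arbitrary phase, and then run the recursion $\widehat\Theta_{k+1} = \bigl(\beta_{1,k}/(\widehat\Theta_1\widehat\Theta_k)\bigr)^\dagger$ driven by the generating set $\{1\}$ of the dual group. The coefficient count and the selection of coefficients are identical to the paper's.

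There is, however, one genuine gap at the end. You assert that the inverse $G$-Fourier transform of the reconstructed array ``produces a signal $\widetilde\Theta : G \mapsto \mathbb{R}$'' and that the free phase $\phi$ is ``precisely the group-action ambiguity.'' Neither holds for a generic choice of $\phi\in[0,2\pi)$. Realness of a signal on $C_n$ is equivalent to the conjugate symmetry $\widehat\Theta_{n-k}=\overline{\widehat\Theta_k}$, and your induction shows the recursion maps $\widehat\Theta_k \mapsto \widehat\Theta_k e^{ik\phi}$; this preserves conjugate symmetry only when $e^{in\phi}=1$, i.e.\ when $\phi$ is an integer multiple of $2\pi/n$. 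For any other $\phi$ the reconstructed array is not the Fourier transform of any real signal on $C_n$, and the corresponding ``shift'' is an element of $\mathrm{SO}(2)$ rather than of $C_n$, so it does not realize the conclusion $\widetilde\Theta=\alpha(g,\Theta)$ with $g\in G$ demanded by the completeness statement. The paper closes this gap with an explicit final step: after the recursion, find the unique $\varphi\in[0,\tfrac{2\pi}{n})$ such that multiplying $\widehat\Theta_k$ by $e^{i\varphi k}$ makes the inverse DFT real-valued; this snaps the arbitrary phase onto a valid discrete shift (and is precisely the point where the paper improves on the mere existence argument in the cited reference). Your stated worry about sign choices in the cube and square roots is not the real obstacle; the phase-correction step is. Adding it makes your argument complete.
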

\begin{proof}
The Fourier coefficient associated to the trivial representation $\mathcal{F}(\Theta)_{\rho_0}$, is uniquely determined and can be recovered from Theorem~\ref{thm:commutative_Bispectrum} by identifying phase and modulus:
\begin{equation}\label{eq:trivial_coef}
    \beta(\Theta)_{\rho_0,\rho_0} =|\mathcal{F}(\Theta)_{\rho_0}|^3\exp\left(i\ \mathrm{arg}(\mathcal{F}(\Theta)_{\rho_0})\right).
\end{equation}

We proceed using Pontryagin duality: the irreps $\{\rho_k\}_{k=1}^n$, form a group $\widehat{G}$ themselves, with $\widehat{G} = G$. In the case of the cyclic group $C_n$, notice that for all $j,k\in\mathbb{Z}/n\mathbb{Z}$, $\rho_j\otimes \rho_k = \rho_{j+k}$. Leveraging \eqref{eq:com_bsp}, we can use $\beta(\Theta)_{\rho_0,\rho_1}$ to recover $\mathcal{F}(\Theta)_{\rho_1}$: 

\begin{equation}\label{eq:trivial_coeff_cyclic}
    |\mathcal{F}(\Theta)_{\rho_1}|^2 =\frac{\beta(\Theta)_{\rho_0,\rho_1}}{\mathcal{F}(\Theta)_{\rho_0}}.
\end{equation}

Equation~\eqref{eq:trivial_coeff_cyclic} leaves an indeterminacy on the phase of $\mathcal{F}(\Theta)_{\rho_1}$. This corresponds to the indeterminacy factor $\exp(i \varphi)$, $\phi\in[0,2\pi)$ of Appendix~\ref{sec:indeterminacy}. It is inherited from the $G$-invariance  of $\beta(\Theta)$ (it is not injective, hence you cannot distinguish inputs that have the same $G$-Bispectrum). For now, let $\arg(\mathcal{F}(\Theta)_{\rho_1})=0$. The key to recover all the other Fourier coefficients is to notice that $S = \{1\}$ is a generating set of $\widehat{G} =\mathbb{Z}/n\mathbb{Z}$. Therefore, computing sequentially
\begin{equation}
    \mathcal{F}(\Theta)_{\rho_{k+1}} = \left(\frac{\beta(\Theta)_{\rho_1,\rho_k}}{\mathcal{F}(\Theta)_{\rho_1}\mathcal{F}(\Theta)_{\rho_k}}\right)^\dagger,
\end{equation}
for $k=1,2,...,n-2$ recovers completely $\mathcal{F}(\Theta)$. We are not done yet because the phase we fixed before is not a valid shift. A valid phase shift for $\mathcal{F}(\Theta)_{\rho_1}$ is such that the shift w.r.t the original signal has the form $\exp\left(\frac{2\pi i h}{n}\right)$ for $h\in\mathrm{N}$. This valid phase shift is easy to find. It is the unique $\varphi\in[0,\frac{2\pi}{n})$ such that, if we define $\mathcal{F}(\widetilde{\Theta})_{\rho_k} = \exp(\varphi k) \mathcal{F}(\Theta)_{\rho_{k}}$ for all $k\in\mathbb{Z}/n\mathbb{Z}$, then we have $\mathcal{F}^{-1}(\mathcal{F}(\widetilde{\Theta}))\in\mathbb{\mathbb{R}}^n$ (i.e., with no imaginary part). Note that this is an explicit method to recover a valid phase while \citep{kakarala_finite_groups} relies on its existence without explicit method to find it.
The method is summarized in Algorithm~\ref{alg:cyclic_alg} and illustrated in Figure \ref{fig:alg1}. In consequence only the following $G$-bispectral coefficients are needed for completeness: $\beta(\Theta)_{\rho_0,\rho_0}, \beta(\Theta)_{\rho_0,\rho_1}$ and $ \beta(\Theta)_{\rho_1,\rho_k}$ for $k=1,2,...,n-2$. This makes a total of $n=|G|$ coefficients. We summarize this result in Theorem~\ref{thm:cyclic_inversion}.
\end{proof}

\begin{figure}
\begin{minipage}[c]{0.54\textwidth}
  \begin{algorithm}[H]
    \caption{Bispectrum inversion on $\mathbb{Z}/n\mathbb{Z}$ 
 \citep{kakarala_finite_groups}}
    \label{alg:cyclic_alg}
    \small
    \begin{algorithmic}[1]
        \STATE \textbf{Input}: $\beta(\Theta)_{\rho_0,\rho_0}, \beta(\Theta)_{\rho_0,\rho_1}$ and $ \beta(\Theta)_{\rho_1,\rho_k}$ for $k=1,2,...,n-2$.
        \STATE Compute $|\mathcal{F}(\Theta)_{\rho_0}|=\left(\beta(\Theta)_{\rho_0,\rho_0}\right)^{\frac{1}{3}}$ and $\mathrm{arg}(\mathcal{F}(\Theta)_{\rho_0})=\mathrm{arg}(\beta(\Theta)_{\rho_0,\rho_0})$.
        \STATE Compute $|\mathcal{F}(\Theta)_{\rho_1}| =\left(\frac{\beta(\Theta)_{\rho_0,\rho_1}}{\mathcal{F}(\Theta)_{\rho_0}}\right)^{\frac{1}{2}}$ and set $\mathrm{arg}(\mathcal{F}(\Theta)_{\rho_1})=0$.
        \FOR{$k=1,2,...,n-2$}
            \STATE Compute $\mathcal{F}(\Theta)_{\rho_{k+1}} = \left(\frac{\beta(\Theta)_{\rho_1,\rho_k}}{\mathcal{F}(\Theta)_{\rho_1}\mathcal{F}(\Theta)_{\rho_k}}\right)^\dagger$.
        \ENDFOR
        \STATE Find $\varphi\in [0,\frac{2\pi}{n})$ such that $\mathcal{F}^{-1}(\mathcal{F}(\widetilde{\Theta}))\in\mathbb{R}^n$ where $\mathcal{F}(\widetilde{\Theta})_{\rho_k} = \exp(\phi k)\mathcal{F}(\Theta)_{\rho_{k}}$.
        \STATE \textbf{Return} $\mathcal{F}(\widetilde{\Theta})$ (up to group action).
    \end{algorithmic}
\end{algorithm}
\end{minipage}\hfill%
\begin{minipage}[c]{0.4\textwidth}
    \includegraphics[width = 6cm]{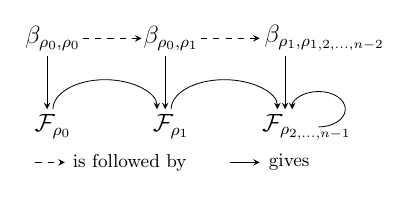}
    \caption{Illustration of Algorithm \ref{alg:cyclic_alg}. The Bispectrum coefficients allow to recover the Fourier transform sequentially, up to group action. First, $\beta_{\rho_0,\rho_0}$ gives $\mathcal{F}_{\rho_{0}}$, which, combined with $\beta_{\rho_0,\rho_1}$ gives $\mathcal{F}_{\rho_{1}}$ etc. }
    \label{fig:alg1}
\end{minipage}
\end{figure}

\section{Selective $G$-Bispectrum inversion: commutative groups}\label{app:commutative_inversion}

Here, we prove the theorem stated in the main text and recalled below:
\begin{theorem}
For finite commutative groups $G$, the $G$-Bispectrum can be inverted using $|G|$ coefficients if $\mathcal{F}(\Theta)_\rho \neq 0$ for all $\rho \in G$.
\end{theorem}

Specifically, we extend Algorithm~\ref{alg:cyclic_alg} to all commutative groups, based on the Theorem~\ref{thm:fundamentalthm}. That is, we design a method for the direct sum of finitely many cyclic groups.
\begin{theorem}{(see, e.g., \citep{Norman_2012})}\label{thm:fundamentalthm}
    Every finite commutative group $G$ is isomorphic to a finite direct sum of cyclic groups: $G\cong \bigoplus_{l=1}^L \mathbb{Z}/n_l\mathbb{Z}$ where $L\in\mathbb{N}$ and $n_l\in\mathbb{N}$ for $l=1,2,...,L$.
\end{theorem}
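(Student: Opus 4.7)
The plan is to reduce the commutative case to the cyclic case handled by Algorithm~\ref{alg:cyclic_alg} via the structure theorem~\ref{thm:fundamentalthm}. Writing $G \cong \bigoplus_{l=1}^L \mathbb{Z}/n_l\mathbb{Z}$, Pontryagin duality gives $\widehat{G} \cong G$, and every irrep of $G$ can be indexed by an $L$-tuple $k = (k_1,\ldots,k_L)$ with $k_l \in \mathbb{Z}/n_l\mathbb{Z}$, with $\rho_k \otimes \rho_{k'} = \rho_{k+k'}$ under componentwise addition. The trivial irrep corresponds to $k = (0,\ldots,0)$, and the standard basis vectors $e_1,\ldots,e_L$ of $\widehat{G}$ form a canonical generating set of size $L$. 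Since $\beta(\Theta)_{\rho_1,\rho_2} = \mathcal{F}(\Theta)_{\rho_1}\mathcal{F}(\Theta)_{\rho_2}\mathcal{F}(\Theta)_{\rho_1\otimes\rho_2}^\dagger$ by Theorem~\ref{thm:commutative_Bispectrum}, every bispectral coefficient relates three Fourier scalars, so the goal is to pick a subset of coefficients whose associated equations uniquely determine $\mathcal{F}(\Theta)$ up to the residual $G$-indeterminacy described in Appendix~\ref{sec:indeterminacy}.

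The inversion proceeds in three stages. First, I would recover $\mathcal{F}(\Theta)_{\rho_0}$ from $\beta(\Theta)_{\rho_0,\rho_0}$ exactly as in~\eqref{eq:trivial_coef}. Second, for each generator $e_l$, I would use $\beta(\Theta)_{\rho_0,\rho_{e_l}}$ to recover $|\mathcal{F}(\Theta)_{\rho_{e_l}}|$ and then \emph{fix} an arbitrary phase for $\mathcal{F}(\Theta)_{\rho_{e_l}}$ (this is the step where indeterminacy is introduced, one phase per generator, matching the $L$ degrees of freedom of the group $G$ acting by translation). Third, enumerate the remaining irreps $\rho_k$ in an order compatible with the generating set, so that at each step the index $k$ can be written as $k = k' + e_l$ for some previously processed $k'$ and some generator $e_l$; then $\beta(\Theta)_{\rho_{e_l},\rho_{k'}}$ yields $\mathcal{F}(\Theta)_{\rho_k}^\dagger$ directly, since the other two factors are known. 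By the nonvanishing hypothesis $\mathcal{F}(\Theta)_\rho \neq 0$, every division performed is legitimate.

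Counting coefficients: one coefficient for the trivial irrep, $L$ coefficients for the generators, and exactly one additional coefficient for each of the remaining $|G|-L-1$ non-trivial non-generator irreps, giving $|G|$ in total, which matches the claim. The final step mirrors the end of Algorithm~\ref{alg:cyclic_alg}: among the $L$-parameter family of phase choices made for the generators, I would identify the (generically unique) choice making $\mathcal{F}^{-1}(\mathcal{F}(\widetilde{\Theta}))$ real-valued on $G$, which pins down a representative of the signal's orbit under $G$.

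The main obstacle I anticipate is not the existence of a valid enumeration (a breadth-first traversal of the Cayley graph of $\widehat{G}$ with respect to $\{e_1,\ldots,e_L\}$ suffices) but rather verifying cleanly that the $L$ unspecified phases at the generator step correspond \emph{exactly} to the group-action ambiguity, so that the recovered $\widetilde{\Theta}$ is guaranteed to satisfy $\widetilde{\Theta} = \alpha(h,\Theta)$ for some $h \in G$ rather than merely agreeing on the selective coefficients. This requires showing that the equivariance identity $\mathcal{F}(\alpha(h,\Theta))_{\rho_k} = \rho_k(h)\,\mathcal{F}(\Theta)_{\rho_k}$ (derived in Appendix~\ref{sec:indeterminacy}) factors across the direct-sum decomposition, so that choosing one phase per generator is equivalent to choosing one translation parameter per cyclic factor, and then invoking the reality constraint to discretize these continuous phase choices back into elements of $G$ as in the cyclic case.
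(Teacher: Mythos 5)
Your proposal does not prove the statement it was supposed to prove. The statement is Theorem~\ref{thm:fundamentalthm}, the classification (structure) theorem for finite abelian groups: every finite commutative $G$ is isomorphic to $\bigoplus_{l=1}^L \mathbb{Z}/n_l\mathbb{Z}$. Your very first sentence \emph{assumes} this decomposition (``Writing $G \cong \bigoplus_{l=1}^L \mathbb{Z}/n_l\mathbb{Z}$\dots'') and then goes on to prove a different result of the paper, namely the inversion of the $G$-Bispectrum for finite commutative groups using $|G|$ coefficients (Theorem~\ref{thm:invert_commutative} in Appendix~\ref{app:commutative_inversion}). As an argument for that other theorem your sketch is essentially the paper's own: trivial irrep first, then moduli of the generator coefficients with arbitrary phases, then a traversal of $\widehat{G}$ along the generating set $\{e_1,\ldots,e_L\}$, then the reality constraint to discretize the $L$ residual phases. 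But none of this establishes the structure theorem; it presupposes it.

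For the record, the paper does not prove Theorem~\ref{thm:fundamentalthm} either\textemdash it is a classical result cited to \citet{Norman_2012}. If you were to supply a proof, the standard routes are: (i) primary decomposition (write $G$ as the direct sum of its Sylow $p$-subgroups, then decompose each abelian $p$-group by induction, splitting off a cyclic subgroup generated by an element of maximal order as a direct summand); or (ii) present $G$ as $\mathbb{Z}^L/A\mathbb{Z}^L$ for an integer relations matrix $A$ and put $A$ in Smith normal form. Nothing in your Fourier-analytic argument substitutes for either of these; Pontryagin duality and the tensor-product structure of the irreps are \emph{consequences} of the decomposition, not a way to derive it.
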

For all $\kbf\in G$ ($\kbf$ is integer-valued vector of length $L$), the irreps $\rho_\kbf:G\mapsto \mathbb{C}$ are given by $
    \rho_\kbf(g) = \prod_{l=1}^L\exp(\frac{2\pi i\kbf_l}{n_l} g_l)$.
The number of irreps is $|G| = \prod_{l=1}^L n_l$. We detail and prove in Theorem~\ref{thm:invert_commutative} the procedure to invert the $G$-Bispectrum on commutative groups. The procedure is summarized in Algorithm~\ref{alg:commutative_alg} where we use the two following notations.\begin{enumerate}
    \item $\ebf^l$ denotes the basis vector in $\mathbb{Z}^L$ such that $\ebf^l_k=1$ if $k=l$ and $\ebf^l_k=0$ otherwise.
    \item $K^l:=\{t\ebf^l +\kbf \ | \ t=1,2...,n_l-1\text{ and } \kbf\in K^{l-1}$\} for $l=1,2,...,L$.
\end{enumerate}
The sets $K^l$ are a recursively constructed such that $K^L\cong G$. For $G = (\mathbb{Z}/3\mathbb{Z})^3$, the sets $K_1,K_2,K_3$ are represented in Figure~\ref{fig:ksets}.
\begin{figure}[h]
    \centering
    \includegraphics[width = 5cm]{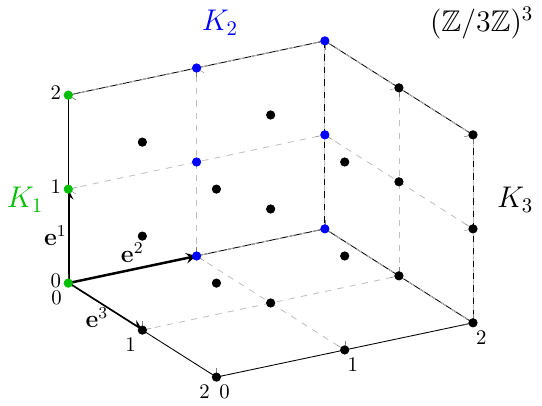}
    \caption{Representation of the sets $K_1,K_2$ and $K_3$ for $G = (\mathbb{Z}/3\mathbb{Z})^3$.}
    \label{fig:ksets}
\end{figure}
\begin{theorem}\label{thm:invert_commutative}
For finite commutative groups $G$, the $G$-Bispectrum can be inverted using $|G|$ coefficients if $\mathcal{F}(\Theta)_\rho \neq 0$ for all $\rho \in G$.
\end{theorem}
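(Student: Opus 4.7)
The plan is to generalize Algorithm~\ref{alg:cyclic_alg} from $C_n$ to an arbitrary commutative group by exploiting the structural decomposition $G \cong \bigoplus_{l=1}^L \mathbb{Z}/n_l\mathbb{Z}$ given by Theorem~\ref{thm:fundamentalthm}. The dual group $\widehat{G}$ has the same form, and the natural generating set $\{\ebf^1,\dots,\ebf^L\}$ drives the recovery: because $\rho_{\kbf_1}\otimes\rho_{\kbf_2}=\rho_{\kbf_1+\kbf_2}$, the commutative bispectrum formula of Theorem~\ref{thm:commutative_Bispectrum} specializes to
\[
\beta(\Theta)_{\rho_{\kbf_1},\rho_{\kbf_2}}=\mathcal{F}(\Theta)_{\rho_{\kbf_1}}\,\mathcal{F}(\Theta)_{\rho_{\kbf_2}}\,\overline{\mathcal{F}(\Theta)_{\rho_{\kbf_1+\kbf_2}}},
\]
so whenever two of the three Fourier coefficients in such a triple are known, the bispectral coefficient determines the third.

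The recovery then proceeds in three stages. First, the trivial-irrep coefficient is obtained from $\beta(\Theta)_{\rho_0,\rho_0}=|\mathcal{F}(\Theta)_{\rho_0}|^{2}\,\overline{\mathcal{F}(\Theta)_{\rho_0}}$ by matching modulus and argument, exactly as in the cyclic proof (one bispectral coefficient). Second, for each generator $\ebf^l$, the identity $\beta(\Theta)_{\rho_0,\rho_{\ebf^l}}=\mathcal{F}(\Theta)_{\rho_0}\,|\mathcal{F}(\Theta)_{\rho_{\ebf^l}}|^{2}$ yields $|\mathcal{F}(\Theta)_{\rho_{\ebf^l}}|$, and we set each of the $L$ unknown phases arbitrarily to $0$; this is where the group-action indeterminacy explicitly appears (one free phase per independent cyclic factor), and it consumes $L$ further coefficients. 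Third, I would unroll over the nested sets $K^0\subseteq K^1\subseteq\cdots\subseteq K^L\cong G$: assuming all $\mathcal{F}(\Theta)_{\rho_\kbf}$ for $\kbf\in K^{l-1}$ are known, the coefficient $\beta(\Theta)_{\rho_{\ebf^l},\rho_{(t-1)\ebf^l+\kbf}}$ supplies the new unknown $\mathcal{F}(\Theta)_{\rho_{t\ebf^l+\kbf}}$ for $t=1,\dots,n_l-1$ and $\kbf\in K^{l-1}$. A straightforward induction on $l$ shows every irrep in $G$ is eventually reached, and counting the coefficients used yields $1+L+(|G|-1-L)=|G|$, matching the claim.

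The final step is to promote the arbitrarily chosen phases to a valid group action. Because the true signal $\Theta$ is real, $\mathcal{F}(\Theta)_{\rho_{-\kbf}}=\overline{\mathcal{F}(\Theta)_{\rho_{\kbf}}}$, and any ambiguity introduced by the stage-two phase choices takes the form of multiplicative characters $\rho_{\kbf}(h)=\prod_l \exp(2\pi i\kbf_l h_l/n_l)$ corresponding to shifts $h\in G$ (plus the continuous $L$-torus extension described in Appendix~\ref{sec:indeterminacy}). One then adjusts by the unique $(\varphi_1,\dots,\varphi_L)\in\prod_l[0,2\pi/n_l)$ such that the inverse $G$-Fourier transform of $\mathcal{F}(\widetilde\Theta)_{\rho_\kbf}:=\exp\!\bigl(i\sum_l\varphi_l \kbf_l\bigr)\mathcal{F}(\Theta)_{\rho_\kbf}$ is real-valued, exactly generalizing the single-phase fix from Algorithm~\ref{alg:cyclic_alg}. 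The output is then the Fourier transform of a signal that equals $\Theta$ up to a group action, which is the best one can hope for given the indeterminacy of Appendix~\ref{sec:indeterminacy}.

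The main obstacle is the phase-resolution step: one must show that the $L$-dimensional reality constraint on $\mathcal{F}^{-1}(\mathcal{F}(\widetilde\Theta))$ pins down the $L$ phases uniquely (modulo valid discrete shifts), rather than under- or over-determining them. This follows from the genericity assumption $\mathcal{F}(\Theta)_\rho\neq 0$ for all $\rho$, which guarantees that each $\varphi_l$ interacts nontrivially with at least one conjugate pair of Fourier coefficients and can thus be read off independently coordinate by coordinate. The bookkeeping for the nested sets $K^l$ and the proof that the induction covers all of $G$ is otherwise a routine application of the fact that $\{\ebf^l\}_{l=1}^L$ generates $\widehat G$.
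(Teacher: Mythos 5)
Your proposal is correct and follows essentially the same route as the paper's proof in Appendix~\ref{app:commutative_inversion}: decompose $G$ via Theorem~\ref{thm:fundamentalthm}, recover the trivial coefficient and the generator moduli with arbitrary phases, induct over the nested sets $K^l$ using $\rho_{\kbf_1}\otimes\rho_{\kbf_2}=\rho_{\kbf_1+\kbf_2}$, and fix the $L$ phases at the end via the reality constraint, for a total of exactly $|G|$ coefficients. The only differences are cosmetic (a slightly different but equally valid choice of which bispectral coefficient supplies each new Fourier coefficient in the induction).
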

\begin{proof}
Notice that for the commutative groups, we keep the property $\rho_{\kbf}\otimes\rho_{\kbf'}=\rho_{\kbf+\kbf'}$ where $(\kbf+\kbf')_l := \kbf_l+\kbf_l'\ \mathrm{mod} \ n_l$ for $l=1,2,...,L$. The first step is to obtain a generating set $S$ of the irreps is of size $L$. It is given by the usual basis vectors $S=\{\ebf^l\in \mathbb{Z}^L$ for $l=1,...,L\}$ where 
\begin{equation*}
    \ebf^l_k=\begin{cases}
        1 \text{ if } k=l,\\
        0 \text{ otherwise.}
    \end{cases}
\end{equation*}
By Theorem~\ref{thm:commutative_Bispectrum}, it is sufficient to have the Fourier coefficients associated to each generating element in $S$ to recover all the Fourier coefficients. Indeed, knowing $\mathcal{F}(\Theta)_{\rho_{\kbf_1}}$ and $\mathcal{F}(\Theta)_{\rho_{\kbf_2}}$ allows us to compute $\mathcal{F}(\Theta)_{\rho_{\kbf_1+\kbf_2}}$. By definition of the generating set $S$, we can thus recover $\mathcal{F}(\Theta)_{\rho_{\kbf}}$ for all $\kbf \in G$. The moduli of the coefficients $\mathcal{F}(\Theta)_{\rho_{\ebf^l}}$ for $l=1,2,...,L$ can be computed as follows:
\begin{equation}\label{eq:compute_generating}
    |\mathcal{F}(\Theta)_{\rho_{\ebf^l}}| =\left(\frac{\beta(\Theta)_{\rho_\mathbf{0},\rho_{\ebf^l}}}{\mathcal{F}(\Theta)_{\rho_\mathbf{0}}}\right)^\frac{1}{2},
\end{equation}
where $\mathcal{F}(\Theta)_{\rho_\mathbf{0}}$ is the Fourier coefficient of the trivial representation (computed as in~\eqref{eq:trivial_coef}). We claim that the phase can be fixed independently for each label $l$, thus $L$ times. This is because only one factor $h_l$ remains among the $L$ independent factors in $\mathbf{h}$:
\begin{align*}
    \mathcal{F}(\alpha(\mathbf{h},\Theta))_{\rho_{\ebf^l}} &= \rho_{\ebf^l}(\mathbf{h})\mathcal{F}(\Theta)_{\rho_{\ebf^l}}=\exp\left(\frac{2\pi i h_l}{n_l}\right)\mathcal{F}(\Theta)_{\rho_{\ebf^l}},
\end{align*}
for all $l=1,2,.,L$. Therefore, fixing an arbitrary phase in~\eqref{eq:compute_generating} only fixes $h_l$. Again, the indeterminacy factor $h_l$ is not restricted to $\mathbb{Z}/n_l\mathbb{Z}$ but can belong to $[0,n_l]$. We will have to solve this issue further. For now, we set the phase of $\mathcal{F}(\Theta)_{\rho_{\ebf^l}}$ to zero. Once the Fourier coefficients are known for all the generators of the group of the irreps $\{\rho_{\ebf^l}\}_{l=1}^L$, it remains to combine them to obtain all the elements in the groups and, consequently, all the associated Fourier coefficients.

At this point, it helps to consider the problem geometrically. Each irreps $\rho_\kbf$ can be associated to its integer coordinate $\kbf$ inside a hyper-rectangle in $\mathbb{R}^L$, whose length of edges is $n_l$ for $l=1,2,...,L$. We combine the coordinates $\ebf^l$ to obtain all the possible integer coordinates inside the hyper-rectangle. First, we can obtain the $L$ orthogonal edges of the hyper-rectangle. For $l=1,2,...,L$, $\rho_{\ebf^l}\otimes \rho_{\ebf^l}$ gives $\rho_{2\ebf^l}$, $\rho_{\ebf^l}\otimes \rho_{2\ebf^l}$ gives $\rho_{3\ebf^l}$, etc. This is in fact the procedure of Algorithm 1. Now, we combine the edges to generate the inside of the hyper-rectangle. We proceed iteratively. For $l=1,2,...,L$, we define $K^l:=\{t\ebf^l + \kbf \ | \ t=1,2,..,n_l-1\text{ and } \kbf\in K^{l-1}\}$ and $K^0:=\{\mathbf{0}\}$. This construction is such that $K^l\cong\bigoplus_{j=1}^l \mathbb{Z}/n_j\mathbb{Z}$ and $K^L=G$. We generate the missing Fourier coefficients by combining the ones associated to the generating set of $G$.
For $l=2,...,L$, compute 
\begin{equation}
    \mathcal{F}(\Theta)_{\rho_{t\ebf^l+\kbf}}=\left(\frac{\beta(\Theta)_{\rho_{t\ebf^l},\rho_{\kbf}}}{\mathcal{F}(\Theta)_{\rho_{t\ebf}}\mathcal{F}(\Theta)_{\rho_{\kbf}}}\right)^\dagger,
\end{equation}
for $t=1,2,...,n_l-1$, for all $\kbf\in K^{l-1}$. Intuitively for $L=3$, we obtain first an edge, then a face and finally the full parallelepiped. To conclude, we reproduce the procedure from Algorithm~\ref{alg:cyclic_alg} to find a valid phase shift $\varphi_l$ in each basis direction $\ebf^l$. The last step is then to compute, for all $\kbf\in G$, $$\mathcal{F}(\widetilde{\Theta})_{\rho_{\kbf}} =\mathcal{F}(\Theta)_{\rho_{\kbf}} \prod_{l=1}^L\exp(\phi_l k_l).$$  The procedure is summarized in Algorithm~\ref{alg:commutative_alg} and illustrated in Figure \ref{fig:alg2}. It shows that the bispectral coefficients needed for completeness are: $\beta(\Theta)_{\rho_0,\rho_0}$, $\beta(\Theta)_{\rho_0,t\rho_{\ebf}^l},\ \beta(\Theta)_{t\rho_{\ebf}^l, \rho_\kbf} $ for $l=1,2,...,L$, $t=1,2,...,n_l-2$ and all $\kbf\in K^{l-1}$. We recover exactly one Fourier coefficient per $G$-bispectrum coefficient. This makes thus a total of $|G|$ bispectral coefficients precisely and proves the following theorem.
\end{proof}

\begin{figure}
\begin{minipage}[c]{0.54\textwidth}
\begin{algorithm}[H]
    \caption{Bispectrum inversion for finite commutative groups ($G=\bigoplus_{l=1}^L \mathbb{Z}/n_l\mathbb{Z}$).}
    \label{alg:commutative_alg}
    \small
    \begin{algorithmic}[1]
        \STATE \textbf{Input}: $|G|$ bispectral coeffs.
        \STATE Compute $|\mathcal{F}(\Theta)_{\rho_\mathbf{0}}|=\left(\beta(\Theta)_{\rho_\mathbf{0},\rho_\mathbf{0}}\right)^{\frac{1}{3}}$ and $\mathrm{arg}(\mathcal{F}(\Theta)_{\rho_\mathbf{0}})=\mathrm{arg}(\beta(\Theta)_{\rho_\mathbf{0},\rho_\mathbf{0}})$.
        \FOR{$l=1,...,L$}
            \STATE Compute $|\mathcal{F}(\Theta)_{\rho_{\ebf^l}}| =\left(\frac{\beta(\Theta)_{\rho_\mathbf{0},\rho_{\ebf^l}}}{\mathcal{F}(\Theta)_{\rho_\mathbf{0}}}\right)^\frac{1}{2}$ and set $\mathrm{arg}\left(\mathcal{F}(\Theta)_{\rho_{\ebf^l}}\right)=0$.
            \FOR{$t=1,...,n_l-1$}
                \IF{$t>1$}
                \STATE  $F(\Theta)_{\rho_{t\ebf^l}}=\left(\frac{\beta(\Theta)_{\rho_{\ebf^l},\rho_{(t-1)\ebf^l}}}{\mathcal{F}(\Theta)_{\rho_{\ebf^l}}\mathcal{F}(\Theta)_{\rho_{(t-1)\ebf^l}}}\right)^\dagger$.
                \ENDIF
                \FOR{$\kbf\in K^{l-1}\setminus\{\mathbf{0}\}$}
                    \STATE Compute $\mathcal{F}(\Theta)_{\rho_{t\ebf^l+\kbf}}=\left(\frac{\beta(\Theta)_{\rho_{t\ebf^l},\rho_{\kbf}}}{\mathcal{F}(\Theta)_{\rho_{t\ebf^l}}\mathcal{F}(\Theta)_{\rho_{\kbf}}}\right)^\dagger$.
                \ENDFOR
            \ENDFOR
        \ENDFOR
        \FOR{$l=1,2,...,L$}
            \STATE Find $\varphi_l \in [0,\frac{2\pi}{n})$ s.t. $\mathcal{F}^{-1}(\mathcal{F}(\widetilde{\Theta})_{\rho_{(1,..,n_l)\ebf^l}})\in\mathbb{R}^{n_l}$ where $\mathcal{F}(\widetilde{\Theta})_{\rho_{k\ebf^l}} = \exp(\phi_l k)\mathcal{F}(\Theta)_{\rho_{k\ebf^l}}$.
        \ENDFOR
        \FOR{$\kbf\in G$}
           \STATE $\mathcal{F}(\widetilde{\Theta})_{\rho_{\kbf}} =\mathcal{F}(\Theta)_{\rho_{\kbf}} \prod_{l=1}^L\exp(\phi_l k_l)$.
        \ENDFOR
        \STATE \textbf{Return} $\mathcal{F}(\widetilde{\Theta})$ (up to group action).
    \end{algorithmic}
\end{algorithm}
\end{minipage}\hfill
\begin{minipage}[c]{0.4\textwidth}
    \centering
    \includegraphics[width = 6cm]{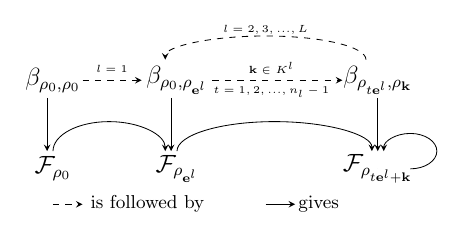}
    \caption{Illustration of Algorithm \ref{alg:commutative_alg}. The Bispectrum coefficients allow to recover the Fourier transform sequentially, up to group action.}
    \label{fig:alg2}
\end{minipage}
\end{figure}

\section{Selective $G$-Bispectrum inversion: dihedral groups}\label{app:dihedral_inversion}

\paragraph{Dihedral group $D_n$}\label{sec:inversion_dihedral}
The dihedral group $D_n$ is the group of all symmetries of the $n$-gon. Mathematically, it is defined as
\begin{equation}\label{def:dihedral}
    D_n:=\langle x,\ a\ |\ a^n=x^2=e,\ xax = a^{-1}\rangle,
\end{equation}
where $a$ is the \emph{rotation} and $x$ is the \emph{reflection}, and they form a generating set of $D_n$. We will only consider the case $n>2$ since the cases $n=1$ and $n=2$ are commutative groups covered by the previous subsection, while $n>2$ gives non-commutative groups. The 2D irreps of $D_n$ are given by
\begin{equation}\label{eq:2d_irreps_dn}
    \rho_k(a^lx^m)=\begin{bmatrix}
        \cos(\omega_l k)&-\sin(\omega_l k)\\
        \sin(\omega_l k)&\cos(\omega_l k)
    \end{bmatrix}
    \begin{bmatrix}
        1 & 0\\
        0 & -1\\
    \end{bmatrix}^m,
\end{equation}
where $\omega_l = \frac{2\pi l}{n}$ for $k=1,2,...,\lfloor \frac{n-1}{2}\rfloor$. There are also $2$ or $4$ 1D irreps if $n$ is odd or even, respectively. We denote these 1D irreps by $\rho_{0},\ \rho_{01},\ \rho_{02}$, and $\rho_{03}$ (see Appendix \ref{app:1dirreps}). The two last ones only exist for $n$ even.
\begin{theorem}\label{thm:dihedral_inversion}
    For the family of dihedral groups $D_n$, we need at most $\left \lfloor{ \frac{n-1}{2}}\right \rfloor + 2$ bispectral matrix coefficients for inversion if $\det(\mathcal{F}(\Theta)_\rho) \neq 0$ for all irreps $\rho$ of $D_n$. This corresponds to $1+4+16\cdot\left\lfloor{ \frac{n-1}{2}}\right \rfloor\approx 4 |D_n|$ scalar values.
\end{theorem}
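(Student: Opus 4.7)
The plan is to mirror the inductive strategy used for commutative groups: recover the Fourier coefficients $\mathcal{F}(\Theta)_\rho$ one irrep at a time, walking through the spectrum via tensor products, starting from the trivial irrep and using $\rho_1$ (the ``lowest'' 2D irrep) as the generator. The key structural fact that makes $\lfloor(n-1)/2\rfloor + 2$ coefficients sufficient is the Clebsch-Gordan decomposition of 2D dihedral irreps,
\[
\rho_1 \otimes \rho_k \;\cong\; \rho_{k-1} \oplus \rho_{k+1},
\]
valid in the interior of the 2D spectrum, with endpoints identified appropriately with 1D irreps. Since the chain $\rho_0,\rho_1,\rho_1\otimes\rho_1,\rho_1\otimes\rho_2,\ldots$ reaches every irrep of $D_n$, Algorithm~\ref{alg:selective_bsp} applies with the selected pairs $\{(\rho_0,\rho_0),(\rho_0,\rho_1)\}\cup\{(\rho_1,\rho_k)\}_{k=1}^{\lfloor(n-1)/2\rfloor}$.

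Concretely, I would first extract $\mathcal{F}(\Theta)_{\rho_0}$ from $\beta(\Theta)_{\rho_0,\rho_0}$ by cube-rooting modulus and argument, exactly as for cyclic groups. Next, $\beta(\Theta)_{\rho_0,\rho_1}$ collapses (one factor is 1D and trivializes the Clebsch-Gordan matrix) to $\mathcal{F}(\Theta)_{\rho_0}\cdot\mathcal{F}(\Theta)_{\rho_1}\mathcal{F}(\Theta)_{\rho_1}^\dagger$; dividing out the known scalar and diagonalizing recovers $\mathcal{F}(\Theta)_{\rho_1}$ up to a unitary right factor, which is exactly the indeterminacy $\rho_1(h)$ inherited from $G$-invariance (see Appendix~\ref{sec:indeterminacy}). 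Then, recursively for $k=1,2,\ldots,\lfloor(n-1)/2\rfloor-1$, I would use $\beta(\Theta)_{\rho_1,\rho_k}$: by Theorem~\ref{def:Bispectrum} and the decomposition above, this coefficient equals a known unitary conjugate of $\bigl[\mathcal{F}(\Theta)_{\rho_1}\otimes\mathcal{F}(\Theta)_{\rho_k}\bigr]\bigl(\mathcal{F}(\Theta)_{\rho_{k-1}}^\dagger \oplus \mathcal{F}(\Theta)_{\rho_{k+1}}^\dagger\bigr)$. Since $\mathcal{F}(\Theta)_{\rho_1},\mathcal{F}(\Theta)_{\rho_k},\mathcal{F}(\Theta)_{\rho_{k-1}}$ are already known and all $2\times 2$ blocks are invertible by hypothesis, solving a small linear system yields $\mathcal{F}(\Theta)_{\rho_{k+1}}$.

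The hardest step will be at the two extremities of the 2D-irrep chain, where the Clebsch-Gordan decomposition degenerates into sums of 1D irreps. For $n$ even, $\rho_1\otimes\rho_{(n-2)/2}$ contains the pair $\rho_{02},\rho_{03}$ in place of a further 2D summand, and $\rho_1\otimes\rho_1$ contains $\rho_0$ together with $\rho_{01}$ (when $n=4$ the product is exactly $\rho_0\oplus\rho_{01}\oplus\rho_{02}\oplus\rho_{03}$). I would need to verify that these boundary coefficients simultaneously pin down the nontrivial 1D Fourier coefficients $\mathcal{F}(\Theta)_{\rho_{01}},\mathcal{F}(\Theta)_{\rho_{02}},\mathcal{F}(\Theta)_{\rho_{03}}$ without exceeding the claimed count, which works because each 1D scalar sits in an isolated block of the Clebsch-Gordan factorization and can be read off directly. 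The case $n$ odd is cleaner, since only $\rho_0$ and $\rho_{01}$ exist as 1D irreps and both already appear inside $\rho_1\otimes\rho_1$.

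Finally, the remaining global phase/group-action indeterminacy is resolved as in the commutative algorithm, by selecting the unique representative whose inverse $G$-Fourier transform is real-valued. Counting matrix coefficients yields two boundary pairs $\beta_{\rho_0,\rho_0},\beta_{\rho_0,\rho_1}$ plus $\lfloor(n-1)/2\rfloor$ interior pairs $\beta_{\rho_1,\rho_k}$, for a total of $\lfloor(n-1)/2\rfloor+2$; translating into scalars using the respective block sizes $1$, $4$, and $16$ reproduces $1+4+16\lfloor(n-1)/2\rfloor \approx 4|D_n|$ scalar values, as announced.
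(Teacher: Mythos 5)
Your proposal is correct and follows essentially the same route as the paper's proof: recover $\mathcal{F}(\Theta)_{\rho_0}$ from $\beta_{\rho_0,\rho_0}$, obtain $\mathcal{F}(\Theta)_{\rho_1}$ up to a unitary (group-action) factor by diagonalizing $\beta_{\rho_0,\rho_1}/\mathcal{F}(\Theta)_{\rho_0}$, then walk up the chain of 2D irreps via $\beta_{\rho_1,\rho_k}$ using the Clebsch--Gordan decomposition $\rho_1\otimes\rho_k \cong \rho_{k-1}\oplus\rho_{k+1}$ (established in the paper by a character computation), collecting the nontrivial 1D irreps at the endpoints $\rho_1\otimes\rho_1$ and, for $n$ even, $\rho_1\otimes\rho_{\frac{n}{2}-1}$, with the same coefficient count. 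The only cosmetic difference is your indexing of the recursion; the substance matches Algorithm~\ref{alg:dihedral_alg} and Appendix~\ref{app:generating_dihedral}.
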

\begin{proof}
In view of section~\ref{sec:commutative_alg}, we wish to show that there is an irrep $\rho_1$ that generates all the irreps of $D_n$. As in the cyclic and commutative cases, we can first deduce $\mathcal{F}(\Theta)_{\rho_0}$ from $\beta(\Theta)_{\rho_0,\rho_0}$. Now, we have 
\begin{equation}\label{eq:first_coeff_dihedral}
    \mathcal{F}(\Theta)_{\rho_1}\mathcal{F}(\Theta)_{\rho_1}^\dagger=\frac{\beta(\Theta)_{\rho_0,\rho_1}}{\mathcal{F}(\Theta)_{\rho_0}}.
\end{equation}
The novelty for this non-commutative group is that $\mathcal{F}(\Theta)_{\rho_1}$ is a $2\times 2$ matrix. After computing the eigenvalue decomposition $\frac{\beta(\Theta)_{\rho_0,\rho_1}}{\mathcal{F}(\Theta)_{\rho_0}} = V\Lambda V^\dagger$, we can choose 
\begin{equation}
    \mathcal{F}(\Theta)_{\rho_1} = V\Lambda^{\frac{1}{2}}V^\dagger U.
\end{equation}
for all unitary $U$ (i.e., $UU^\dagger=U^\dagger U=I$) to solve~\eqref{eq:first_coeff_dihedral}. In the case $\mathbb{Z}/n\mathbb{Z}$ the indeterminacy belonged to $\mathrm{SO}(2)$, the continuous set of 2d rotations. For $D_n$, it belongs to $\mathrm{O}(2)$, the continuous set of 2d rotations and reflections. For finite groups, \citet{kakarala_thesis} ensures that the only choices for $U$ such that $\mathcal{F}(\Theta)$ is a Fourier transform on $G(=D_n)$ are such that $\Theta$ is identical to the original signal up to some group action $g\in D_n$. 
For $k=2,...,\lfloor \frac{n-1}{2}\rfloor$, we can then obtain
\begin{align}
    \label{eq:new_coeff_dihedral}
    &\bigoplus_{\rho\in\rho_1\otimes\rho_{k-1}}\mathcal{F}(\Theta)_\rho =\left(C_{\rho_1,\rho_{k-1}}^{\dagger}\left[\mathcal{F}(\Theta)_{\rho_1}\otimes\mathcal{F}(\Theta)_{\rho_{k-1}}\right]^{-1}\beta(\Theta)_{\rho_1,\rho_2}C_{\rho_1,\rho_{k-1}}\right)^\dagger.
\end{align}
It is shown in Appendix~\ref{app:generating_dihedral} that $\rho_k$ appears in the tensor decomposition \eqref{eqn:non-commutative-bs} of $\rho_1\otimes\rho_{k-1}$ so that the for-loop can be applied. Moreover, we know from Appendix~\ref{app:generating_dihedral} that $\rho_{01}$ appears in the decomposition of $\rho_1\otimes\rho_{1}$ and, for $n$ even, $\rho_{02}, \rho_{03}$ in $ \rho_1\otimes\rho_{\frac{n}{2}-1}$. Thus the iteration recovers the complete DFT $\mathcal{F}(\Theta)$. $\beta(\Theta)_{\rho_0,\rho_0}$ is a scalar, $\beta(\Theta)_{\rho_0,\rho_1}$ is a $2\times 2$ matrix and $ \beta(\Theta)_{\rho_1,\rho_k}$, $k\neq 0$, is a $4\times 4$ matrix. Hence the total number of scalars that is required is $1+4+16\lfloor\frac{n-1}{2}\rfloor$.
\end{proof}
The procedure is summarized in Algorithm \ref{alg:dihedral_alg}.
\begin{algorithm}[tb]
    \caption{Bispectrum inversion on the dihedral group $D_n$.}\label{alg:dihedral_alg}
    \small
    \begin{algorithmic}[1]
        \STATE \textbf{Input}: $\beta(\Theta)_{\rho_0,\rho_0}, \beta(\Theta)_{\rho_0,\rho_1}$ and $ \beta(\Theta)_{\rho_1,\rho_k}$ for $k=1,2,...,\lfloor\frac{n-1}{2}\rfloor$.
        \STATE Compute $|\mathcal{F}(\Theta)_{\rho_0}|=\left(\beta(\Theta)_{\rho_0,\rho_0}\right)^{\frac{1}{3}}$ and $\mathrm{arg}(\mathcal{F}(\Theta)_{\rho_0})=\mathrm{arg}(\beta(\Theta)_{\rho_0,\rho_0})$.
        \STATE Compute $V\Lambda V^\dagger = \frac{\beta(\Theta)_{\rho_0,\rho_1}} {\mathcal{F}(\Theta)_{\rho_0}}$.\\
        \STATE Set $\mathcal{F}(\Theta)_{\rho_1} = V\Lambda^\frac{1}{2} V^\dagger U$ with valid $U$ (existence of $U$ ensured but not computed easily).\\
        \FOR{$k=2,...,\lfloor\frac{n-1}{2}\rfloor$}
            \STATE Compute \begin{align*}
                &\bigoplus_{\rho\in\rho_1\otimes\rho_{k-1}}\mathcal{F}(\Theta)_\rho =\left(C_{\rho_1,\rho_{k-1}}^{\dagger}\left[\mathcal{F}_{\rho_1}\otimes\mathcal{F}_{\rho_{k-1}}\right]^{-1}\beta_{\rho_1,\rho_2}C_{\rho_1,\rho_{k-1}}\right)^\dagger.
            \end{align*}
        \ENDFOR
        \STATE \textbf{Return} $\mathcal{F}(\Theta)$ (up to group action).
    \end{algorithmic}
\end{algorithm}
\subsection{The 1D irreps of $D_n$}\label{app:1dirreps}
We recall the definition of the dihedral group $D_n$ given in~\eqref{def:dihedral}. The 1D irreps of $D_n$ can be found, e.g., in \citep{steinberg2011representation}. They are given by:
\begin{itemize}
    \item $\rho_{0}(g)=1$ for all $g\in D_n$.
    \item $\rho_{01}(g)=\begin{cases}
        1 \text{ if } g\in \langle a\rangle,\\
        -1 \text{ otherwise.}
    \end{cases}$
    \item If $n$ even, $\rho_{02}(g)=\begin{cases}
        1 \text{ if } g\in \langle a^2,\ x\rangle,\\
        -1 \text{ otherwise.}
    \end{cases}$
    \item If $n$ even, $\rho_{03}(g)=\begin{cases}
        1 \text{ if } g\in \langle a^2,\ ax\rangle,\\
        -1 \text{ otherwise.}
    \end{cases}$
\end{itemize}
To exemplify the 1D irreps, we give their values for $D_4$ in Table~\ref{tab:1dirreps}.
\begin{table}[H]
    \centering
    \begin{tabular}{|c||c|c|c|c|c|c|c|c|}
        \hline
         $g$&$e$&$a$&$a^2$&$a^3$&$x$&$ax$&$a^2x$&$a^3x$ \\
         \hline
         \hline
         $\rho_{0}(g)$&1&1&1&1&1&1&1&1 \\
         \hline
         $\rho_{01}(g)$&1&1&1&1&-1&-1&-1&-1 \\
         \hline
         $\rho_{02}(g)$&1&-1&1&-1&1&-1&1&-1 \\
         \hline
         $\rho_{03}(g)$&1&-1&1&-1&-1&1&-1&1 \\
         \hline
    \end{tabular}
    \caption{1D irreps of $D_4$.}
    \label{tab:1dirreps}
\end{table}
\subsubsection{Generation of the coefficients of $D_n$}\label{app:generating_dihedral}
Theorem~\ref{thm:dihedral_inversion} makes two assertions that we verify explicitly in this appendix. First, it is said that $\rho_{k}$ is in the tensor decomposition \eqref{eqn:non-commutative-bs} (TD) of $\rho_1\otimes\rho_{k-1}$ for $k = 2,...,\lfloor\frac{n-1}{2}\rfloor$ so that the iteration of Algorithm~\ref{alg:dihedral_alg} recovers all the Fourier coefficients associated to the 2D irreps. The second assertion to verify is that $\rho_{01}$ is in the TD of $ \rho_1\otimes\rho_1$ and, for $n$ even,  $\rho_{02},\rho_{03}$ in the TD of $\rho_1\otimes\rho_{\frac{n}{2}-1}$ so that the Fourier coefficients associated to the 1D irreps are also recovered, asserting the validity of the inversion procedure. We provide an analytical proof of these two assertions. The proof is based on the theory of \emph{character functions}.

\begin{definition}{\citep{steinberg2011representation}}
    Given a group $G$ and a representation $\rho$, the \emph{character} of $\rho$ is the function $\chi_\rho:G\rightarrow\mathbb{R}:g\mapsto\mathrm{Tr}(\rho(g))$. $\chi_\rho$ is said to be an \emph{irreducible} character if $\rho$ is an irreducible representation.
\end{definition}
The character function $\chi_\rho$ is a \emph{class function} on $G$, i.e., $\chi_\rho$ is constant on a conjugacy class of $G$. The space of class functions on a finite group $G$, written $\mathcal{S}_G$, can be equipped with an inner product $\langle\cdot,\cdot\rangle_G:\mathcal{S}_G\times \mathcal{S}_G\mapsto \mathbb{C}$ such that for $u,v\in \mathcal{S}_G$, we have
\begin{equation}
    \langle u, v\rangle_G:=\frac{1}{|G|}\sum_{g\in|G|}u(g)\overline{v(g)}.
\end{equation} 
The irreducible characters form an orthonormal basis w.r.t  $\langle\cdot,\cdot\rangle_G$ for $\mathcal{S}_G$ \citep{steinberg2011representation}, i.e.,
\begin{equation}
    \langle \chi_{\rho_a},\chi_{\rho_b}\rangle_G=\begin{cases}
        1\text{ if } \rho_a\sim \rho_b,\\
        0 \text{ otherwise.}
    \end{cases}
\end{equation}
Therefore, for $\rho_a, \rho_b$ two irreps of $G$, $\rho_c$ is in the TD of $\rho_a\otimes\rho_b$ if and only if $ \langle \chi_{\rho_a\otimes \rho_b},\chi_{\rho_c}\rangle_G\neq 0$. Let us apply this to the 2D irreps of $D_n$ ($n>2$). Let $\rho_i,\rho_j,\rho_k$ be three irreps defined as in~\eqref{eq:2d_irreps_dn}.
Notice that we have $\mathcal{X}_{\rho}(a^lx)=0$. This yields
\begin{align*}
    \langle \mathcal{X}_{\rho_i\otimes\rho_j},\mathcal{X}_{\rho_k}\rangle &=\frac{1}{2n}\sum_{l=1}^n \mathcal{X}_{\rho_i\otimes\rho_j}(a^l)\mathcal{X}_{\rho_k}(a^l)\\
    &=\frac{1}{2n}\sum_{l=1}^n \mathcal{X}_{\rho_i}(a^l)\mathcal{X}_{\rho_j}(a^l)\mathcal{X}_{\rho_k}(a^l)\\
    &=\frac{1}{2n}\sum_{l=1}^n 8\cos(\omega_l i)\cos(\omega_l j)\cos(\omega_l k)\\
    &=\frac{1}{n}\sum_{l=1}^n \cos(\omega_l (i+j+k))+\cos(\omega_l (i+j-k))\\ &\ +\cos(\omega_l (j-i+k))+\cos(\omega_l (j-i-k)).
\end{align*}
Recall that $\sum_{l=1}^n\cos(\omega_l m)=\begin{cases}
        0 \text{ if } m\neq 0\\
        n \text{ if } m = 0
    \end{cases}$. Therefore, if we assume $0\leq i\leq j$ without loss of generality, $ \langle \mathcal{X}_{\rho_i\otimes\rho_j},\mathcal{X}_{\rho_k}\rangle\neq 0$ if and only if 
\begin{equation*}
    \begin{cases}
        k = i+j \text{ or}\\
        k = j-i.
    \end{cases}.
\end{equation*}
Therefore, based on Definition~\ref{def:Bispectrum}, by utilizing $\beta_{\rho_i,\rho_j}$, $\mathcal{F}(\Theta)_{\rho_i}$ and $\mathcal{F}(\Theta)_{\rho_j}$, we can compute  $\mathcal{F}(\Theta)_{\rho_k}$ for $k\in\{i+j,\ j-i\}$. By iterating, if $\mathcal{F}(\Theta)_{\rho_1}$ is known,  $\beta_{\rho_1,\rho_1}$ gives $\mathcal{F}(\Theta)_{\rho_2}$. Then, $\beta_{\rho_1,\rho_2}$ can be leveraged to obtain  $\mathcal{F}(\Theta)_{\rho_3}$. Continuing the procedure provides $\mathcal{F}(\Theta)_{\rho_k}$ for $k=2,...,\lfloor \frac{n-1}{2}\rfloor$ by using $\beta_{\rho_1,\rho_{k-1}}$. We have thus obtained the Fourier coefficients associated to all the 2D irreps.

It remains to show that the iteration also recovered the Fourier coefficients associated to the 1D irreps. This is because $\rho_{01}$ is in the TD of $\rho_1\otimes\rho_1$ and, for $n$ even, $\rho_{02}, \rho_{03}$ are in the TD of $\rho_1\otimes\rho_{\frac{n}{2}-1}$. Indeed,
\begin{align*}
    \langle \mathcal{X}_{\rho_1\otimes\rho_1},\mathcal{X}_{\rho_{01}}\rangle &=\frac{1}{2n}\sum_{l=1}^n \mathcal{X}_{\rho_1\otimes\rho_1}(a^l)\mathcal{X}_{\rho_{01}}(a^l)\\
    &=\frac{1}{2n}\sum_{l=1}^n 4\cos(\omega_l)^2\\
    &=\frac{1}{n}\sum_{l=1}^n 1+\cos(2\omega_l)=1.
\end{align*}
Moreover, for $n$ even and $\rho\in\{\rho_{02}, \rho_{03}\}$, we have
\begin{align*}
    \langle \mathcal{X}_{\rho_1\otimes\rho_{\frac{n}{2}-1}},\mathcal{X}_\rho\rangle &=\frac{1}{2n}\sum_{l=1}^n \mathcal{X}_{\rho_1\otimes\rho_{\frac{n}{2}-1}}(a^l)\mathcal{X}_\rho(a^l)\\
    &=\frac{1}{n}\sum_{l=1}^n 1+\cos\left(\omega_l\left(\frac{n}{2}-2\right)\right)(-1)^l\\
    &=1.
\end{align*}
In conclusion, the procedure of Algorithm \ref{alg:dihedral_alg} recovers all the Fourier coefficients and the selective $G$-Bispectrum.

\subsection{The Clebsch-Gordan matrices on $D_n$}\label{app:clebsch_gordan}
The matrix algebra properties that we use in this subsection can be found, e.g., in \citep{bhatia97}. Recall from Theorem~\ref{thm:commutative_Bispectrum} the (implicit) definition of the Clebsch-Gordan matrices:
\begin{equation}\label{eq:cleb_gordan}
    (\rho_1\otimes\rho_2)(g) = C_{\rho_1,\rho_2}\left[ \bigoplus_{\rho\in\mathcal{R}} \rho(g) \right]C_{\rho_1,\rho_2}^\dagger,
\end{equation}
where $C_{\rho_1,\rho_2}^\dagger C_{\rho_1,\rho_2}=I$.
We only consider the case of $\rho_1,\rho_2$ both 2d irreps of $D_n$ since otherwise, the Clebsch-Gordan matrix is the scalar $1$. 
First notice that $(\rho_1\otimes\rho_2)(g)$, is an orthonormal matrix. Indeed, using the properties of the Kronecker product, we obtain (``$(g)$'' omitted for clarity):
\begin{align*}
    (\rho_1\otimes\rho_2)(\rho_1\otimes\rho_2)^\dagger &= (\rho_1\otimes\rho_2)(\rho_1^\dagger\otimes\rho_2^\dagger)\\
    &=(\rho_1\rho_1^\dagger)\otimes (\rho_2\rho_2^\dagger)\\
    &=I \otimes I = I
\end{align*}
For $Q$ a real orthogonal matrix ($Q^T Q = I$), and $VSV^T$ with $V^T V = I$, a real Schur decomposition of $Q$, it is known that $S$ is block diagonal with blocks of size $1\times 1$ or $2\times 2$. These blocks are themselves orthogonal matrices. Therefore, the real Schur decomposition is the decomposition in~\eqref{eq:cleb_gordan} up to permutations. In order for $S$ to represent exactly the irreps from~\eqref{eq:2d_irreps_dn}, the non-zero sub-diagonal elements should all be positive. If not, the symmetric element is positive and a permutation $P$ must be added to exchange their positions: $Q = (VP)(P^TSP)(VP)^T$. $P$ permutes the two columns of $V$ associated with the permuted $2\times 2$ block of $S$.
\begin{algorithm}
    \caption{Compute Clebsch-Gordan matrices on $D_n$}
    \begin{algorithmic}[1]
        \STATE \textbf{Input:} $\rho_1, \rho_2$, two 2d irreps of $D_n$.
        \STATE Pick any $g\in D_n$, e.g., $g = a$.
        \STATE Compute  $(\rho_1\otimes\rho_2)(g) = (VP)(P^TSP)(VP)^T$, a valid real Schur form.
        \STATE Set $C_{\rho_1,\rho_2} = VP$.
        \STATE Set $\bigoplus_{\rho\in\mathcal{R}} \rho(g) = P^TSP$
        \STATE \textbf{Return:} $C_{\rho_1,\rho_2}, \bigoplus_{\rho\in\mathcal{R}}\rho(g)$.
    \end{algorithmic}
\end{algorithm}

\section{Bispectrum inversion for octahedral and full octahedral groups}\label{app:octahedral}
We provide a sketch of the procedure to retrieve $\mathcal{F}(\Theta)$ given $\beta(\Theta)$ for the \href{https://en.wikipedia.org/wiki/Octahedral_symmetry}{octahedral group} and the \href{https://en.wikiversity.org/wiki/Full_octahedral_group}{full octahedral group}. These two groups are available in the \href{https://quva-lab.github.io/escnn/api/escnn.group.html}{\texttt{escnn} library}. These groups are easier to deal with than the cyclic and dihedral groups presented in the paper, given that they do not come from a \textit{family} of groups. Indeed, our proofs for the cyclic (resp. dihedral) groups needed to work for \textit{all} cyclic groups $C_N$, and for all dihedral groups $D_N$, for all $N$. The octahedral and full octahedral groups are only two groups.

\subsection{Octahedral group} 
\begin{table}
        \centering
        \begin{tabular}{|c||c|c|c|c|c|}
            \hline
            $\otimes$ &$\rho_0$&$\rho_1$&$\rho_2$&$\rho_3$&$\rho_4$\\
            \hline
            \hline
            $\rho_0$&10000 & 01000 & 00100 & 00010 & 00001 \\
            \hline
            $\rho_1$&01000 & 11110 & 01111 & 01100 & 00100 \\
            \hline
            $\rho_2$&00100 & 01111 & 11110 & 01100 & 01000 \\
            \hline
            $\rho_3$&00010 & 01100 & 01100 & 10011 & 00010 \\
            \hline
            $\rho_4$ &00001 & 00100 & 01000 & 00010 & 10000 \\
            \hline
        \end{tabular}
        \caption{Kronecker table of the octahedral group using \texttt{escnn}. For the binary word at position $i,j$ in the table, the $k$th `letter' is $1$ if $\rho_k\in\rho_i\otimes \rho_j$, $0$ otherwise.}
        \label{tab:octa}
    \end{table}
    
The octahedral group has 24 elements and 5 irreps. We can compute its Kronecker table, either manually using characters $\chi$ or using a Python package such as \texttt{escnn}. We give its Kronecker table below (Table~\ref{tab:octa}), where each column/row represents one irrep, labelled $\rho_0, \rho_1, ..., \rho_4$. 
      
We apply the procedure from Algorithms~\ref{alg:cyclic_alg}, \ref{alg:commutative_alg} and \ref{alg:dihedral_alg} to Table~\ref{tab:octa}. This procedure relies on the use of Theorem~\ref{def:Bispectrum}. We first select the bispectral coefficient $\beta_{\rho_0, \rho_0}$ (we omit ``$(\Theta)$'' for clarity) to get the component $\mathcal{F}_{\rho_0 }$ where $\rho_0$ is the trivial representation. Next, we choose $\beta_{\rho_0,\rho_1}$ and use $\mathcal{F}_{\rho_0}$  to obtain $\mathcal{F}_{\rho_1}$ (we know from Table~\ref{tab:octa} that $\rho_1\in\rho_0\otimes\rho_1$) up to an indeterminacy which is a transformation in $\mathrm{O}(3)$ and corresponds to the indeterminacy factor from Appendix~\ref{sec:indeterminacy}. Then, we select  $\beta_{\rho_1, \rho_1}$ to get the Fourier components $\mathcal{F}_{\rho_2}, \mathcal{F}_{\rho_3}$. Lastly, we select $\beta_{\rho_1, \rho_2}$ to get the missing Fourier component $\mathcal{F}_{\rho_4}$.

In summary, we only need $4$ bispectral coefficients ($\beta_{\rho_0, \rho_0}, \beta_{\rho_1, \rho_0}, \beta_{\rho_1, \rho_1}, \beta_{\rho_1, \rho_2}$) instead of $5^2 = 25$ in order to get the five Fourier components, i.e., the full Fourier transform of the signal. In total, this involves $1+9+81+81=172$ scalar coefficients.
\subsection{Full octahedral group}
\begin{table*}[h]
    \centering
    \minuscule
    \begin{tabular}{|c||c|c|c|c|c|c|c|c|c|c|c|}
    \hline
            $\otimes$ &$\rho_0$&$\rho_1$&$\rho_2$&$\rho_3$&$\rho_4$&$\rho_5$&$\rho_6$&$\rho_7$&$\rho_8$&$\rho_9$\\
    \hline
    $\rho_0$&1000000000 & 0100000000 & 0010000000 & 0001000000 & 0000100000 & 0000010000 & 0000001000 & 0000000100 & 0000000010 & 0000000001 \\
    \hline
    $\rho_1$&0100000000 & 1111000000 & 0111100000 & 0110000000 & 0010000000 & 0000001000 & 0000011110 & 0000001111 & 0000001100 & 0000000100 \\
    \hline
    $\rho_2$&0010000000 & 0111100000 & 1111000000 & 0110000000 & 0100000000 & 0000000100 & 0000001111 & 0000011110 & 0000001100 & 0000001000 \\
    \hline
    $\rho_3$&0001000000 & 0110000000 & 0110000000 & 1001100000 & 0001000000 & 0000000010 & 0000001100 & 0000001100 & 0000010011 & 0000000010 \\
    \hline
    $\rho_4$&0000100000 & 0010000000 & 0100000000 & 0001000000 & 1000000000 & 0000000001 & 0000000100 & 0000001000 & 0000000010 & 0000010000 \\
    \hline
    $\rho_5$&0000010000 & 0000001000 & 0000000100 & 0000000010 & 0000000001 & 1000000000 & 0100000000 & 0010000000 & 0001000000 & 0000100000 \\
    \hline
    $\rho_6$&0000001000 & 0000011110 & 0000001111 & 0000001100 & 0000000100 & 0100000000 & 1111000000 & 0111100000 & 0110000000 & 0010000000 \\
    \hline
    $\rho_7$&0000000100 & 0000001111 & 0000011110 & 0000001100 & 0000001000 & 0010000000 & 0111100000 & 1111000000 & 0110000000 & 0100000000 \\
    \hline
    $\rho_8$&0000000010 & 0000001100 & 0000001100 & 0000010011 & 0000000010 & 0001000000 & 0110000000 & 0110000000 & 1001100000 & 0001000000 \\
    \hline
    $\rho_9$&0000000001 & 0000000100 & 0000001000 & 0000000010 & 0000010000 & 0000100000 & 0010000000 & 0100000000 & 0001000000 & 1000000000\\
    \hline
    \end{tabular}
    \caption{Kronecker table of  full octahedral group using \texttt{escnn}. For the binary word at position $i,j$ in the table, the $k$th `letter' is $1$ if $\rho_k\in\rho_i\otimes \rho_j$, $0$ otherwise.}
    \label{tab:full_octa}
\end{table*}
The full octahedral group has $48$ elements and $10$ irreps. Again, we can compute its Kronecker table using a Python package such as \texttt{escnn}. We give its Kronecker table below (Table~\ref{tab:full_octa}), where each column/row represents one irrep, labelled $\rho_0, \rho_1, ..., \rho_{9}$. 

We apply the procedure from Algorithms~\ref{alg:cyclic_alg}, \ref{alg:commutative_alg} and \ref{alg:dihedral_alg} to Table~\ref{tab:full_octa}. Again, this procedure relies on the use of Theorem~\ref{def:Bispectrum}. $\beta_{\rho_0,\rho_0}$ (we omit ``$(\Theta)$'' for clarity) allows to compute $\mathcal{F}_{\rho_0}$ directly, such as in Algorithm~\ref{alg:dihedral_alg}. Then, from $\beta_{\rho_0,\rho_6}$, we obtain $\mathcal{F}_{\rho_6}$ up to an unknown group action in $\mathrm{O}(3)$. Then, using $\beta_{\rho_6,\rho_6}$ and $\mathcal{F}_{\rho_6}$, we obtain $\mathcal{F}_{\rho_1}$,$\mathcal{F}_{\rho_2}$ and $\mathcal{F}_{\rho_ 3}$. Next, leveraging $\beta_{\rho_1,\rho_2}$, $\mathcal{F}_{\rho_1}$ and $\mathcal{F}_{\rho_2}$, we obtain  $\mathcal{F}_{\rho_4}$. Using $\beta_{\rho_1,\rho_6}$, $\mathcal{F}_{\rho_1}$ and $\mathcal{F}_{\rho_6}$, we obtain  $\mathcal{F}_{\rho_5}$, $\mathcal{F}_{\rho_7}$, $\mathcal{F}_{\rho_8}$. Finally, with $\beta_{\rho_1,\rho_7}$, $\mathcal{F}_{\rho_1}$ and $\mathcal{F}_{\rho_7}$, we obtain the last coefficient $\mathcal{F}_{\rho_9}$. Hence we have recovered all the Fourier coefficients using only $\beta_{\rho_0,\rho_0}$, $\beta_{\rho_0,\rho_6}$, $\beta_{\rho_6,\rho_6}$, $\beta_{\rho_1,\rho_2}$, $\beta_{\rho_1,\rho_6}$, $\beta_{\rho_1,\rho_7}$, thus a total of $6$ bispectral coefficients instead of $100$. In terms of scalar values, this involves $1+9+81+81+81+81=334$ coefficients.

\section{Training of the $G$-CNN architecture}\label{app:training}
The $\mathrm{SO}(2)$-MNIST/EMNIST datasets are obtained after applying random planar rotations on each image of the datasets MNIST \citep{lecun2010mnist}, EMNIST \citep{cohen2017emnist} respectively. In the case of $\mathrm{O}(2)$-MNIST/EMNIST, in addition to a planar rotation, a reflection is applied with probability $\frac{1}{2}$.  The original size of each image is conserved. The size of the training sets are $60\ 000$ and $88\ 800$ for MNIST and EMNIST, respectively. 

We conserve the architecture of \citep{sanborn2023general}. For all invariant layers, being the $G$-TC, the selective $G$-Bispectrum and the Avg/Max $G$-pooling, the architecture is composed of a $C_8/D_8$-convolutional block with $K$ filters (see Table~\ref{tab:classification}). Then, the invariant layer is applied before feeding the output to a MLP. The MLP is composed of $3$ fully-connected layers with ReLU non-linearity. A final fully-connected linear layer is applied for classification. The vector of the output sizes of these layers is given by $[o1,o2,o3,ol]$ respectively. $o2=o3 = 64$ and $ol$ is equal to the number of classes of the dataset. $o1$ is tuned to reach the parameter count from Table~\ref{tab:classification}.

\end{document}